 \newtheorem{theorem}{Theorem}[section]
\newtheorem{lemma}{Lemma}[section]
\newtheorem{remark}{Remark}[section]
\newtheorem{assumption}{Assumption}[section]
\newcommand{\footremember}[2]{%
    \footnote{#2}
    \newcounter{#1}
    \setcounter{#1}{\value{footnote}}%
}
\def\B{{\mathcal B}}
\def\N{{\mathbb N}}
\def\D{{\mathcal D}}
\def\L{{\mathcal L}}
\def\E{{\mathbb E}}
\def\R{{\mathbb R}}
\def\I{{\bf  I}}
\def\W{{\bf  W}}
\def\Q{{\bf  Q}}
\def\M{{\bf  M}}
\def\bu{{\bf u}}
\def\bv{{\bf v}} 
\def\bw{{\bf w}}
\def\bz{{\bf z}}
\def\bx{{\bf x}}
\def\bm{{\bf m}}
\def\bg{{\bf g}}
\def\bpi{{\boldsymbol  \pi}} 
\def\P{{\boldsymbol  \Pi}}
\def\bsi{{\boldsymbol  \sigma}}
\def\eqspace{\arraycolsep=1.5pt\def\arraystretch}
\def\dsum{\displaystyle \sum_{i=1}^m}
\author{
  Shenglong Zhou\footremember{bjtu1}{School of Mathematics and Statistics, Beijing Jiaotong University, China (shlzhou@bjtu.edu.cn).}~
  Ouya Wang\footremember{SD}{ITP Lab, Department of EEE, Imperial College London, UK (ouya.wang20@imperial.ac.uk).}~ 
  Ziyan Luo\footremember{bjtu2}{School of Mathematics and Statistics, Beijing Jiaotong University, China (luozy@bjtu.edu.cn).}~
  Yongxu Zhu\footremember{seu}{National Mobile Communications Research Laboratory, Southeast University, China (yongxu.zhu@seu.edu.cn).}~
  Geoffrey Ye Li\footremember{icl1}{ITP Lab, Department of EEE, Imperial College London, UK (geoffrey.li@imperial.ac.uk).}
}
\title{\vspace{-1.25cm}
Preconditioned Inexact Stochastic ADMM for Deep Models\thanks{This work was supported by the National Key R\&D Program of China (2023YFA1011100), the Fundamental Research Funds for the Central Universities, the National Natural Science Foundation of China (No.12271022), and the Talent Fund of Beijing Jiaotong University (\textit{Corresponding authors: Ouya Wang and Shenglong Zhou}).}
\vspace{-0.25cm}}
\date{}
\begin{document}
\flushbottom
 
\maketitle
 
\vspace{-1.0cm}

\begin{abstract}  
\noindent \textbf{Abstract:} Deep learning models are usually trained with stochastic gradient descent-based algorithms,
but these optimizers face inherent limitations, such as slow convergence and stringent
assumptions for convergence.  In particular, data heterogeneity arising from distributed settings poses significant challenges to their theoretical and numerical performance. This paper develops an algorithm, PISA (\textbf{P}reconditioned \textbf{I}nexact \textbf{S}tochastic \textbf{A}lternating Direction Method of Multipliers). Grounded in rigorous theoretical guarantees, the algorithm converges under the sole assumption of Lipschitz continuity of the gradient on a bounded region, thereby removing the need for other conditions commonly imposed by stochastic methods. This capability enables the proposed algorithm to tackle the challenge of data heterogeneity effectively. Moreover, the algorithmic architecture enables scalable parallel computing and supports various preconditions, such as second-order information, second moment, and orthogonalized momentum by Newton-Schulz iterations.  Incorporating the latter two preconditions in PISA yields two computationally efficient variants: SISA and NSISA.
 Comprehensive experimental evaluations for training or fine-tuning diverse deep models, including vision models, large language models, reinforcement learning models, generative adversarial networks, and recurrent neural networks, demonstrate  superior numerical performance of SISA and NSISA compared to various state-of-the-art optimizers. 

\vspace{0.3cm} 
 
\noindent{\textbf{Keywords}:} Deep models, SGD, preconditioned inexact stochastic ADMM, global convergence, data heterogeneity, high numerical performance
\end{abstract}

\numberwithin{equation}{section}


\section{Introduction}

Deep models (DMs) have led to extensive applications across a variety of industries. For instance, fine-tuning large language models (LLMs), such as Gemma \cite{touvron2023llama2openfoundation}, GPT \cite{openai2024gpt4technicalreport}, and Llama \cite{gemmateam2024gemmaopenmodelsbased}, with user-specific data can significantly enhance the model performance. Diffusion models (DMs) are increasingly used to generate personalized content, including images, audio, and video, for entertainment. Vision models (VMs), like ResNet \cite{he2015deepresiduallearningimage} and vision transformers (ViT) \cite{dosovitskiy2020image}, are in high demand for tasks, such as image classification, segmentation, and object detection on large-scale image datasets. However, the growing complexity and scale of these tasks present considerable challenges for widely used stochastic gradient descent (SGD)-based optimizers due to their inherent limitations. In this section, we review a subset of these methods, sufficient to motivate the development of our proposed approach.

\subsection{SGD-based learning algorithms}

It is known that the plain SGD-based algorithms, such as the original stochastic approximation approaches \cite{robbins1951stochastic,chung1954stochastic}, the parallelized SGD \cite{zinkevich2010parallelized} for machine learning, and the newly developed federated averaging (FedAvg \cite{mcmahan2017communication, Li2020On}) and local SGD (LocalSGD \cite{stich2018local}) algorithms for federated learning, are frequently prone to high sensitivity to poor conditioning \cite{novak2018sensitivity} and slow convergence in high-dimensional and non-convex landscapes \cite{chen2023symbolic}. To overcome these drawbacks, SGD with momentum and adaptive learning rates has been proposed to enhance robustness and accelerate convergence.  The former introduced first-order momentum to suppress the oscillation of SGD \cite{qian1999momentum}, and the latter updated the learning rate iteratively based on historical information. 
	
	For instance,  the adaptive gradient (AdaGrad  \cite{duchi2011adaptive}) interpolated the accumulated second-order moment of gradients to achieve the adaptive learning rate. 
	The root mean-squared propagation (RMSProp \cite{tieleman2012lecture}) exploited an exponential weighting technique to balance the distant historical information and the knowledge of the current second-order gradients, avoiding premature termination encountered by AdaGrad. 
	The adaptive moment (Adam  \cite{kingma2014adam}) combined the first-order moment and adaptive learning rates, thereby exhibiting robustness to hyperparameters. 
Adaptive method setup based gradient (AMSGrad \cite{jR2018on}) took smaller learning rates than those of Adam by using a maximum value for normalizing the running average of the gradients, thereby fixing the convergence of Adam.
	Partially Adam (Padam \cite{chen2018closing}) employed a similar strategy of AMSGrad with the difference of the rate power.
 Adam with dynamic bounds on learning rates (AdaBound  \cite{luo2019adaptive}) designed a clipping mechanism on Adam-type learning rates by clipping the gradients larger than
a threshold to avoid gradient explosion. 
Parallel restarted  SGD (PRSGD \cite{yu2019parallel}) simply benefited from a decayed power learning rate. 
A layerwise adaptive large batch optimization technique (Lamb \cite{You2020Large}) performed per dimension normalization with respect to the square root of the second moment used in Adam and set large batch sizes suggested by the layerwise adaptive rate scaling method (Lars \cite{you2017scaling}). 

For some other SGD-based algorithms, one can refer to a per-dimension learning rate method based gradient descent (AdaDelta \cite{Zeiler2012ADADELTAAA}), a variant of Adam based on the infinity norm (Adamax  \cite{kingma2014adam}),  Adam with decoupled weight decay (AdamW \cite{loshchilov2017decoupled}), a structure-aware preconditioning algorithm (Shampoo \cite{gupta2018shampoo}), momentum orthogonalized by Newton-Schulz iterations (Muon \cite{jordan2024muon}), Shampoo with Adam in the preconditioner’s eigenbasis (SOAP \cite{vyas2025soap}), Adam with fewer learning rates (Adam-mini \cite{zhang2025adammini}),
  and those reviewed in nice surveys \cite{bottou2010large, ruder2016overview, bottou2018optimization}.

The aforementioned algorithms primarily leveraged the heavy ball acceleration technique to estimate the first- and second-order moments of the gradient. An alternative approach for accelerating convergence is Nesterov acceleration, which has been theoretically proven to converge faster \cite{nesterov1983method,nesterov1988approach}. This has led to the development of several Nesterov-accelerated SGD algorithms. For example, Nesterov-accelerated Adam (NAdam \cite{dozat2016incorporating}) integrated this technique into Adam to enhance convergence speed. More recently,  an adaptive Nesterov momentum algorithm (Adan \cite{Adan24}) adopted Nesterov momentum estimation to refine the estimation of the first- and second-order moments of the gradient, thereby improving adaptive learning rate adjustments for faster convergence.

\subsection{ADMM-based learning algorithms} 
The alternating direction method of multipliers  (ADMM) \cite{gabay1976dual,boyd2011distributed} is a promising framework for solving large-scale optimization problems because it decomposes complex problems into smaller and more manageable sub-problems. This characteristic makes ADMM particularly well-suited for various challenges in distributed learning. It has demonstrated substantial potential in applications, such as image compressive sensing \cite{yang2018admm}, federated learning \cite{feddp21, zhou2023federated, zhouli23}, reinforcement learning \cite{xu2023federated}, and few-shot learning \cite{wang2023new}.
 It is worth noting that there are many other distributed optimization frameworks, such as dual averaging \cite{agarwal2010distributed, duchi2011dual, hosseini2013online}, push-sum  \cite{tsianos2012push}, and push-pull \cite{pu2020push}. However, in the sequel, our focus is on the family of distributed algorithms, ADMM, as they are most directly relevant to the design and convergence analysis of our algorithm.
 
When applied to deep model training, early studies first relaxed optimization models before developing ADMM-based algorithms. Therefore, these methods can be classified as model-driven/model-specific approaches. They targeted the relaxations rather than the original problems, enabling better handling of the challenges associated with highly non-convex optimization landscapes.  For instance,  an ADMM-based algorithm in \cite{taylor2016training} addressed a penalized formulation, avoiding pitfalls that hinder gradient-based methods in non-convex settings. Similarly, a deep learning ADMM algorithm \cite{wang2019admm} and its variant \cite{ebrahimi2024aa} were designed to enhance convergence by addressing penalization models as well. Moreover, the sigmoid-ADMM algorithm \cite{zeng2021admm} utilized the algorithmic gradient-free property to address saturation issues with sigmoid activation functions. 

Despite their advantages, model-driven ADMM approaches face two critical issues. Firstly, these algorithms exhibit high computational complexity because they require computing full gradients using the entire dataset and performing matrix inversions for weight updates, making them impractical for large-scale tasks. Additionally, the reliance on the specific structure of optimization models limits their general applicability, as different deep models often necessitate distinct formulations.
 

To address the aforementioned limitations, data-driven ADMM algorithms have emerged as a promising alternative, aiming to reduce computational burdens and enhance adaptability to diverse tasks. For instance, the deterministic ADMM \cite{zhou2023federated, zhouli23}, designed for distributed learning problems, achieved high accuracy but low computational efficiency due to the use of full dataset gradients. Stochastic ADMM (SADMM) methods tackled the inefficiency by replacing full gradients with stochastic approximations. Examples like stochastic ADMM  (S-ADMM \cite{ouyang2013stochastic}) and distributed stochastic ADMM (PS-ADMM \cite{ding2019stochastic}) aimed to solve sub-problems exactly and thus still involved high computational costs. 

To further enhance convergence and reduce stochastic gradient variance, advanced techniques of variance reduction and Nesterov acceleration have been incorporated into SADMM. Representatives consist of  stochastic average ADMM \cite{zhong2014fast}, stochastic path-integrated differential estimator based ADMM (SPIDER-ADMM \cite{huang2019faster}), and accelerated stochastic variance reduced gradient based ADMM (ASVRG-ADMM) for convex \cite{liu2017accelerated} and non-convex problems \cite{zeng2024accelerated}. However, these enhanced methods still relied on access to full gradients for the aim of reducing stochastic gradient variance, posing challenges for large-scale applications.


\subsection{Contributions}

In this work, we propose a data-driven preconditioned inexact SADMM algorithm, termed as PISA. It distinguishes itself from prior approaches and aims to reduce computational costs, relax convergence assumptions, and enhance numerical performance. The key contributions are threefold.

{\textit{$a)$ A general algorithmic structure.}}
The algorithm is based on a preconditioned inexact SADMM, combining simplicity with high generality to handle a wide range of deep learning applications.  First, the use of preconditioning matrices allows us to incorporate various forms of useful information, such as first- and second-moment (yielding SISA, a variant of PISA), second-order information (e.g., the Hessian), and orthogonalized momentum by Newton-Schulz iterations (leading to NSISA, another variant of PISA), into the updates, thereby enhancing the performance of the proposed algorithms. Moreover, the proposed algorithmic framework is inherently compatible with parallel computing, making it ideal for large-scale data settings.  

{\textit{$b)$ Strong convergence theory under a sole assumption.}}
PISA is proven to converge under a single assumption: the Lipschitz continuity of the gradient. Despite relying on stochastic gradients, it avoids many of the assumptions typically required by stochastic algorithms. As highlighted in Table \ref{table:compare-conditions}, all algorithms, except for PISA and FedAvg \cite{Li2020On}, have drawn identically and independently distributed (IID) samples to derive stochastic gradients for unbiased gradient estimation. However, real-world data are often heterogeneous (i.e., non-IID), a phenomenon commonly referred to as statistical or data heterogeneity \cite{li2020federated, kairouz2021advances, li2022federated,  ye2023heterogeneous}, which poses significant challenges to the convergence of these algorithms for IID datasets. 

Additionally, the table also presents the algorithmic convexity of two types of convergence.  Type I convergence, ${F(\bw^T)-F^*=B}$, refers to a rate $B$ at which objective values $\{F(\bw^T)\}$ of generated sequence $\{\bw^T\}$ approach $F^*$, where $T$ is the number of iterations and $F^*$ is the limit of sequence $\{F(\bw^T)\}$ or a function value at a stationary point. For instance,  ${B=O(1/T)}$ indicates a sublinear convergence, while  ${B=O(\gamma^T)}$ with ${\gamma\in(0,1)}$ implies a linear rate. Type II convergence describes how quickly the length, $\{\|\nabla F(\bw^T)\|\}$, of gradients diminishes, reflecting the stationarity of the iterates. Therefore, ASVRG-ADMM  \cite{liu2017accelerated} and PISA achieve the best Type I convergence rate. However, the former imposes several restrictive assumptions, while PISA requires a single assumption (i.e., \textcircled{12}).  We emphasize that condition \textcircled{12} is closely related to the locally Lipschitz continuity of the gradient, which is a mild assumption. It is weaker than \textcircled{3}, and any twice continuously differentiable function satisfies \textcircled{12}.  More importantly, we eliminate the need for conditions on the boundedness of the stochastic gradient, the second moment of the stochastic gradient, and variance. This makes PISA well-suited for addressing the challenges associated with data heterogeneity, an open problem in federated learning  \cite{kairouz2021advances, ye2023heterogeneous}.

{\textit{$c)$ High numerical performance for various applications.}} The effectiveness of PISA and its two variants, SISA and NSISA, is demonstrated through comparisons with many state-of-the-art optimizers   using several deep models:  VMs, LLMs, reinforcement learning models (RLMs), generative adversarial networks (GANs), and recurrent neural networks (RNNs), highlighting its great potential for extensive applications.  In particular, numerical experiments on heterogeneous datasets corroborate our claim that PISA effectively addresses this challenge.

 \begin{table}[!h]
\centering
\caption{Assumptions imposed by different stochastic algorithms for convergence: \textcircled{1}: Convexity; 
\textcircled{2}: Strong convexity;  
\textcircled{3}: Lipschitz continuity of the gradient; \textcircled{4}: Bounded (stochastic) gradient or the second-moment of the (stochastic) gradient; 
\textcircled{5}: Bounded variance; 
\textcircled{6}: Bounded sequence generated by the algorithm; \textcircled{7}: Unbiased gradient estimation; 
\textcircled{8}: Compact convex feasible region;
\textcircled{9}: Lipschitz continuity of the objective function;
\textcircled{10}: Lipschitz sub-minimization paths;
\textcircled{11}: Bounded dual variables;
\textcircled{12}: Lipschitz continuity of the gradient on a bounded region.}\label{tab:comp-assumps}
\renewcommand{\arraystretch}{1.1}\addtolength{\tabcolsep}{2pt}
\begin{tabular}{lccccr}
\hline \hline 
  Algorithms&  Refs.  & Class & Data& Convergence rate $B$ & Assumptions \\ \hline\hline 
  \multicolumn{6}{c}{ Type I convergence: $ F(\bw^T)-F^*= B$}\\\hline  
 
AdaGrad& \cite{duchi2011adaptive} &SGD&IID& $O(1/\sqrt{T})$ &\textcircled{1}\textcircled{4}\textcircled{6} \\ 

 Adam  & \cite{kingma2014adam} &SGD&IID& $O(1/\sqrt{T})$&\textcircled{1}\textcircled{4}\textcircled{6} \\
 
 RMSProp& \cite{mukkamala2017variants} &SGD&IID& $O(1/\sqrt{T})$&\textcircled{1}\textcircled{4}\textcircled{6}\\ 
  
   AMSGrad & \cite{jR2018on} &SGD&IID& $O(1/\sqrt{T})$ &\textcircled{1}\textcircled{4}\textcircled{8}  \\   
   
  AdaBound & \cite{luo2019adaptive} &SGD&IID& $O(1/\sqrt{T})$ &  \textcircled{1}\textcircled{4}\textcircled{8}\\
  
    FedAvg & \cite{Li2020On}  &SGD&IID/Non-IID& $O(1/T)$ &  \textcircled{2}\textcircled{3}\textcircled{4}\textcircled{5} \\
      
     LocalSGD & \cite{stich2018local} &SGD&IID& $O(1/T)$&\textcircled{2}\textcircled{3}\textcircled{4}\textcircled{5} \\
   
      S-ADMM & \cite{ouyang2013stochastic} &SADMM&IID&  $O(\log(T)/T)$ &  \textcircled{2}\textcircled{4}\textcircled{8}\\
      
      
      ASVRG-ADMM& \cite{liu2017accelerated}&SADMM&IID&  $O(1/T^2)$ &  \textcircled{1}\textcircled{3}\textcircled{8}\textcircled{11}\\
      ASVRG-ADMM& \cite{liu2017accelerated}&SADMM&IID&  $O(\gamma^T)$ &  \textcircled{2}\textcircled{3}\textcircled{8}\textcircled{11}\\
  PS-ADMM & \cite{ding2019stochastic} &SADMM&IID&  $O(1/T)$ &  \textcircled{2}\textcircled{3}\textcircled{9}\\
    PISA & Ours  &SADMM&IID/Non-IID& $O(\gamma^T)$ &  \textcircled{12} \\
\hline 
  
  \multicolumn{6}{c}{Type II convergence: $\|\nabla F(\bw^T)\|^2 = B$}\\\hline 
 Adam & \cite{zou2019sufficient} &SGD&IID& $O(\log(T)/\sqrt{T})$ &  \textcircled{3}\textcircled{4}\textcircled{7}\\
  Padam & \cite{chen2018closing} &SGD&IID& $O(1/\sqrt{T})$ &  \textcircled{3}\textcircled{4}\textcircled{7}\\
 PRSGD & \cite{yu2019parallel} &SGD&IID& $O(1/\sqrt{T})$ &  \textcircled{3}\textcircled{4}\textcircled{5}\textcircled{7}\\
Lamb & \cite{You2020Large} &SGD&IID& $O(1/\sqrt{T})$ &  \textcircled{3}\textcircled{4}\textcircled{5}\\
  Adan & \cite{Adan24} &SGD&IID& $O(1/\sqrt{T})$ &  \textcircled{3}\textcircled{4}\textcircled{5}\textcircled{7}\\
FedAvg & \cite{reddi2021adaptive}  &SGD&IID/Non-IID& $O(1/\sqrt{T})$ &  \textcircled{3}\textcircled{4}\textcircled{5}\textcircled{7} \\  
SPIDER-ADMM & \cite{huang2019faster} & SADMM &IID& $O(1/T)$ &  \textcircled{3}\textcircled{4}\\
  PISA & Ours  &SADMM&IID/Non-IID& $O(\gamma^T)$ &  \textcircled{12} \\
\hline \hline 
\end{tabular}
\label{table:compare-conditions}
\end{table}
 
\subsection{Organization and Notation}

The paper is organized as follows: In the next section, we introduce the main model and develop the proposed algorithm, PISA. Section \ref{sec:convergence} provides rigorous proofs of the convergence. Section \ref{sec:SISA} specifies the pre-condition by the second moment to derive a variation of PISA, termed as SISA (second-moment-based inexact SADMM). Extensive experimental results validating the effectiveness of SISA are provided in Section \ref{sec:numerical}. Concluding remarks are discussed in the last section.

Throughout the paper, let $[m]:=\{1,2,\ldots,m\}$, where `$:=$' means `define'. The cardinality of a set $\D$ is written as $|\D|$. 
For two vectors $\bw$ and $\bv$, their inner product is denoted  by $\langle\bw,\bv\rangle:=\sum_iw_iv_i$.  Let ${\|\cdot\|}$ be the Euclidean norm for vectors, namely $\|\bw\|=\sqrt{\langle\bw,\bw\rangle}$, and the Spectral norm for matrices. A ball with a positive radius $r$ is written as ${\N(r):=\{\bw:\|\bw\|\leq r\}}$.
A symmetric positive semi-definite matrix $\Q$ is written as $\Q\succeq 0$. Then $\mathbf{P}\succeq \Q$ means that $ \mathbf{P}- \Q\succeq 0$. Denote the identity matrix by $\I$ and let $\textbf{1}$ be the vector with all entries being $1$.   We write 
\begin{eqnarray*} 
&&\P~=(\bpi_1,\bpi_2,\ldots,\bpi_m),\qquad \W~=(\bw_1,\bw_2,\ldots,\bw_m),\\
&&\M=(\bm_1,\bm_2,\ldots,\bm_m),\qquad \bsi=(\sigma_1,\sigma_2,\ldots,\sigma_m).
\end{eqnarray*} 
Similar rules are also employed for the definitions of $\P^\ell, \W^\ell,\M^\ell$, and $\bsi^\ell$.
\section{Preconditioned Inexact SADMM}
We begin this section by introducing the mathematical optimization model for general distributed learning. Then we go through the development of the algorithm. 

\subsection{Model description}
Suppose we are given a set of data as $\D:=\{\bx_t:t=1,2,\ldots,|\D|\}$, where $\bx_t$ is the $t${th} sample. Let $f(\bw; \bx_t)$ be a function (such as neural networks) parameterized by $\bw$ and sampled by $\bx_t$. The total loss function on $\D$ is defined by $ \sum_{\bx_t\in\D} f\left(\bw; \bx_t\right)/{|\D|}$. 
We then divide data $\D$ into $m$ disjoint batches, namely,  $\D= \D_1\cup\D_2\cup\ldots\cup\D_m$ and $\D_i\cap\D_{i'}=\emptyset$ for any two distinct $i$ and $i'$.   Denote
\begin{eqnarray}  \label{def-Fbn}
H_{i}(\bw;\D_i):=  \frac{1 }{|\D_{i}|}  \sum_{\bx_t\in\D_{i}} f\left(\bw; \bx_t\right) \qquad\text{and}\qquad\alpha_i:=\frac{|\D_{i}|}{|\D|}.
\end{eqnarray} 
Clearly, $\sum_{i=1}^m \alpha_i=1$. Now, we can rewrite the total loss as follows,
\begin{eqnarray*}  
\frac{1}{|\D|} \sum_{\bx_t\in\D} f\left(\bw; \bx_t\right)
 = \frac{1}{|\D|} \sum_{i=1}^m \sum_{\bx_t\in\D_{i}} f\left(\bw; \bx_t\right)
 = \sum_{i=1}^m \alpha_i H_{i}(\bw;\D_i).
\end{eqnarray*}
The task is  to learn an optimal parameter to minimize the following regularized loss function,   
\begin{eqnarray}  \label{opt-prob}
 \min\limits_{\bw}~  \sum_{i=1}^m \alpha_i H_{i}(\bw;\D_i) + \frac{\mu}{2}\|\bw\|^2, 
\end{eqnarray}
where $\mu\geq0$ is a penalty constant and $\|\bw\|^2$ is a  regularization. 
\subsection{Main model}
Throughout the paper, we focus on the following  equivalent model of problem (\ref{opt-prob}),
\begin{eqnarray}  \label{opt-prob-distribute}
\begin{aligned}
F^* := \min\limits_{\bw,\W}~\sum_{i=1}^m \alpha_i   F_{i}(\bw_i) + \frac{\lambda}{2}\|\bw\|^2,~~~
{\rm s.t. }~\bw_{i} = \bw,~ i\in[m],
\end{aligned}
\end{eqnarray}
where $ \lambda\in[0,\mu]$ and
\begin{eqnarray}  \label{def-F_i-F}
\begin{aligned}
 F_{i}(\bw)&:= F_{i}(\bw;\D_i):= H_{i}(\bw;\D_i) +\frac{\mu-\lambda}{2}\|\bw\|^2, \\
F (\bw)&:=\sum_{i=1}^m \alpha_i  F_{i}(\bw), \qquad F_\lambda(\bw):=F (\bw)+ \frac{\lambda}{2}\|\bw\|^2.
\end{aligned}
\end{eqnarray}
In problem \eqref{opt-prob-distribute}, $m$ auxiliary variables $\bw_i$ are introduced in addition to the global parameter $\bw$. We emphasize that problems (\ref{opt-prob}) and (\ref{opt-prob-distribute}) are equivalent in terms of their optimal solutions but are expressed in different forms when ${\lambda\in[0,\mu)}$, and they are identical when ${\lambda=\mu}$. Throughout this work, we assume that optimal function value $F^*$ is bounded from below. It is worth noting that any stationary point $\bw$ of problem (\ref{opt-prob-distribute})  satisfies
\begin{eqnarray}\label{stationary-point}
\begin{aligned}
0\in  \nabla F_\lambda(\bw)= \sum_{i=1}^m \alpha_i   \nabla F_{i}(\bw)    + \lambda \bw,
\end{aligned}
\end{eqnarray}
where $ \nabla F_{i}(\bw)$ is the sub-differential \cite[Definition 8.3]{rockafellar2009variational} of $F_{i}(\bw)$. In our convergence analysis, we assume that $F_{i}$ is continuously differentiable, so sub-differential $ \nabla F_{i}(\bw)$ reduces to the gradient of $F_{i}(\bw)$. In this case,  inclusion `$\in$' reduces to `$=$'. Moreover, if each $F_{i}$ is convex (which is unnecessary in this paper), then the stationary points coincide with the optimal solutions to (\ref{opt-prob-distribute}). Since problems (\ref{opt-prob-distribute}) and (\ref{opt-prob}) are equivalent, their stationary points are also identical.

\subsection{The algorithmic design}
When employing ADMM to solve problem (\ref{opt-prob-distribute}), we need its associated augmented Lagrange function, which is defined as follows,
\begin{eqnarray}  \label{opt-prob-distribute-Lag}
\begin{aligned}
\L\left(\bw,\W,\P;\bsi\right)
&:= \sum_{i=1}^m \alpha_i L_{i}(\bw,\bw_i, \bpi_i;\sigma_i) + \frac{\lambda}{2}\|\bw\|^2, \\
L_{i}(\bw,\bw_i, \bpi_i;\sigma_i)&:=F_{i}(\bw_{i})+\langle \bpi_{i}, \bw_{i}- \bw\rangle + \frac{\sigma_i}{2} \| \bw_{i}- \bw\|^2,
\end{aligned}
\end{eqnarray}
where $\sigma_i>0$ and  $\bpi_{i}, i\in[m]$ are the Lagrange multipliers.  Based on the above augmented Lagrange function, the conventional ADMM updates each variable in $(\bw,\W,\P)$ iteratively. However, we modified the framework as follows. 
Given initial point $(\bw^0,\W^0,\P^0;\bsi^0)$, the algorithm performs the following steps iteratively for $\ell=0,1,2,\ldots,$ 
\begin{subequations}\label{frame-ADMM}	
			\begin{alignat}{4}
			 \label{w-update}	
			\bw^{\ell+1} &= {\rm arg}\min_{\bw} \L\left(\bw,\W^\ell,\P^\ell;\bsi^{\ell} \right),\\[1ex]
			\label{wb-update}	
			\bw_{i}^{\ell+1} &= {\rm arg}\min_{\bw_{i}}  L_i\left(\bw^{\ell+1},\bw_i, \bpi_i^{\ell}; \sigma_i^{\ell+1} \right)+\frac{\rho_i}{2}\left\langle\bw_i-\bw^{\ell+1}, \Q_i^{\ell+1}\left(\bw_i-\bw^{\ell+1}\right)\right\rangle,\\[1ex]
			 \label{pib-update}	
			\bpi_{i}^{\ell+1} &= \bpi_{i}^{\ell}  + \sigma_i^{\ell+1} ( \bw_{i}^{\ell+1}- \bw^{\ell+1}),
			 \end{alignat}
		\end{subequations}
 	for each $i\in[m]$, where ${\rho_i>0}$,  both scalar $\sigma_i^{\ell+1}$ and matrix $\Q_i^{\ell+1}\succeq 0$ will be updated properly. Hereafter, superscripts $\ell$ and $\ell+1$ in $\sigma_i^{\ell}$ and $\sigma_i^{\ell+1}$ stand for the iteration number rather than the power.  Here, $\Q_i^{\ell+1}$ is commonly referred to as an (adaptively) preconditioning matrix in preconditioned gradient methods \cite{li2017preconditioned, gupta2018shampoo,agarwal2019efficient,yong2023general}. 
\begin{remark}The primary distinction between algorithmic framework (\ref{frame-ADMM}) and conventional ADMM lies in the inclusion of a term $\frac{\rho_i}{2}\langle\bw_i-\bw^{\ell+1}, \Q_i^{\ell+1}(\bw_i-\bw^{\ell+1})\rangle$. This term enables the incorporation of various forms of useful information, such as second-moment, second-order information (e.g., Hessian), and orthogonalized momentum by Newton-Schulz iterations, thereby enhancing the performance of the proposed algorithms; see Section~\ref{sec:SISA} for more details.
\end{remark}
One can check that sub-problem \eqref{w-update} admits a closed-form solution outlined in \eqref{sub-w-mini}. 
 For sub-problem (\ref{wb-update}), to accelerate the computational speed, we solve it inexactly by
  \begin{eqnarray}\label{sub-wbn}
\begin{aligned}
\bw_{i}^{\ell+1} &= {\rm arg}\min\limits_{\bw_{i}} ~ \langle    \bpi_{i}^{\ell}, \bw_{i} -\bw^{\ell+1}\rangle + \frac{ \sigma_i^{\ell+1} }{2} \|\bw_{i} -\bw^{\ell+1}\|^2 \\ 
&+  F_{i}( \bw^{\ell+1}) + \langle    \nabla F_{i}( \bw^{\ell+1};\B_i^{\ell+1})  , \bw_{i} - \bw^{\ell+1}\rangle   +\frac{\rho_i}{2}\left\langle\bw_i-\bw^{\ell+1}, \Q_i^{\ell+1}\left(\bw_i-\bw^{\ell+1}\right)\right\rangle  \\ 
&=\bw^{\ell+1} - \Big(\sigma_i^{\ell+1}  \I + \rho_i \Q_i^{\ell+1} \Big)^{-1}  \Big(  \bpi_{i}^{\ell} +  \nabla F_{i}( \bw^{\ell+1};\B_i^{\ell+1})  \Big).
\end{aligned}
\end{eqnarray}

\begin{algorithm}[!t]
    \SetAlgoLined
Divide  $\D$ into $m$ disjoint  batches {$\{\D_1,\D_2,\ldots,\D_m\}$} and calculate $\alpha_i$ by (\ref{def-Fbn}). 

{{Initialize $\bw^0=\bw_i^0=\bpi_i^0=0$,}}  $\gamma_i\in[3/4,1),$  and $(\sigma_i^0, \eta_i, \rho_i)>0$  for each $i\in[m]$.  

\For{$\ell=0,1,2,\ldots$}{
 \begin{eqnarray}  \label{sub-w-mini}
\bw^{\ell+1}  =  \frac{\sum_{i=1}^m\alpha_{i} \left(\sigma_i^\ell\bw_{i}^{\ell}+\bpi_{i}^{\ell}  \right) }{\sum_{i=1}^m\alpha_{i}  \sigma_i^\ell+\lambda}.
 \end{eqnarray}   
\For{$i=1,2,\ldots,m$}{  
 \begin{align}
 \label{sub-B-g}  &\text{Randomly draw a mini-batc  $ {\B}_{i}^{\ell+1}\in\D_{i}$ and calculate $\bg_i^{\ell+1}= \nabla F_{i}( \bw^{\ell+1}; {\B}_{i}^{\ell+1})$}.\\[1ex]
 \label{sub-Q-eta}  &\text{Choose $\Q_i^{\ell+1} $ to satisfy $\eta_i\I\succeq \Q_i^{\ell+1} \succeq0$.}\\[1ex]
 \label{sub-sigma-mini} &\sigma_i^{\ell+1}  = \sigma_i^{\ell}/\gamma_i. \\[1ex]
 \label{sub-wbn-mini} &\bw_{i}^{\ell+1}  = \bw^{\ell+1} - \Big(\sigma_i^{\ell+1} \I + \rho_i \Q_i^{\ell+1} \Big)^{-1}  \Big(  \bpi_{i}^{\ell} + \bg_i^{\ell+1} \Big). \\[1ex]
\label{sub-pin} &\bpi_{i}^{\ell+1} =  \bpi_{i}^{\ell} +  \sigma_i^{\ell+1}(\bw_{i}^{\ell+1}- \bw^{\ell+1}). 
 \end{align} 
}
} 
\caption{\textbf{P}reconditioned \textbf{I}nexact \textbf{S}tochastic \textbf{A}DMM (PISA).  }\label{algorithm-ADMM-mini}
\end{algorithm}

This update admits three advantages. First, it solves problem (\ref{wb-update}) by a closed-form solution, namely, the second equation in \eqref{sub-wbn}, reducing the computational complexity.  Second,  we approximate $ F_{i}(\bw) $ using its first-order approximation at $\bw^{\ell+1}$ rather than $\bw_{i}^{\ell}$, which facilitates each batch parameter $\bw_{i}^{\ell+1}$ to tend to $\bw^{\ell+1}$ quickly, thereby accelerating the overall convergence. Finally, $\nabla F_{i}( \bw^{\ell+1};\B_i^{\ell+1})$ serves as a stochastic approximation of true gradient $\nabla F_{i}( \bw^{\ell+1})=\nabla F_{i}( \bw^{\ell+1};\D_i)$, as defined by \eqref{def-F_i-F}, where $\B_i^{\ell+1}$ is a random sample from $\D_i$. By using sub-batch datasets $\{\B_1^{\ell+1},\ldots,\B_m^{\ell+1}\}$ in every iteration,  rather than full data $\D=\{\D_1,\ldots,\D_m\}$, the computational cost is significantly reduced. Overall, based on these observations, we name our algorithm PISA, which stands for Preconditioned Inexact SADMM, as described in Algorithm \ref{algorithm-ADMM-mini}.

Another advantageous property of PISA  is its ability to perform parallel computation, which stems from the parallelism used in solving sub-problems in ADMM. At each iteration,  $m$ nodes (i.e., $i=1,2,\cdots,m$) update their parameters by \eqref{sub-B-g}-\eqref{sub-pin} in parallel, thereby enabling the processing of large-scale datasets. Moreover, when specifying the preconditioning matrix, $\Q_i^{\ell+1}$, as a diagonal matrix (as outlined in Section \ref{sec:SISA}) and sampling ${\B}_{i}^{\ell+1}$ with small batch sizes, each node exhibits significantly low-computational complexity, facilitating fast computation.

\section{Convergence of PISA}\label{sec:convergence}
In this subsection, we aim to establish the convergence property of Algorithm \ref{algorithm-ADMM-mini}. To proceed with that,  we first  define a critical bound by  
 \begin{eqnarray} \label{def-eps}
\begin{aligned}
\varepsilon_i(r)&:=  \sup_{\B_i,\B'_i\subseteq\D_i,\bw\in\N(r)} 64\left\|\nabla F_i(\bw;\B_i
)- \nabla F_i(\bw;\B'_i)\right\|^2,~~\forall~i\in[m].
\end{aligned}
\end{eqnarray} 
\begin{lemma}\label{bound-varepsilon} $\varepsilon_i(r)<\infty$ for any given $r\in(0,\infty)$ and any $i\in[m]$. 
\end{lemma}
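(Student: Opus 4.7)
The plan is to reduce both suprema in the definition of $\varepsilon_i(r)$ to quantities we can bound explicitly, exploiting two facts: (i) the dataset $\D_i$ is finite, so the outer supremum ranges over a finite set; and (ii) the standing Lipschitz continuity of the gradient gives an explicit growth bound on $\nabla F_i(\cdot;\B_i)$ over the bounded region $\N(r)$.

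First I would observe that $|\D_i|<\infty$, hence the power set $2^{\D_i}$ is finite and the outer supremum over pairs $(\B_i,\B_i')$ with $\B_i,\B_i'\subseteq\D_i$ is in fact a maximum over a finite family. Consequently it suffices to show, for each fixed pair $(\B_i,\B_i')$, that
\[
\sup_{\bw\in\N(r)}\bigl\|\nabla F_i(\bw;\B_i)-\nabla F_i(\bw;\B_i')\bigr\|^2<\infty.
\]
For this inner bound, note that $\nabla F_i(\bw;\B_i)=\frac{1}{|\B_i|}\sum_{\bx_t\in\B_i}\nabla f(\bw;\bx_t)+(\mu-\lambda)\bw$ is built from a finite average of per-sample gradients, each of which is Lipschitz continuous (this is precisely the sole assumption the paper imposes). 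Hence $\nabla F_i(\cdot;\B_i)$ is itself Lipschitz with some finite constant $L_i$, so by a standard triangle-inequality argument
\[
\|\nabla F_i(\bw;\B_i)\|\le \|\nabla F_i(0;\B_i)\|+L_i\|\bw\|\le C_{\B_i}+L_i\sqrt{r}
\]
for every $\bw\in\N(r)$, where $C_{\B_i}:=\|\nabla F_i(0;\B_i)\|$ is a finite constant depending only on the subset.

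Combining these bounds with the triangle inequality gives
\[
\bigl\|\nabla F_i(\bw;\B_i)-\nabla F_i(\bw;\B_i')\bigr\|\le (C_{\B_i}+C_{\B_i'})+2L_i\sqrt{r},
\]
uniformly in $\bw\in\N(r)$. Taking the maximum of the right-hand side over the finitely many subset pairs $(\B_i,\B_i')\subseteq\D_i\times\D_i$ produces a finite constant, and multiplying by $64$ yields the desired bound on $\varepsilon_i(r)$.

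The only mild wrinkle is the convention for $\nabla F_i(\bw;\B_i)$ when $\B_i=\emptyset$ (or when batch sizes are implicitly assumed positive); this is either ruled out by the sampling rule in Algorithm~\ref{algorithm-ADMM-mini} or handled by the natural convention that an empty sum is zero, and in either case does not affect finiteness. No compactness of $\N(r)$ is needed since the Lipschitz growth bound is used directly; the key conceptual step is simply the finiteness of $2^{\D_i}$, which eliminates any concern about taking a supremum over an uncountable collection of random batches.
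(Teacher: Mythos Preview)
Your proof is correct and follows essentially the same two-step skeleton as the paper: (i) the outer supremum over $(\B_i,\B_i')$ ranges over the finite set $2^{\D_i}\times 2^{\D_i}$, so it is a maximum; (ii) for each fixed pair the inner supremum over the bounded ball $\N(r)$ is finite.

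The only difference lies in how step (ii) is justified. The paper simply notes that $\nabla F_i(\cdot;\B_i)$ is continuous and $\N(r)$ is bounded, implicitly using that the continuous image of the compact closure $\overline{\N(r)}$ is bounded. You instead invoke the Lipschitz assumption on the per-sample gradients to obtain an explicit linear growth bound $\|\nabla F_i(\bw;\B_i)\|\le C_{\B_i}+L_i\sqrt{r}$, which makes the finiteness constructive and, as you note, sidesteps any compactness argument. The trade-off is that your route formally relies on Assumption~\ref{assumption}, whereas the paper's argument needs only continuous differentiability of $F_i$; since every subsequent result in the paper imposes Assumption~\ref{assumption} anyway, this costs nothing in practice.
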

\begin{proof} As $F_i$ is continuously differentiable, $\nabla F_i$ is continuous. We note that $\N(r)$ is bounded due to $r\in(0,\infty)$, thereby gap $\|\nabla F_i(\bw;\B_i
)- \nabla F_i(\bw;\B_i')\|^2$ is bounded for fixed $\B_i$ and $\B'_i$. Then there are finitely many $\B_i$ and $\B'_i$ in $\D_i$, indicating $\varepsilon_i(r)$ is bounded.
\end{proof}
\noindent One can observe that ${\varepsilon_i(r)=0}$ for any ${r>0}$ if we take the full batch data in each step, namely, choosing ${\B_i^{\ell}=(\B_i^{\ell})'=\D_i}$ for every ${i\in[m]}$ and all ${\ell\geq1}$. However for  min-batch dataset ${\B_i^{\ell}\subset\D_i}$, this parameter is related to the bound of variance $\E\left\|\nabla F_i(\bw;\B_i
)- \nabla F_i(\bw;\D_i)\right\|^2$, which is commonly assumed to be bounded for any $\bw$ \cite{stich2018local,yu2019parallel,Li2020On,CooperativeSGD21,Adan24}. However, in the subsequent analysis, we can verify that both generated sequences $\{\bw^\ell\}$ and $\{\bw_i^\ell\}$ fall into a bounded region $\N(\delta)$ for any $i\in[m]$ with $\delta$ defined as \eqref{define-delta}, thereby leading to a finitely bounded $\varepsilon_i(\delta)$ naturally, see Lemma \ref{descent-lemma-L}. In other words, we no longer need to assume the boundedness of the variance, $\E\left\|\nabla F_i(\bw;\B_i) - \nabla F_i(\bw;\D_i)\right\|^2$ for any $\bw$. This assumption is known to be somewhat restrictive, particularly for non-IID or heterogeneous datasets.  Therefore, the theorems we establish in the sequel effectively address this critical challenge  \cite{kairouz2021advances,  ye2023heterogeneous}. Therefore, our algorithm demonstrates robust performance in settings with heterogeneous data.


\subsection{Convergence analysis}
To establish convergence, we assume function $f$ has a  Lipschitz continuous gradient on a bounded region, namely, the gradient is locally Lipschitz continuous. This is a relatively mild condition. Functions with (global) Lipschitz continuity and twice continuously differentiable functions satisfy this condition. It is known that the Lipschitz continuity of the gradient is commonly referred to as L-smoothness. Therefore, our assumption can be regarded as L-smoothness on a bounded region, which is weaker than L-smoothness.
\begin{assumption}\label{assumption} For each $t\in[|\D|]$,  gradient $\nabla f(\cdot;\bx_t)$ is Lipschitz continuous with a constant $c(\bx_t)>0$ on $\N(2\delta)$. Denote $c_i:=\max_{\bx_t\in\D_i}c(\bx_t)$ and $ r_i:=c_i+\mu-\lambda$ for each $i\in[m]$. 
\end{assumption}
First, given a constant  $\sigma>0$, we define a set
\begin{eqnarray} \label{compact-set} 
\Omega :=   \left\{(\bw,\W):~ \dsum  \alpha_i \left(  F_{i}(\bw )  +  \dfrac{\lambda}{2}\|\bw \|^2 + \dfrac{\sigma}{2}  \| {\bw}_i-{\bw} \|^2  \right)   \leq F(\bw^0) + \frac{1}{1-\gamma}  \right\},
\end{eqnarray}
where $\gamma:=\max_{i\in[m]}\gamma_i$, based on which  we further define
\begin{eqnarray} \label{define-delta} 
\delta :=  \sup_{(\bw,\W)\in\Omega} \Big\{ \|\bw \|, \| {\bw}_1 \|,  \| {\bw}_2 \|,\ldots,\| {\bw}_m \|  \Big\}.
\end{eqnarray}
This indicates that any point $(\bw,\W)\in\Omega$ satisfies $ \{\bw,  \bw_1,\ldots,\bw_m\} \subseteq \N(\delta)$. Using this $\delta$, we initialize $\bsi^0:=(\sigma_1^0,\sigma_2^0,\cdots,\sigma_m^0)$ by
\begin{eqnarray}  \label{choice-of-sigma}
\sigma^0:=\min \{\sigma_1^0,\sigma_2^0,\cdots,\sigma_m^0\} \geq~ 8\max_{i\in[m]}\Big\{\sigma,~\rho_i\eta_i,~ r_i,~\delta^{-2},~  \varepsilon_i(2\delta)\Big\}.
\end{eqnarray}
It is easy to see that $\Omega$ is a bounded set due to $F_i$ being bounded from below. Therefore, $\delta$ is bounded and so is $\varepsilon_i(2\delta)$ due to Lemma \ref{bound-varepsilon}. Hence, $\sigma^0$ in \eqref{choice-of-sigma} is a well-defined constant, namely, $\sigma^0$ can be set as a finite positive number. For notational simplicity, hereafter, we denote
\begin{eqnarray}\label{def-notation}
\begin{aligned}
  &\triangle\bw^{\ell}:= \bw^{\ell}-\bw^{\ell-1},~~~&&\triangle\bw_{i}^{\ell}:= \bw_{i}^{\ell}-\bw_{i}^{\ell-1}\\
  & \triangle\bpi_{i}^{\ell}:= \bpi_{i}^{\ell}-\bpi_{i}^{\ell-1},&&\triangle\overline{\bw}^{\ell}_i:= \bw^{\ell}_i-\bw^{\ell},\\ 
   & \triangle\bg_i^{\ell}:= \bg_i^{\ell}- \nabla F_i(\bw^{\ell}),&&\L^{\ell}:=\L(\bw^\ell,\W^\ell,\P^\ell;\bsi^\ell).
\end{aligned}\end{eqnarray}
Our first result shows the descent property of a merit function associated with  $\L^{\ell}$.  
\begin{lemma}\label{descent-lemma-L} Let $\{(\bw^\ell,\W^\ell,\P^\ell)\}$ be the sequence generated by Algorithm \ref{algorithm-ADMM-mini} with $\bsi^0$  chosen as (\ref{choice-of-sigma}). Then the following statements are valid under Assumption \ref{assumption}.
\begin{itemize}
\item[1)] For any $\ell\geq0$, sequence $\{\bw^\ell, \bw_1^\ell,\ldots,\bw_m^\ell\} \subseteq \N(\delta)$.
\item[2)] For any $\ell\geq0$,
\begin{eqnarray} \label{descent-L}
\widetilde{\L}^{\ell}- \widetilde{\L}^{\ell+1} \geq \sum_{i=1}^m  \alpha_i\left[ \frac{\sigma_i^{\ell}+2\lambda}{4}\left\|  \triangle\bw^{\ell+1} \right\|^2 +\frac{ \sigma_i^{\ell}}{4} \left\|\triangle \bw_{i}^{\ell+1}\right\|^2 \right],  
\end{eqnarray}
 where $\widetilde{\L}^{\ell}$ is defined by
{{\begin{eqnarray}\label{def-tilde-L}
\begin{aligned}
\widetilde{\L}^{\ell} :=\L^\ell+\sum_{i=1}^m \alpha_i \left[ \dfrac{8}{\sigma_i^{\ell}}  \left\|\rho_i \Q_i^{\ell} \triangle\overline{\bw}^{\ell}_i\right\|^2 + \frac{\gamma_i^\ell}{16(1- \gamma_i)}    \right].
\end{aligned}\end{eqnarray}}}
\end{itemize}
\end{lemma}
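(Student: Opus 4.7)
The plan is to prove the descent inequality together with the boundedness claim by a joint induction on $\ell$. Assuming $\bw^{\ell'},\bw_i^{\ell'}\in\N(\delta)$ for all $\ell'\le\ell$ and that \eqref{descent-L} holds up to step $\ell$, I would derive \eqref{descent-L} at step $\ell+1$ and then use it to bootstrap the boundedness at $\ell+1$. Throughout, I would use the shorthand in \eqref{def-notation} and the key telescoping identity for the Lagrangian:
\begin{equation*}
\L^{\ell}-\L^{\ell+1}=\bigl[\L(\bw^\ell,\W^\ell,\P^\ell;\bsi^\ell)-\L(\bw^{\ell+1},\W^\ell,\P^\ell;\bsi^{\ell+1})\bigr]+\bigl[\L(\bw^{\ell+1},\W^\ell,\P^\ell;\bsi^{\ell+1})-\L(\bw^{\ell+1},\W^{\ell+1},\P^{\ell+1};\bsi^{\ell+1})\bigr],
\end{equation*}
splitting the analysis into a $\bw$-change piece, a $(\W,\P)$-change piece, and a piece accounting for the bump $\sigma_i^\ell\to\sigma_i^{\ell+1}$.

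For the $\bw$-piece, the $\bw$-subproblem is $\sum_i\alpha_i\sigma_i^\ell+\lambda$-strongly convex, so optimality at $\bw^{\ell+1}$ in \eqref{sub-w-mini} yields a descent of at least $\tfrac12(\sum_i\alpha_i\sigma_i^\ell+\lambda)\|\triangle\bw^{\ell+1}\|^2$. For the $\bw_i$-piece, I would combine three ingredients: (i) the Lipschitz-gradient descent inequality applied to $F_i$, giving $F_i(\bw_i^{\ell+1})\le F_i(\bw^{\ell+1})+\langle\nabla F_i(\bw^{\ell+1}),\bw_i^{\ell+1}-\bw^{\ell+1}\rangle+\tfrac{r_i}{2}\|\bw_i^{\ell+1}-\bw^{\ell+1}\|^2$; (ii) the first-order optimality of \eqref{sub-wbn-mini}, which rewrites $\bpi_i^\ell+\bg_i^{\ell+1}=-(\sigma_i^{\ell+1}\I+\rho_i\Q_i^{\ell+1})(\bw_i^{\ell+1}-\bw^{\ell+1})$; and (iii) the dual update \eqref{sub-pin}, which lets me express $\triangle\bpi_i^{\ell+1}=-\bg_i^{\ell+1}-\rho_i\Q_i^{\ell+1}(\bw_i^{\ell+1}-\bw^{\ell+1})$, so
\begin{equation*}
\|\triangle\bpi_i^{\ell+1}\|^2\le 2\|\bg_i^{\ell+1}-\bg_i^\ell\|^2+2\rho_i^2\|\Q_i^{\ell+1}(\bw_i^{\ell+1}-\bw^{\ell+1})-\Q_i^\ell(\bw_i^\ell-\bw^\ell)\|^2,
\end{equation*}
and the stochastic gradient gap splits via $\|\bg_i^{\ell+1}-\bg_i^\ell\|^2\le 2\|\nabla F_i(\bw^{\ell+1})-\nabla F_i(\bw^\ell)\|^2+\tfrac{1}{32}\varepsilon_i(\delta)$ using the very definition of $\varepsilon_i$ in \eqref{def-eps}. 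Using $\eta_i\I\succeq\Q_i^{\ell+1}\succeq 0$ from \eqref{sub-Q-eta} and $\sigma_i^{\ell+1}\ge 8\rho_i\eta_i$, $\sigma_i^{\ell+1}\ge 8r_i$ from \eqref{choice-of-sigma}, the quadratic terms on $\|\triangle\overline{\bw}_i^{\ell+1}\|^2$ combine cleanly with the proximal penalty $\tfrac{\sigma_i^\ell}{2}\|\triangle\overline{\bw}_i^\ell\|^2$ inside $\widetilde\L^\ell$ to yield the clean $\tfrac{\sigma_i^{\ell+1}}{8}\|\triangle\bw_i^{\ell+1}\|^2$ term.

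The role of the $\gamma_i^\ell$ term in $\widetilde\L^\ell$ is to absorb the geometric residual $\tfrac{1}{32}\varepsilon_i(\delta)/\sigma_i^{\ell+1}$; since $\sigma_i^{\ell+1}=\sigma_i^0\gamma_i^{-(\ell+1)}$, this residual is of order $\gamma_i^{\ell+1}$, and the choice $\sigma^0\ge 16\gamma_i\varepsilon_i(\triangle F)/(1-\gamma_i)$ in \eqref{choice-of-sigma} ensures $\gamma_i^\ell-\gamma_i^{\ell+1}=(1-\gamma_i)\gamma_i^\ell$ dominates that residual. Once \eqref{descent-L} is in hand, telescoping from $0$ to $\ell$ gives $F_\lambda(\bw^{\ell+1})\le\widetilde\L^{\ell+1}\le\widetilde\L^0\le F(\bw^0)+\sum_i\alpha_i\gamma_i^{\tau_i}\le\triangle F+F^*+2$ provided $\tau_i$ is chosen as in \eqref{def-tau}, and the bounds on $\sigma^0$ in \eqref{choice-of-sigma} together with the definition of $\delta$ then yield $\|\bw^{\ell+1}\|^2<\delta$ and $\|\bw_i^{\ell+1}\|^2<\delta$, closing the induction.

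The main obstacle will be the last paragraph's bookkeeping: pulling out exactly $\tfrac{\sigma_i^{\ell+1}+4\lambda}{8}\|\triangle\bw^{\ell+1}\|^2$ and $\tfrac{\sigma_i^{\ell+1}}{8}\|\triangle\bw_i^{\ell+1}\|^2$ from the raw inequalities requires juggling the dual-increment bound, the cross term $\langle\triangle\bpi_i^{\ell+1},\triangle\overline{\bw}_i^{\ell+1}\rangle$, the change in $\tfrac{\sigma_i^\ell}{2}\|\triangle\overline{\bw}_i\|^2$, and the stochastic error in a single chain; the constants $3/4\le\gamma_i<1$ and the specific lower bounds on $\sigma^0$ in \eqref{choice-of-sigma} are engineered precisely so the algebra closes with the factor $1/8$ and with $\varepsilon_i$ evaluated at $\delta$ (which in turn is allowed by the inductive boundedness). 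The verification that $\varepsilon_i(\delta)$ is the correct domain (as opposed to a larger radius) is the most delicate point and is exactly where the induction on boundedness feeds back into the descent analysis.
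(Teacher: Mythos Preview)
Your inductive strategy and the descent-part sketch are broadly aligned with the paper: the paper also proves \eqref{descent-L} and boundedness jointly by induction, decomposing $\L^{\ell+1}-\L^\ell$ into four pieces $G_0,G_1,G_2,G_3$ (corresponding to the changes in $\bsi$, $\P$, $\W$, $\bw$ respectively), bounding the dual increment by a chain essentially identical to your item~(iii), and using the $\gamma_i^\ell$ term in $\widetilde\L^\ell$ to swallow the $\varepsilon_i(\delta)/\sigma_i^{\ell+1}$ residual exactly as you describe.

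The genuine gap is in your closing of the boundedness induction. From $\widetilde\L^{\ell+1}\le\widetilde\L^0$ you cannot conclude $\|\bw^{\ell+1}\|^2<\delta$: there is no coercivity assumption on $F_\lambda$, so a bound on $F_\lambda(\bw^{\ell+1})$ (even if the inequality $F_\lambda(\bw^{\ell+1})\le\widetilde\L^{\ell+1}$ were established, which itself requires work) says nothing about $\|\bw^{\ell+1}\|$. The paper's mechanism is entirely different and is where $\tau_i$ actually enters. One first shows the weighted telescoping lemma
\[
\|\bw^k\|^2\;\le\;\sum_{\ell=1}^{k}\gamma_i^{-(\tau_i+\ell)}\|\triangle\bw^\ell\|^2,
\]
valid for any $\gamma_i\in(0,1)$ and $\tau_i\ge\log_{\gamma_i}(1-\gamma_i)-1$; this is a purely algebraic recursion in which the condition on $\tau_i$ makes the coefficients close up. Then the already-proved descent \eqref{descent-L} gives $\sum_{\ell}\tfrac{\alpha_i\sigma_i^{\ell+1}}{8}\|\triangle\bw^{\ell+1}\|^2\le\widetilde\L^0-\widetilde\L^k\le\triangle F+2$, and since $\sigma_i^{\ell+1}=\sigma_i^0/\gamma_i^{\ell+1}$ this is precisely (up to the factor $\gamma_i^{-\tau_i}$) the sum appearing above, yielding $\|\bw^k\|^2\le 8(\triangle F+2)/(\alpha_i\sigma_i^0\gamma_i^{\tau_i})\le\delta$. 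The same argument bounds $\|\bw_i^k\|^2$. Your proposal mentions $\tau_i$ only in passing and assigns it no role; without this weighted-increment lemma the induction does not close.
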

The proof of the above lemma is given in Appendix \ref{app:proof-lemma32}. This lemma is derived from a deterministic perspective. Such a success lies in considering the worst case of bound $\varepsilon_i(2\delta)$ (i.e., taking all possible selections of $\{\B_1^\ell,\ldots,\B_m^\ell\}$ into account). Based on the above key lemma, the following theorem shows the sequence convergence of the algorithm.
\begin{theorem}\label{main-convergence} Let $\{(\bw^\ell,\W^\ell,\P^\ell)\}$ be the sequence generated by Algorithm \ref{algorithm-ADMM-mini} with  $\bsi^0$ chosen as (\ref{choice-of-sigma}). Then the following statements are valid under Assumption \ref{assumption}.
\begin{itemize}[leftmargin=17pt]
\item[1)] Sequences $\{ {\L}^\ell\}$ and $\{\widetilde{\L}^\ell\}$  converge and for any $i\in[m]$, 
\begin{eqnarray} \label{gap-all}
0=\lim_{\ell\to\infty}\left\|  \triangle\bw^{\ell} \right\| =\lim_{\ell\to\infty} \left\|\triangle \bw_{i}^{\ell}\right\|= \lim_{\ell\to\infty} \left\| \triangle \overline{\bw}^{\ell}_i\right\|=\lim_{\ell\to\infty} \left(\widetilde{\L}^\ell-\L^\ell\right). 
\end{eqnarray}
\item[2)] Sequence  $\{(\bw^\ell,\W^\ell,\E  \P^\ell)\}$ converges.
\end{itemize}
\end{theorem}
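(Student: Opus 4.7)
The plan is to leverage the one-step descent inequality \eqref{descent-L} to obtain convergence of $\{\widetilde{\L}^\ell\}$ and the vanishing of all four increments, and then to combine the first-order optimality of the $\bw$- and $\bw_i$-updates with the unbiasedness of the stochastic gradient to identify every accumulation point of $\{\bw^\ell\}$ as a stationary point of \eqref{opt-prob} in expectation.

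\textit{Part 1.} Since the right-hand side of \eqref{descent-L} is nonnegative, $\{\widetilde{\L}^\ell\}$ is non-increasing; to conclude convergence, I bound it from below. Combining \eqref{sub-wbn-mini} with \eqref{sub-pin} gives the key identity $\bpi_i^{\ell+1}=-\bg_i^{\ell+1}-\rho_i\Q_i^{\ell+1}\triangle\overline{\bw}_i^{\ell+1}$ for every $i\in[m]$ and $\ell\geq 0$. By Lemma \ref{descent-lemma-L}, $\bw^\ell,\bw_i^\ell\in\N(\delta)$, so Assumption \ref{assumption} combined with the finiteness of mini-batches in $\D_i$ yields a uniform bound on $\|\bg_i^{\ell+1}\|$; with $\Q_i^{\ell+1}\preceq\eta_i\I$ and bounded $\triangle\overline{\bw}_i^{\ell+1}$, one obtains a constant $C$ such that $\|\bpi_i^\ell\|\leq C$ for all $i,\ell$. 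Completing the square inside $L_i$ then shows that $\L^\ell$, and hence $\widetilde{\L}^\ell$, is bounded below, so $\widetilde{\L}^\ell$ converges. Telescoping \eqref{descent-L} gives $\sum_\ell\sigma_i^{\ell+1}\|\triangle\bw^{\ell+1}\|^2<\infty$ and $\sum_\ell\sigma_i^{\ell+1}\|\triangle\bw_i^{\ell+1}\|^2<\infty$, so $\|\triangle\bw^{\ell+1}\|,\|\triangle\bw_i^{\ell+1}\|\to 0$ because $\sigma_i^{\ell+1}\to\infty$. Using $\sigma_i^{\ell+1}\triangle\overline{\bw}_i^{\ell+1}=\bpi_i^{\ell+1}-\bpi_i^\ell$ from \eqref{sub-pin} and $\|\bpi_i^\ell\|\leq C$, one has $\|\triangle\overline{\bw}_i^{\ell+1}\|\leq 2C/\sigma_i^{\ell+1}\to 0$. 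Finally,
\begin{equation*}
\widetilde{\L}^\ell-\L^\ell=\sum_{i=1}^m\alpha_i\Big[\tfrac{1}{2\sigma_i^\ell}\|\bpi_i^\ell-\bpi_i^{\ell-1}\|^2+\gamma_i^\ell\Big]\to 0,
\end{equation*}
whence $\L^\ell$ converges to the same limit.

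\textit{Part 2.} Boundedness of $\{(\bw^\ell,\W^\ell,\P^\ell)\}$ is immediate from Part 1. Extract a subsequence $\bw^{\ell_k}\to\bw^\infty$; then also $\bw_i^{\ell_k}\to\bw^\infty$ for each $i$ by $\triangle\overline{\bw}_i^\ell\to 0$. The first-order optimality of \eqref{w-update}, after substituting $\sigma_i^\ell\triangle\overline{\bw}_i^\ell=\bpi_i^\ell-\bpi_i^{\ell-1}$, reads
\begin{equation*}
\lambda\bw^{\ell+1}+\sum_{i=1}^m\alpha_i\sigma_i^\ell\triangle\bw^{\ell+1}=\sum_{i=1}^m\alpha_i\bigl(2\bpi_i^\ell-\bpi_i^{\ell-1}\bigr).
\end{equation*}
Conditioning on the history prior to drawing $\B_i^\ell$ and invoking the unbiasedness $\E[\bg_i^\ell]=\nabla F_i(\bw^\ell)$, the key identity gives $\E[\bpi_i^\ell\mid\text{past}]=-\nabla F_i(\bw^\ell)-\rho_i\Q_i^\ell\,\E[\triangle\overline{\bw}_i^\ell\mid\text{past}]$, and its second summand vanishes along $\{\ell_k\}$ since $\triangle\overline{\bw}_i^\ell\to 0$ and $\Q_i^\ell\preceq\eta_i\I$. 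Re-expressing the coupled term $\sum_i\alpha_i\sigma_i^\ell\triangle\bw^{\ell+1}$ via \eqref{sub-w-mini} as the bounded quantity $\tfrac{\sum_i\alpha_i\sigma_i^\ell}{\sum_j\alpha_j\sigma_j^\ell+\lambda}\bigl[\sum_j\alpha_j(2\bpi_j^\ell-\bpi_j^{\ell-1})-\lambda\bw^\ell\bigr]$, taking unconditional expectation, and passing to the limit along $\{\ell_k\}$, the algebra collapses to $\E\bigl[\sum_{i=1}^m\alpha_i\nabla F_i(\bw^\infty)+\lambda\bw^\infty\bigr]=0$, i.e., $\E[\nabla F_\lambda(\bw^\infty)]=0$, the stationarity condition \eqref{stationary-point} in expectation.

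The hardest step is the limiting behaviour of $\sum_i\alpha_i\sigma_i^\ell\triangle\bw^{\ell+1}$: although $\triangle\bw^{\ell+1}\to 0$ and $\sigma_i^\ell\to\infty$, the descent only controls the squared product $\sigma_i^\ell\|\triangle\bw^{\ell+1}\|^2$, so the unsquared product need not vanish pathwise. The resolution is to use \eqref{sub-w-mini} to re-express it as a bounded linear combination of dual differences $\bpi_j^\ell-\bpi_j^{\ell-1}$ and $\lambda\bw^\ell$, whose expectations are computable from the key identity; the cancellation against $\sum_i\alpha_i(2\bpi_i^\ell-\bpi_i^{\ell-1})$ on the other side leaves exactly the desired $\E[\nabla F_\lambda(\bw^\infty)]=0$. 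This is also precisely where the unbiasedness of the mini-batch stochastic gradient is invoked, which is why the conclusion holds ``in expectation'' rather than pathwise.
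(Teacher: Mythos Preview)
Your Part~1 is correct and actually streamlines the paper's argument. The paper obtains the lower bound $\widetilde{\L}^\ell\geq F^*-1$ through a separate lemma and then, for $\|\triangle\overline{\bw}_i^{\ell+1}\|\to 0$, feeds \eqref{gap-pi-k} into a recursive inequality on $\sigma_i^\ell\|\triangle\overline{\bw}_i^\ell\|^2$. You instead bound the duals first via $\bpi_i^\ell=-\bg_i^\ell-\rho_i\Q_i^\ell\triangle\overline{\bw}_i^\ell$ (which the paper only exploits later, in Part~2) and then get $\|\triangle\overline{\bw}_i^\ell\|=\|\bpi_i^\ell-\bpi_i^{\ell-1}\|/\sigma_i^\ell\leq 2C/\sigma_i^\ell\to 0$ in one line; this is shorter and avoids the recursion entirely.

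Part~2, however, has a genuine gap. Your displayed equation $\lambda\bw^{\ell+1}+\sum_i\alpha_i\sigma_i^\ell\triangle\bw^{\ell+1}=\sum_i\alpha_i(2\bpi_i^\ell-\bpi_i^{\ell-1})$ and your ``bounded re-expression'' of the coupled term via \eqref{sub-w-mini} are both algebraic rearrangements of the \emph{same} identity $(\sum_i\alpha_i\sigma_i^\ell+\lambda)\bw^{\ell+1}=\sum_i\alpha_i(\sigma_i^\ell\bw_i^\ell+\bpi_i^\ell)$. Writing $S=\sum_i\alpha_i\sigma_i^\ell$ and $A=\sum_i\alpha_i(2\bpi_i^\ell-\bpi_i^{\ell-1})$, your two relations are $\lambda\bw^{\ell+1}+S\triangle\bw^{\ell+1}=A$ and $S\triangle\bw^{\ell+1}=\tfrac{S}{S+\lambda}(A-\lambda\bw^\ell)$; substituting the second into the first just returns $(S+\lambda)\bw^{\ell+1}=A+S\bw^\ell$, i.e.\ \eqref{sub-w-mini} again. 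After taking expectations and limits the promised cancellation collapses to a tautology, not to $\E[\nabla F_\lambda(\bw^\infty)]=0$. Stationarity cannot be extracted from the $\bw$-update alone. The paper instead keeps the two optimality conditions \eqref{optimality-condition-w} and \eqref{optimality-condition} separate: along a subsequence where the full triple $(\bw^\ell,\W^\ell,\P^\ell)$ converges it passes to the limit in \eqref{optimality-condition-w} to obtain $\lambda\bw^\infty=\sum_i\alpha_i\bpi_i^\infty$, and takes expectation in \eqref{optimality-condition} (using $\triangle\overline{\bw}_i^\ell\to 0$) to obtain $\bpi_i^\infty+\nabla F_i(\bw^\infty)=0$; combining these two yields the claim.
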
 
The proof of Theorem \ref{main-convergence} is given in Appendix \ref{app:proof-theorem31}. To ensure the convergence results,  initial value $\bsi^0$ is selected according to (\ref{choice-of-sigma}), which involves a hyperparameter $\delta$. If a lower bound $\underline{F}$ of $\min_{\bw}\sum_{i=1}^m \alpha_i F_i(\bw)$ is known, then an upper bound $\overline{\delta}$ for $\delta$ can be estimated from \eqref{compact-set} and \eqref{define-delta} by substituting $F_i(\bw)$ with $\underline{F}$. In practice, particularly in deep learning, many widely used loss functions, such as mean squared error and cross-entropy, yield non-negative values.  This observation allows us to set the lower bound as $\underline{F} = 0$. Once   $\overline{\delta}$ is estimated, it can be used in \eqref{choice-of-sigma} to select $\bsi^0$, without affecting the convergence guarantees. However, it is worth emphasizing that (\ref{choice-of-sigma}) is a sufficient but not necessary condition. Therefore, in practice, it is not essential to enforce this condition strictly when initializing $\bsi^0$ in numerical experiments.


\subsection{Complexity analysis}
Besides the convergence established above, the algorithm exhibits the following rate of convergence under the same assumption and parameter setup.
\begin{theorem}\label{main-convergence-rate-eps} Let $\{(\bw^\ell,\W^\ell,\P^\ell)\}$ be the sequence generated by Algorithm \ref{algorithm-ADMM-mini} with $\bsi^0$  chosen as (\ref{choice-of-sigma}). Let $\bw^\infty$ be the limit of sequence  $\{\bw^\ell\}$. Then there is a constant $C_1>0$ such that
\begin{eqnarray} \label{rate-w} 
\max\left\{ \left\|  {\bw}^{\ell} - \bw^\infty  \right\|, ~\left\|  {\bw}_i^{\ell} - \bw^\infty  \right\|, ~\left\|\E \bpi_{i}^{\ell} +    \nabla F_i(\bw^\infty)  \right\|,~\forall~i\in[m] \right\} \leq C_1\gamma^\ell
\end{eqnarray} 
and a constant $C_2>0$ such that
\begin{eqnarray} \label{rate-F} 
\max\left\{   F_\lambda( {\bw}^{\ell}),~  \L^{\ell}  ,~ \widetilde{\L}^{\ell}  \right\} - F_\lambda( {\bw}^{\infty})\leq C_2\gamma^\ell. 
\end{eqnarray} 
\end{theorem}
The proof of Theorem \ref{main-convergence-rate-eps} is given in Appendix \ref{app:proof-theorem32}.  This theorem means that  $\{(\bw^\ell,\W^\ell,\E\P^\ell)\}$ converges to its limit in a linear rate.   To achieve such a result, we only assume Assumption \ref{assumption} without imposing any other commonly used assumptions, such as those presented in Table \ref{tab:comp-assumps}. 
 To further see the optimality of limiting point $\bw^{\infty}$ for problem \eqref{opt-prob-distribute}, we need to specify particular $\gamma_i$. In particular, without loss of generality, choose $ \gamma_i\equiv \gamma\in[3/4,1)$ for all $i\in[m]$ to satisfy that
\begin{equation}\label{condition-gamma-K}
\lim_{T\to\infty} \frac{1-\gamma}{1-\gamma^T} = 0.
\end{equation}
This is a mild condition because given $T\geq2$,  many choices of $\gamma$ satisfy this condition, such as
\begin{align*}
\gamma&=1 - T^{-a}, \qquad\forall~ a>{\rm log}_T(4),\\
\gamma&=1 - c^{-T},\qquad~\forall~ c>4^{1/T}.
\end{align*}
\begin{theorem}\label{main-convergence-rate-gradient} Let $\{(\bw^\ell,\W^\ell,\P^\ell)\}$ be the sequence generated by Algorithm \ref{algorithm-ADMM-mini} and $T\geq2$ be the total number of iterations. Choose $\bsi^0$ to satisfy (\ref{choice-of-sigma}) and $\gamma$ to satisfy  (\ref{condition-gamma-K}). Let $\bw^\infty$ be the limit of sequence  $\{\bw^\ell\}$. Then  
\begin{eqnarray} \label{rate-F} 
\E \|\nabla F_\lambda( {\bw}^{\infty})\|^2 =0,\quad \E \|\nabla F_\lambda( {\bw}^{T})\|^2 =  O(\gamma^T). 
\end{eqnarray} 
\end{theorem}
The proof of Theorem \ref{main-convergence-rate-gradient} is given in Appendix \ref{app:proof-theorem33}.   It complements Theorems~\ref{main-convergence} and~\ref{main-convergence-rate-eps}  by providing a first-order stationarity characterization.
Specifically, the first equality shows that the limit point \( {\bw}^{\infty}\) is a stationary point in the \(L^2\)-sense for problem \eqref{opt-prob-distribute}, while the second equality further shows that the global gradient norm at \({\bw}^{T}\) decays linearly at the rate \(O(\gamma^{T})\).
Thus, the convergence of PISA is not only to a consensus limit, but also to a first-order stationary point of the global objective.

\section{Precondition Specification}\label{sec:pre-cond}
In this section, we explore the preconditioning matrix, namely, matrix $\Q_i^\ell$. A simple and computationally efficient choice is to set $\Q_i^\ell={\bf I}$,   which enables fast computation of updating  $\bw_{i}^{\ell+1}$ via \eqref{sub-wbn-mini}. However, this choice is too simple to extract useful information about $F_i$. Therefore, several alternatives can be adopted to set $\Q_i^\ell$.

\subsection{Second-order 	Information}
Second-order optimization methods, such as Newton-type and trust region methods, are known to enhance numerical performance by leveraging second-order information, the (generalized) Hessian. For instance, if each function $F_{i}$ is twice continuously differentiable, then one can set  \begin{eqnarray}\label{Q-newton}{\Q_i^{\ell+1}=\nabla^2 F_{i}(\bw^{\ell+1};\B_i^{\ell+1}),}\end{eqnarray}
 where $\nabla^2 F_{i}(\bw^{\ell+1};\B_i^{\ell+1})$ represents the Hessian of $F_{i}(\cdot;\B_i^{\ell+1})$ at $\bw^{\ell+1}$. With this choice, subproblem \eqref{sub-wbn} becomes closely related to second-order methods, and the update takes the form 
  \begin{eqnarray*}\label{sub-wbn-newton}
\begin{aligned}
\bw_{i}^{\ell+1} =\bw^{\ell+1} - \Big(\sigma_i^{\ell+1}  \I + \rho_i \nabla^2 F_{i}(\bw^{\ell+1};\B_i^{\ell+1}) \Big)^{-1}  \Big(  \bpi_{i}^{\ell} +   \bg_{i}^{\ell+1}  \Big).
\end{aligned}
\end{eqnarray*}
This update corresponds to a Levenberg-Marquardt step \cite{levenberg1944method, marquardt1963algorithm}  or a regularized Newton step \cite{li2004regularized, polyak2009regularized}  when  ${\sigma_i^{\ell+1}>0}$, and reduces to the classical Newton step if ${\sigma_i^{\ell+1}=0}$. While incorporating the Hessian can improve performance in terms of iteration count and solution quality, it often leads to significantly high computational complexity. To mitigate this, some other effective approaches exploit second-moment derived from historical updates to construct the preconditioning matrices.

\subsection{Second Moment}\label{sec:SISA}
We note that the second moment to determine an adaptive learning rate enables the improvements of the learning performance of several popular algorithms, such as RMSProp \cite{tieleman2012lecture} and   
Adam  \cite{kingma2014adam}. Motivated by this, we specify preconditioning matrix by using the second moment as follows,
\begin{equation}
 \label{fact-4} 
\Q_i^{\ell+1} = {\rm Diag}\left(\sqrt{\bm_i^{\ell+1}}\right),
\end{equation} 
where ${\rm Diag}(\bm)$ is the diagonal matrix with the diagonal entries formed by $\bm$ and $\bm_i^{\ell+1}$ can be chosen flexibly as long as it satisfies that $\|\bm_i^{\ell+1}\|_\infty\leq \eta_i^2$. Here, $\|\bm\|_\infty$ is the infinity norm of $\bm$. We can set $\bm_i^{\ell+1}$ as follows 
\begin{equation}
 \label{set-second-moment-m} \bm_i^{\ell+1} = \min\left\{  \widetilde{\bm}_i^{\ell+1}, ~ \eta_i^2  \textbf{1} \right\},
 \end{equation} 
where $\widetilde{\bm}_i^{\ell+1}$ can be updated by
\begin{equation}
\label{second_moment_update}
\begin{aligned}
&\text{Scheme I:}\qquad&&\widetilde{\bm}_i^{\ell+1}=  \widetilde{\bm}_i^{\ell} + \left( \bpi_{i}^{\ell} +   \bg_i^{\ell+1}   \right) \odot \left(  \bpi_{i}^{\ell} +   \bg_i^{\ell+1}  \right),\\
&\text{Scheme II:}\qquad&&\widetilde{\bm}_i^{\ell+1}=  \beta_i \widetilde{\bm}_i^{\ell} +  (1-\beta_i)  \left( \bpi_{i}^{\ell} +   \bg_i^{\ell+1}   \Big) \odot \Big(  \bpi_{i}^{\ell} +   \bg_i^{\ell+1}  \right),\\
&\text{Scheme III:}\qquad &&\mathbf{n}_i^{\ell+1}=  \beta_i \mathbf{n}_i^{\ell} +  (1-\beta_i)  \left( \bpi_{i}^{\ell} +   \bg_i^{\ell+1}   \Big) \odot \Big(  \bpi_{i}^{\ell} +   \bg_i^{\ell+1}  \right),\\
&&&\widetilde{\bm}_i^{\ell+1}=\mathbf{n}_i^{\ell+1}/(1- \beta_i^{\ell+1}),
\end{aligned}
\end{equation}
where  $\widetilde{\bm}_i^{0}$ and $\mathbf{n}_i^{0}$ are given,  $\beta_i\in(0,1)$,  and $\beta_i^\ell$ stands for power $\ell$ of $\beta_i$.
These three schemes resemble the ones used by  AdaGrad   \cite{duchi2011adaptive}, RMSProp   \cite{tieleman2012lecture}, and   
Adam  \cite{kingma2014adam}, respectively.    Employing \eqref{fact-4} into  Algorithm \ref{algorithm-ADMM-mini} gives rise to Algorithm \ref{algorithm-ADMM-SM}. We term it SISA, an abbreviation for the second moment-based inexact SADMM.   Compared to PISA in Algorithm \ref{algorithm-ADMM-mini}, SISA admits three advantages. 
\begin{itemize}[leftmargin=17pt]
\item[i)]
It is capable of incorporating various schemes of the second moment, which may enhance the numerical performance of SISA significantly. 
\item[ii)] One can easily check that $\eta_i \I \succeq \Q_i^{\ell+1} \succeq 0 $ for each batch $i\in[m]$ and all $\ell\geq1$. Therefore,  \eqref{fact-4} enables us to preserve the convergence property as follows.
\begin{theorem}\label{main-convergence-sm} Let $\{(\bw^\ell,\W^\ell,\P^\ell)\}$ be the sequence generated by Algorithm \ref{algorithm-ADMM-SM} with $\bsi^0$ chosen as (\ref{choice-of-sigma}). Then under Assumption \ref{assumption}, all statements in Theorems \ref{main-convergence} and \ref{main-convergence-rate-eps} are valid.
\end{theorem}
\item[iii)] Such a choice of $\Q_i^{\ell+1}$ enables the fast computation compared to update $\bw_i^{\ell+1}$ by \eqref{sub-wbn}. In fact, since operation ${\bf u}/{\bf v}$ denotes element-wise division,  the complexity of computing \eqref{wi-update-PISA-sm} is $O(p)$, where $p$ is the dimension of $\bw_i^{\ell+1}$, whereas the complexity of computing \eqref{sub-wbn-mini} is   $O(p^3)$. 
\end{itemize}

\begin{algorithm}[!t]
    \SetAlgoLined
  
Divide  $\D$ into $m$ disjoint  batches {$\{\D_1,\D_2,\ldots,\D_m\}$} and calculate $\alpha_i$ by (\ref{def-Fbn}). 

Initialize $\bw^0=\bw_i^0=\bpi_i^0=0$,  $\gamma_i\in[3/4,1),$  and $(\sigma_i^0,\eta_i,\rho_i)>0$  for each $i\in[m]$.  


\For{$\ell=0,1,2,\ldots$}{
 \begin{eqnarray*}  
\bw^{\ell+1}  =  \frac{\sum_{i=1}^m\alpha_{i} \left(\sigma_i^\ell\bw_{i}^{\ell}+\bpi_{i}^{\ell}  \right) }{\sum_{i=1}^m\alpha_{i}  \sigma_i^\ell+\lambda}.
 \end{eqnarray*}     
\For{$i=1,2,\ldots,m$}{  
 \begin{align}
 &\text{Randomly draw a mini-batch  $ {\B}_{i}^{\ell+1}\in\D_{i}$ and calculate $\bg_i^{\ell+1}= \nabla F_{i}( \bw^{\ell+1}; {\B}_{i}^{\ell+1})$}.\nonumber\\[1ex] 
  &\text{Choose $\bm_i^{\ell+1}$ to satisfy $\|\bm_i^{\ell+1}\|_\infty\leq \eta_i^2$}.~~~~~\nonumber\\[1ex] 
  &  \sigma_i^{\ell+1}  = \sigma_i^{\ell}/\gamma_i. \nonumber\\[1ex]
 \label{wi-update-PISA-sm}
 & \bw_{i}^{\ell+1} = \bw^{\ell+1} -   \frac{\bpi_{i}^{\ell} +  \bg_i^{\ell+1}}{\sigma_i^{\ell+1}  + \rho_i \sqrt{\bm_i^{\ell+1}}  }. \\ 
&\bpi_i^{\ell+1} =  \bpi_{i}^{\ell} +\sigma_i^{\ell+1}(\bw_{i}^{\ell+1}- \bw^{\ell+1}).\nonumber
\end{align}
 } 
} 
\caption{\textbf{S}econd moment-based \textbf{I}nexact \textbf{S}tochastic \textbf{A}DMM (SISA). }\label{algorithm-ADMM-SM}
\end{algorithm} 

\begin{algorithm}[!t]
    \SetAlgoLined
  
Divide  $\D$ into $m$ disjoint  batches {$\{\D_1,\D_2,\ldots,\D_m\}$} and calculate $\alpha_i$ by (\ref{def-Fbn}). 

Initialize $\bw^0=\bw_i^0=\bpi_i^0={\bf b}_i^0=0$,  $\gamma_i\in[3/4,1),  \epsilon_i\in(0,1)$, and $ (\sigma_i^0, \rho_i,\mu_i)>0$  for each $i\in[m]$.  


\For{$\ell=0,1,2,\ldots$}{
 \begin{eqnarray*}  
\bw^{\ell+1}  =  \frac{\sum_{i=1}^m\alpha_{i} \left(\sigma_i^\ell\bw_{i}^{\ell}+\bpi_{i}^{\ell}  \right) }{\sum_{i=1}^m\alpha_{i}  \sigma_i^\ell+\lambda}.
 \end{eqnarray*}     
\For{$i=1,2,\ldots,m$}{  
 \begin{align}
 &\text{Randomly draw a mini-batch  $ {\B}_{i}^{\ell+1}\in\D_{i}$ and calculate $\bg_i^{\ell+1}= \nabla F_{i}( \bw^{\ell+1}; {\B}_{i}^{\ell+1})$}.\nonumber\\[1ex] 
   &{\bf b}_i^{\ell+1}=\mu_i{\bf b}^\ell_i + \bg_i^{\ell+1}.~~~~~\nonumber\\[1ex] 
  &{\bf o}_i^{\ell+1}= \texttt{NewtonSchulz}({\bf b}^{\ell+1}_i).~~~~~\nonumber\\[1ex] 
  &  \sigma_i^{\ell+1}  = \sigma_i^{\ell}/\gamma_i. \nonumber\\[1ex]
 \label{sub-wbn-NS-m}
 & \bw_{i}^{\ell+1} =\bw^{\ell+1} - \dfrac{  \bpi_{i}^{\ell}+{\bf o}_i^{\ell+1} + \epsilon_i^{\ell+1} {\bf v}_i^{\ell+1} }{\sigma_i^{\ell+1}  + \rho_i \sqrt{ {\bf m}_i^{\ell+1}}} . \\ 
&\bpi_i^{\ell+1} =  \bpi_{i}^{\ell} +\sigma_i^{\ell+1}(\bw_{i}^{\ell+1}- \bw^{\ell+1}).\nonumber
\end{align}
 } 
} 
\caption{\textbf{N}ewton-\textbf{S}chulz-based \textbf{I}nexact \textbf{S}tochastic \textbf{A}DMM (NSISA). }\label{algorithm-ADMM-NS}
\end{algorithm} 

\subsection{Orthogonalized Momentum by Newton-Schulz Iterations}
Recently, the authors of \cite{jordan2024muon} proposed an algorithm called Muon, which orthogonalizes momentum using Newton-Schulz iterations. This approach has shown promising results in fine-tuning LLMs, outperforming many established optimizers. The underlying philosophy of Muon can also inform the design of the preconditioning matrix. Specifically, we consider the two-dimensional case, namely, the trainable variable $\bw$ is a matrix. Then subproblem \eqref{sub-wbn} in a vector form turns to  
  \begin{eqnarray}\label{sub-wbn-NS}
\begin{aligned}
{\rm vec}(\bw_{i}^{\ell+1}) ={\rm vec}(\bw^{\ell+1}) - \Big(\sigma_i^{\ell+1}  \I + \rho_i \Q_i^{\ell+1} \Big)^{-1}  \Big(  {\rm vec}(\bpi_{i}^{\ell}) + {\rm vec}(\bg_i^{\ell+1})  \Big),
\end{aligned}
\end{eqnarray}
where ${\rm vec}(\bw)$ denotes the column-wise vectorization of  matrix $\bw$. Now, initialize ${\bf b}^0_i$ for all ${i\in[m]}$ and ${\mu>0}$, update momentum by ${{\bf b}_i^{\ell+1}=\mu{\bf b}^\ell_i + \bg_i^{\ell+1}}$. Let ${\bf b}_i^{\ell+1} ={\bf U}_i^{\ell+1}{\boldsymbol \Lambda}_i^{\ell+1}({\bf V}_i^{\ell+1})^\top$ be the singular value decomposition of ${\bf b}_i^{\ell+1}$, where ${\boldsymbol \Lambda}_i^{\ell+1}$ is an diagonal matrix and ${\bf U}_i^{\ell+1}$ and ${\bf V}_i^{\ell+1}$ are two orthogonal matrices. Compute ${\bf o}_i^{\ell+1} = {\bf U}_i^{\ell+1} ({\bf V}_i^{\ell+1})^\top$ and ${\bf p}_i^{\ell+1}$  by
  \begin{eqnarray*}\label{sub-wbn-NS-P}
  \begin{aligned}
{\bf p}_i^\ell =
 \dfrac{\Big(\sigma_i^{\ell+1} + \rho_i \sqrt{{\bf m}_i^{\ell+1}}\Big)\odot \Big(\bpi_{i}^{\ell}+{\bf g}_i^{\ell+1} \Big) }{\rho_i\Big( \bpi_{i}^{\ell}+{\bf o}_i^{\ell+1} + \epsilon_i^{\ell+1}{\bf v}_i^{\ell+1} \Big) }-\dfrac{\sigma_i^{\ell+1}}{\rho_i}, 
\end{aligned}
\end{eqnarray*}
where ${\bf m}_i^{\ell+1}$ can be the second moment (e.g., ${\bf m}_i^{\ell+1}=(\bpi_{i}^{\ell}+{\bf o}_i^{\ell+1})\odot(\bpi_{i}^{\ell}+{\bf o}_i^{\ell+1})$ is used in the numerical experiment), ${\epsilon_i\in(0,1)}$ (here, $\epsilon_i^\ell$ stands for power $\ell$ of $\epsilon_i$), and ${\bf v}_i^{\ell+1}$ is a matrix with $(k,j)$th element computed by  $({\bf v}_i^{\ell+1})_{kj}=1$ if $(\bpi_{i}^{\ell}+{\bf o}_i^{\ell+1})_{kj}= 0$ and $({\bf v}_i^{\ell+1})_{kj}=0$ otherwise. 
Then we set the preconditioning matrix by $$\Q_i^{\ell+1}={\rm Diag}\left({\rm vec}({\bf p}_i^{\ell+1})\right).$$
Substituting above choice into \eqref{sub-wbn-NS} derives \eqref{sub-wbn-NS-m}. 
The idea of using \eqref{sub-wbn-NS-m} is inspired by \cite{jordan2024muon}, where Newton–Schulz orthogonalization \cite{kovarik1970some, bjorck1971iterative} is employed to efficiently approximate ${\bf o}_i^{\ell+1}$. Incorporating these steps into Algorithm \ref{algorithm-ADMM-mini} leads to Algorithm \ref{algorithm-ADMM-NS}, which we refer to as NSISA, short for Newton-Schulz-based Inexact SADMM. The implementation of \texttt{NewtonSchulz}(${\bf b}$) is provided in \cite{jordan2024muon}. Below is the convergence result of NSISA.
\begin{theorem}\label{main-convergence-ns} Let $\{(\bw^\ell,\W^\ell,\P^\ell)\}$ be the sequence generated by Algorithm \ref{algorithm-ADMM-NS} with $\bsi^0$ chosen as (\ref{choice-of-sigma}). If Assumption \ref{assumption} holds and $({\bf p}_i^\ell)_{kj}\in(0,\eta_i)$ for any $(k,j)$ and $\ell\geq0$, all statements in Theorems \ref{main-convergence} and \ref{main-convergence-rate-eps} are valid.
\end{theorem}

 \section{Numerical Experiments}
\label{sec:numerical}


This section evaluates the performance of SISA and NSISA in comparison with various established optimizers to process several deep models: VMs, LLMs, RLMs, GANs, and RNNs.  All LLM-targeted experiments are conducted on four NVIDIA H100-80GB GPUs, while the remaining experiments are run on a single such GPU. The source codes are available at \url{https://github.com/Tracy-Wang7/PISA}. Due to space limitations, details on hyperparameter setup,  effects of key hyperparameters, and experiments on RLMs and RNNs are provided in Appendix  \ref{appedix-A}.  

\begin{table}[!b]
\renewcommand{\arraystretch}{1.1}\addtolength{\tabcolsep}{-1.5pt}
\begin{center}
\caption{Testing accuracy under data heterogeneity with skewed label distributions.} 
\label{table:heterogeneity}
\begin{tabular}{lcccccc}
\hline
Dataset& Partition      & Fedavg& Fedprox&Fednova&Scaffold& SISA \\ \hline
\multirow{3}{*}{MNIST}&1-Label & $53.19 \pm 0.21$ & $54.33 \pm 0.67$& $49.00 \pm 0.42$& $47.04 \pm 0.02$ & $94.97 \pm 0.01$\\ 
&2-Label                & $90.70 \pm 0.02$& $90.75 \pm 0.03$& $90.40 \pm 0.05$& $87.43 \pm 0.08$ & $96.14 \pm 0.00$ \\ 
&3-Label                & $95.11 \pm 0.01$& $94.73 \pm 0.02$& $94.37 \pm 0.01$& $93.19 \pm 0.03$ & $96.21 \pm 0.01$ \\\hline
\multirow{3}{*}{CIFAR10}&1-Label & $10.02 \pm 0.00$ & $10.00 \pm 0.00$& $10.31 \pm 0.03$& $11.41 \pm 0.03$ & $30.16 \pm 0.00$\\ 
&2-Label                & $13.76 \pm 0.00$& $13.77 \pm 0.00$& $18.02 \pm 0.02$& $12.73 \pm 0.00$ & $30.53 \pm 0.00$ \\ 
&3-Label                & $20.06 \pm 0.01$& $20.03 \pm 0.01$& $21.20 \pm 0.01$& $17.00 \pm 0.00$ & $30.96 \pm 0.00$ \\\hline
\multirow{3}{*}{FMNIST}&1-Label & $48.81 \pm 0.34$ & $47.14 \pm 0.32$& $49.08 \pm 0.34$& $43.93 \pm 0.20$ & $72.85 \pm 0.02$\\ 
&2-Label                & $69.33 \pm 0.01$& $69.29 \pm 0.02$& $69.10 \pm 0.04$& $65.99 \pm 0.06$ & $70.24 \pm 0.05$ \\ 
&3-Label                & $70.48 \pm 0.02$& $69.96 \pm 0.08$& $69.70 \pm 0.08$& $63.64 \pm 0.15$ & $72.46 \pm 0.01$ \\\hline
Adult &1-Label & $76.38^* \pm 0.00$ & $78.30^* \pm 0.03$ & $85.15^* \pm 0.08$ & $80.00^* \pm 0.02$ & $82.21 \pm 0.00$ \\\hline
\end{tabular}
\begin{tablenotes}
$*$ indicates mini-batch training in each local client. {Higher} values indicate better performance.
\end{tablenotes}  
\end{center}
\end{table}
\subsection{Data Heterogeneity}
To evaluate the effectiveness of the proposed algorithms in addressing the challenge of data heterogeneity, we design some experiments within a centralized federated learning (CFL) framework. 
In this setting, $m$ clients collaboratively learn a shared parameter under a central server, with each client $i$ holding a local dataset $\D_i$. Note that our three algorithms are well-suited for CFL tasks.  As shown in Algorithm \ref{algorithm-ADMM-SM}, clients update their local parameters $(\bw_{i}^{\ell+1}, \bpi_i^{\ell+1})$ in parallel, and the server aggregates them to compute global parameter $\bw^{\ell+2}$.

In the experiment, we focus on heterogeneous datasets $\D_1, \D_2,\ldots,\D_m$. The heterogeneity may arise from label distribution skew, feature distribution skew, or quantity skew  \cite{li2022federated}. Empirical evidence has shown that the label distribution skew presents a significantly greater challenge to current FL methods compared to the other two types. Therefore, we adopt this skew as the primary setting for evaluating data heterogeneity in our experiments.

To reflect the fact that most benchmark datasets contain $10$ image classes, we set ${m=10}$.  Four datasets: MNIST, CIFAR10, FMNIST, and Adult are used for the experiment. As shown in Table \ref{table:heterogeneity}, the configuration `$s$-Label' indicates that each client holds data containing $s$ distinct label classes, where $s=1,2,3$. For example, in the `2-Label' setting,  $\D_1$ may include two labels 1 and 2, while $\D_2$ contains two labels 2 and 3.    To ensure better performance, we employed a mini-batch training strategy with multiple local updates before each aggregation step for the five baseline algorithms. In contrast, SISA keeps using one local update per aggregation step and still achieves competitive testing accuracy with far fewer local updates. Notably, on the MNIST dataset under the 1-Label skew setting, the best accuracy achieved by other algorithms is 54.33\%, whereas SISA reaches 94.97\%, demonstrating a significant improvement.

 
%

\begin{figure}[!b]
\centering
\begin{subfigure}{.495 \textwidth}
	\centering
	\includegraphics[width=.975\linewidth]{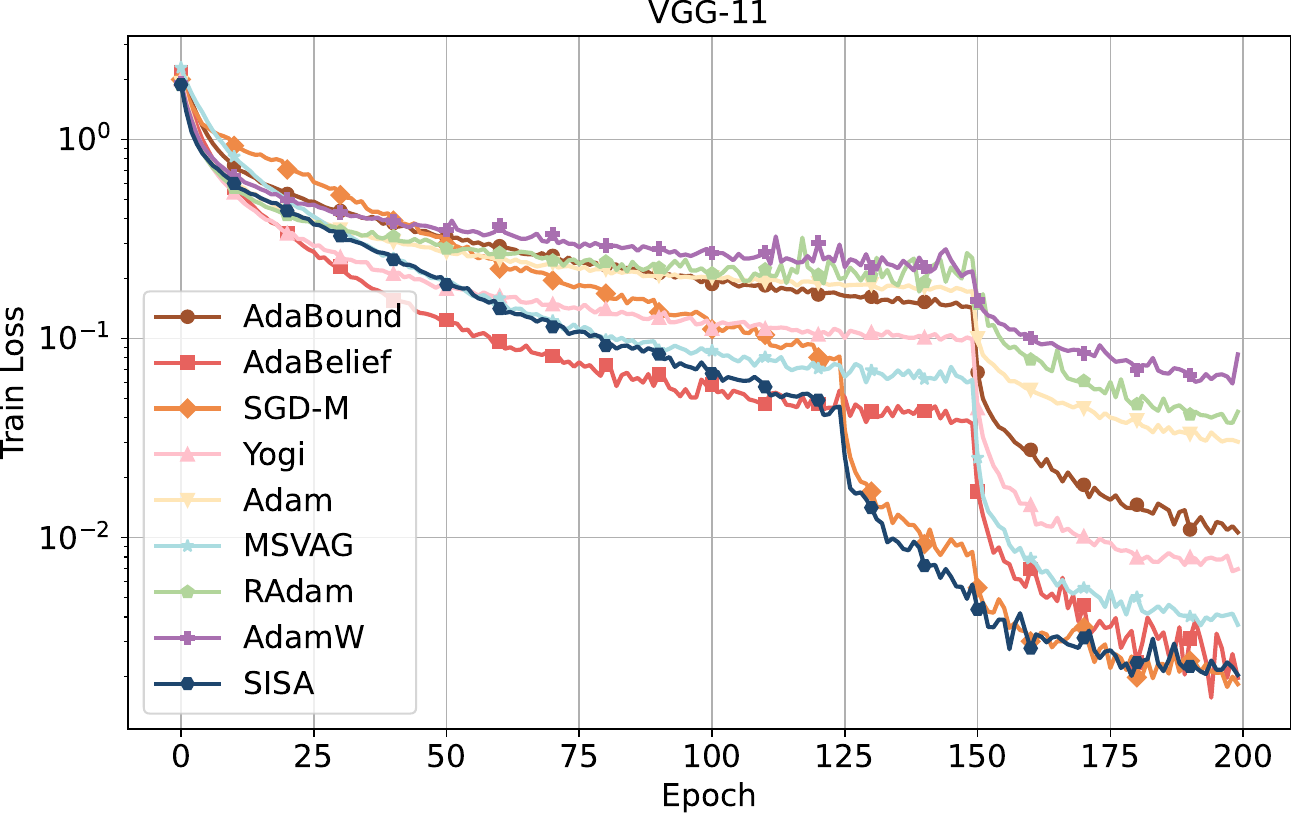} 
\end{subfigure}	 
\begin{subfigure}{.495 \textwidth}
	\centering
	\includegraphics[width=.95\linewidth]{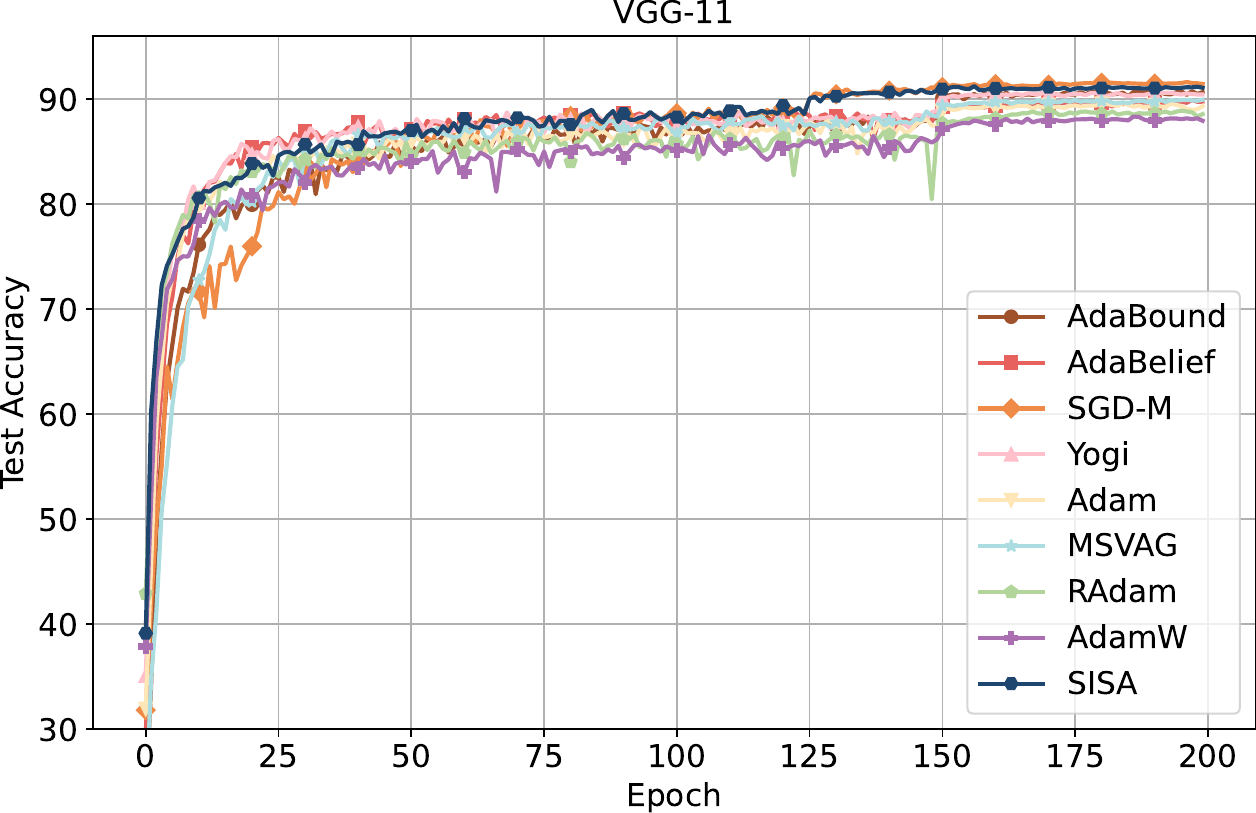} 
\end{subfigure}   \\[2ex]
\begin{subfigure}{.495 \textwidth}
	\centering
	\includegraphics[width=.975\linewidth]{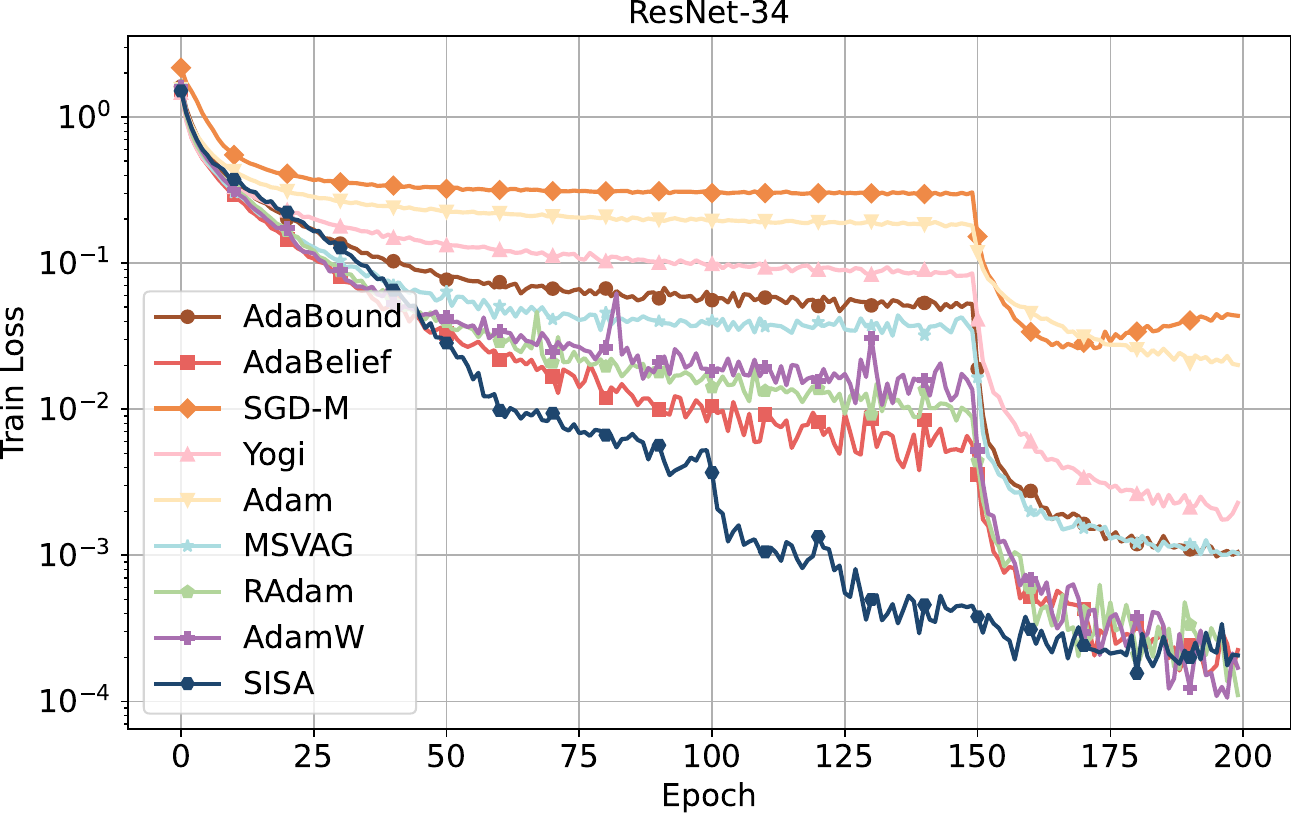} 
\end{subfigure}   
\begin{subfigure}{.495 \textwidth}
	\centering
	\includegraphics[width=.95\linewidth]{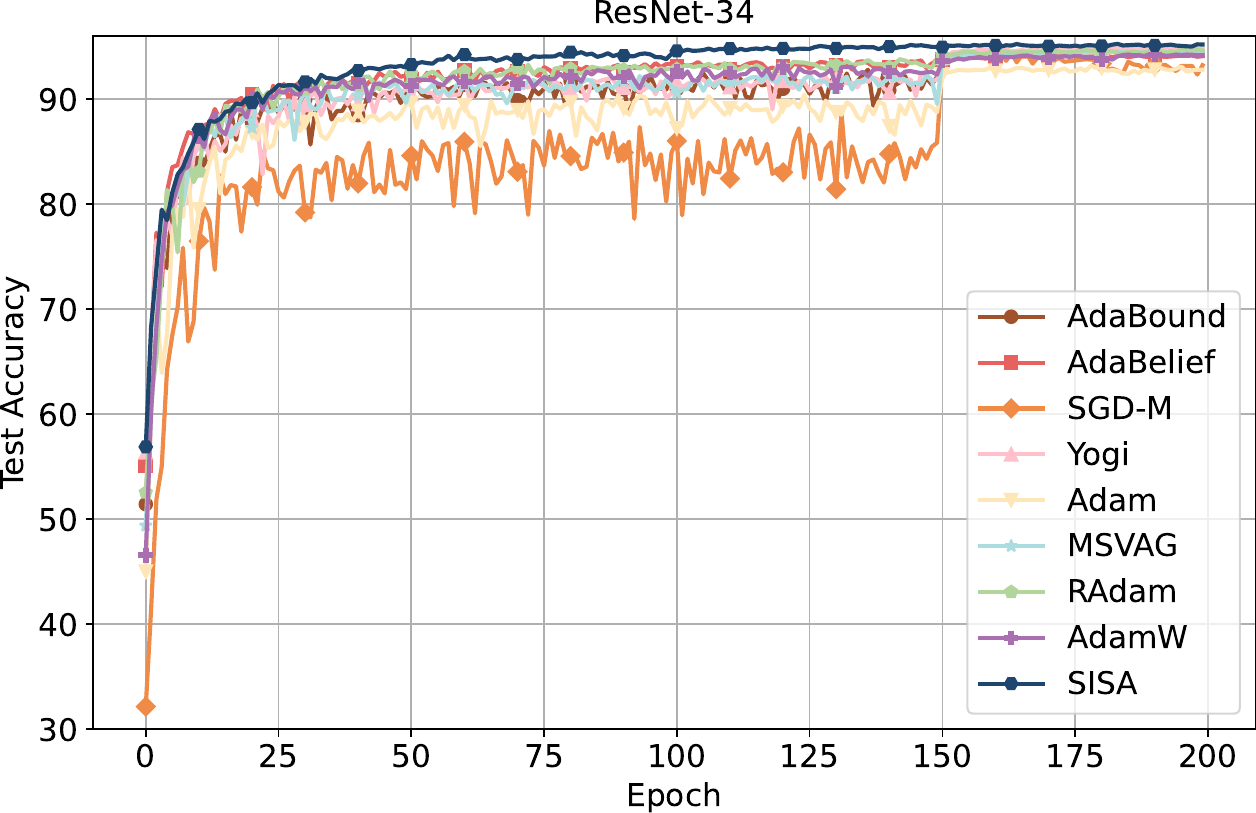} 
\end{subfigure} 
\caption{
Training loss and testing accuracy over epochs for VMs on Cifar-10.\label{fig:cifar}}
\end{figure}

\subsection{Classification by VMs}
We perform classification tasks on two well-known datasets: ImageNet \cite{krizhevsky2012imagenet} and CIFAR-10 \cite{krizhevsky2010cifar}, and evaluate performance based on convergence speed and testing accuracy. Convergence results are presented in Appendix \ref{app:class-VM}. We now focus on testing accuracy, following the experimental setup and hyperparameter configurations from \cite{zhuang2020adabelief}. In this  experiment, a small batch size (e.g., $|\D_i|=128$ for every $i\in[m]$) is used during training.  As SISA updates its hyperparameters frequently,   it uses a different learning rate scheduler from that in \cite{zhuang2020adabelief}. For fair comparison, each baseline optimizer is tested with both its default scheduler and the one employed by SISA, and the best result is reported. On CIFAR-10, we train three models: VGG-11 \cite{simonyan2014very}, ResNet-34 \cite{he2016identity}, and DenseNet-121 \cite{huang2017densely}. As shown in Table \ref{table:cifar_sota}, SISA achieves the highest testing accuracy on ResNet-34 and DenseNet-121, and performs slightly worse than SGD-M on VGG-11.
Additionally, Figure \ref{fig:cifar} illustrates the training loss and testing accuracy over training epochs for VGG-11 and ResNet-34, demonstrating the fast convergence and high accuracy of SISA. 
For ImageNet, we train a ResNet-18 model and report the testing accuracy in Table \ref{table:cifar_sota}. SISA outperforms most adaptive methods but is slightly behind SGD-M and AdaBelief.

 \begin{table}[!th]
 \renewcommand{\arraystretch}{1.0}\addtolength{\tabcolsep}{-4.7pt}
\begin{center}
\caption{Top-1 Acc. (\%) obtained by different algorithms.} 
\label{table:cifar_sota}
\begin{tabular}{lcccccccccccc}
\hline 
Algs.  & SISA & SGD-M & AdaBd & Adam & Radam & MSVAG & AdamW & AdaBf & Yogi &{LAMB}&Nadam\\ 
 Refs. &   & \cite{robbins1951stochastic} & \cite{luo2019adaptive} &  \cite{kingma2014adam} &  \cite{liu2019variance} &  \cite{balles2018dissecting} &  \cite{loshchilov2017decoupled} &  \cite{zhuang2020adabelief} & \cite{zaheer2018adaptive} &\cite{You2020Large} &\cite{dozat2016incorporating} \\
\hline
ResNet-34 & {95.04}& $94.65^*$ & $94.33^\star$ &$94.69^*$&$94.20$ &$94.44^\star$ &$94.28^*$ &$94.11^*$ & $94.52^*$ &$94.01^*$ &--\\
VGG-11 &91.25&  {91.51} & $90.62^\star$ &$88.40^\star$&$89.30^\star$ &$90.24^\star$ &$89.39^\star$ &90.07 & $90.67^\star$& 87.48&--\\
DenseNet-121 & {95.77}& 95.47 & $94.58^\star$ &$93.35^\star$&$94.91^\star$ &$94.81^\star$ &$94.55^\star$ &94.78 & 95.15 &94.53&--\\
ResNet-18&70.03& ${70.23}^\star$ & $68.13^\star$ &$63.79^\star$&$67.62^\star$ &$65.99^\star$ &$67.93^\star$ &$70.08^\star$ &--& $68.46^*$ & $68.82^*$\\
\hline
\end{tabular}
\begin{tablenotes}
$*$ and $\star$ are reported in \cite{Adan24} and \cite{zhuang2020adabelief}, and the remaining are derived by our implementation. {Higher} values indicate better performance. `--' means no results were reported. AdaBd and AdaBf stand for AdaBound and AdaBelief, respectively.
\end{tablenotes} \vspace{-5mm}
\end{center}
\end{table}

\subsection{Training LLMs}
In this subsection, we apply SISA and NSISA to train several GPT2 models \cite{radford2019language} and compare it with advanced optimizers like  AdamW, Muon, Shampoo, SOAP, and Adam-mini that have been proven to be effective in training LLM. The comparison between SISA and AdamW is provided in Appendix \ref{app:LLMs}.   We now compare NSISA with the other four optimizers, with the aim of tuning three  GPT2 models: GPT2-Nano (125M), GPT2-Medium (330M), and GPT2-XL (1.5B). The first model follows the setup described \cite{jordan2024muon}, while the latter two follow the configuration from \cite{zhang2025adammini}. We use the FineWeb dataset, a large collection of high-quality web text, to fine-tune GPT2 models, enabling them to generate more coherent and contextually relevant web-style content. The hyperparameters of NSISA remain consistent across the experiments for the three GPT2 models.

The comparison of different algorithms in terms of the memory overhead is given in Appendix \ref{app:LLMs}. The experimental results are presented in Figure \ref{fig:gpt_new}. We evaluate each optimizer's training performance from two perspectives: the number of training tokens consumed and the wall-clock time required. To reduce wall-clock time, we implemented NSISA using parallel computation. From Figure \ref{fig:gpt_new}, as the number of GPT2 parameters increases from Nano to Medium to XL, the validation loss gap between NSISA and other baseline optimizers widens. In particular, for GPT2-XL, the last figure demonstrated evident advantage of NSISA in terms of the wall-clock time.

\begin{figure}[!t]
\centering
\begin{subfigure}{.495 \textwidth}
	\centering
	\includegraphics[width=.95\linewidth]{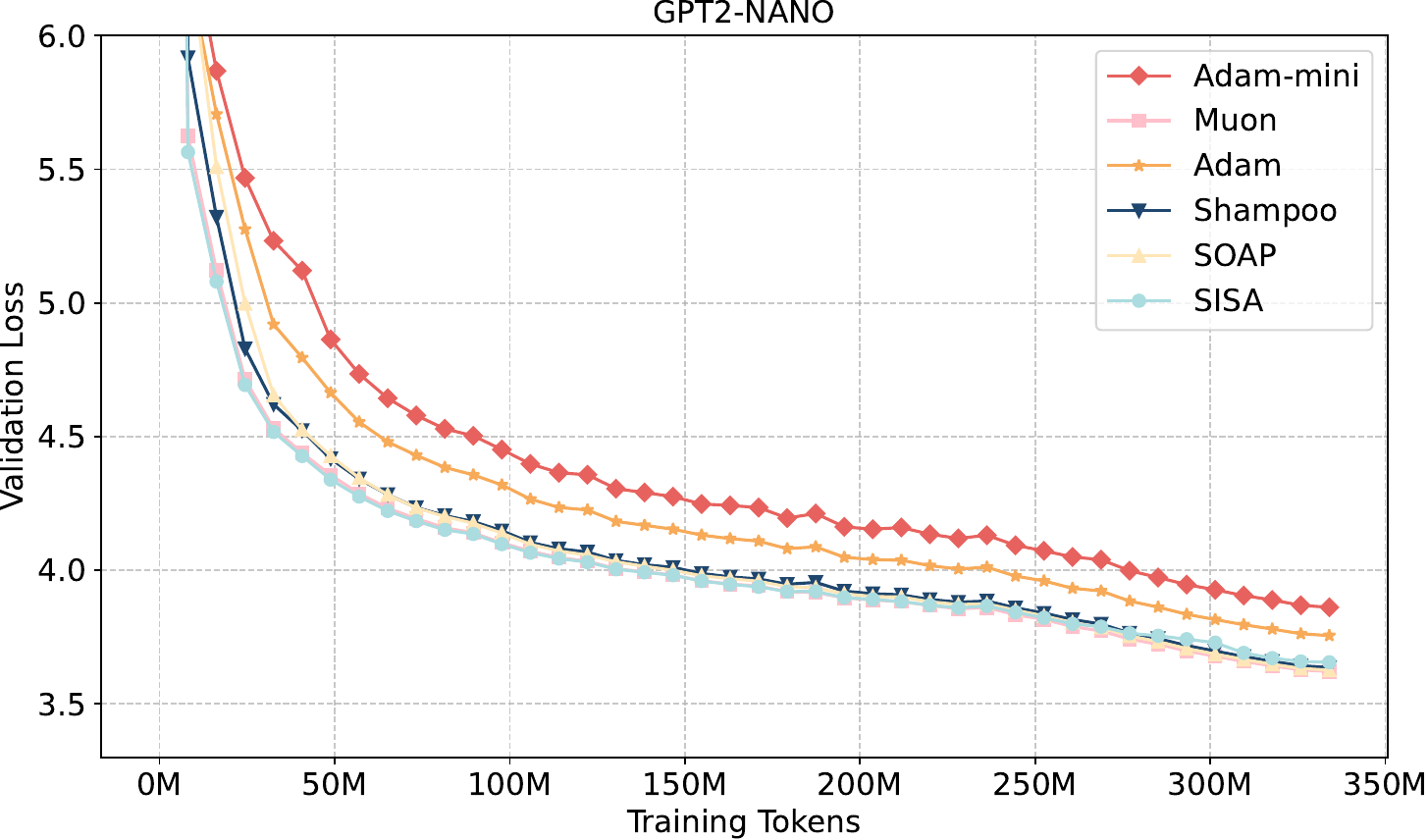} 
\end{subfigure}	 
\begin{subfigure}{.495 \textwidth}
	\centering
	\includegraphics[width=.95\linewidth]{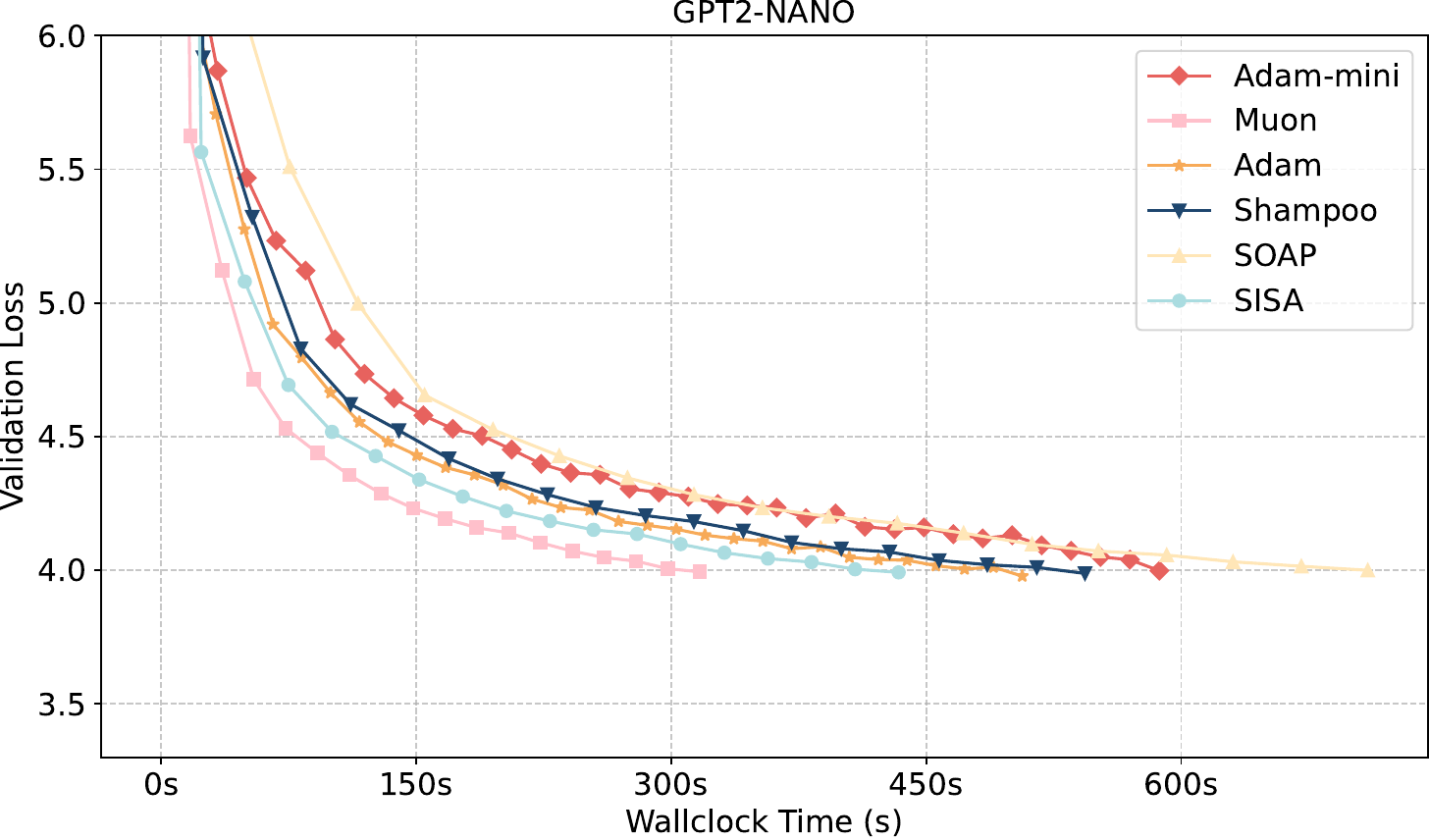} 
\end{subfigure}   \\[2ex]

\begin{subfigure}{.495 \textwidth}
	\centering
	\includegraphics[width=.95\linewidth]{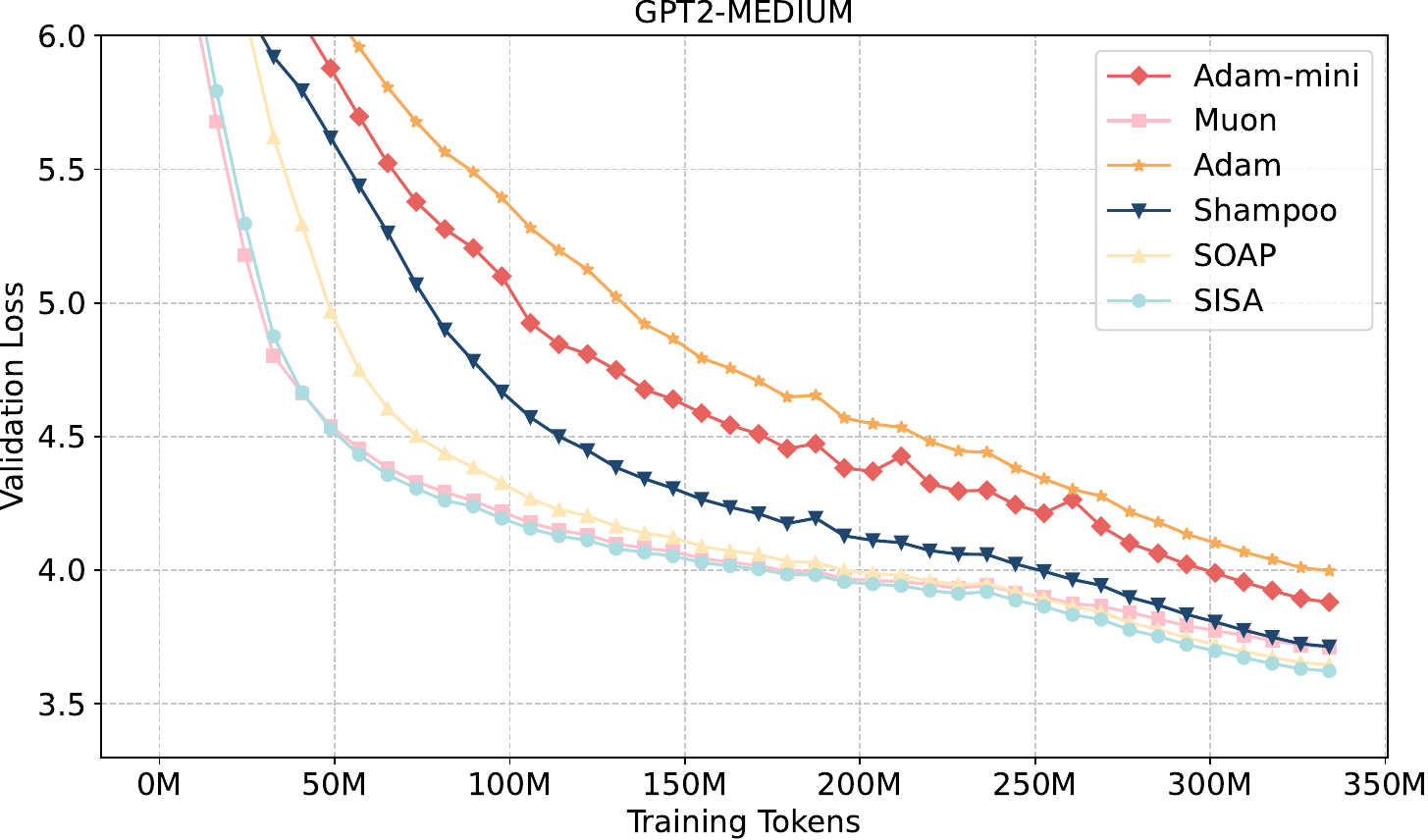} 
\end{subfigure}	 
\begin{subfigure}{.495 \textwidth}
	\centering
	\includegraphics[width=.95\linewidth]{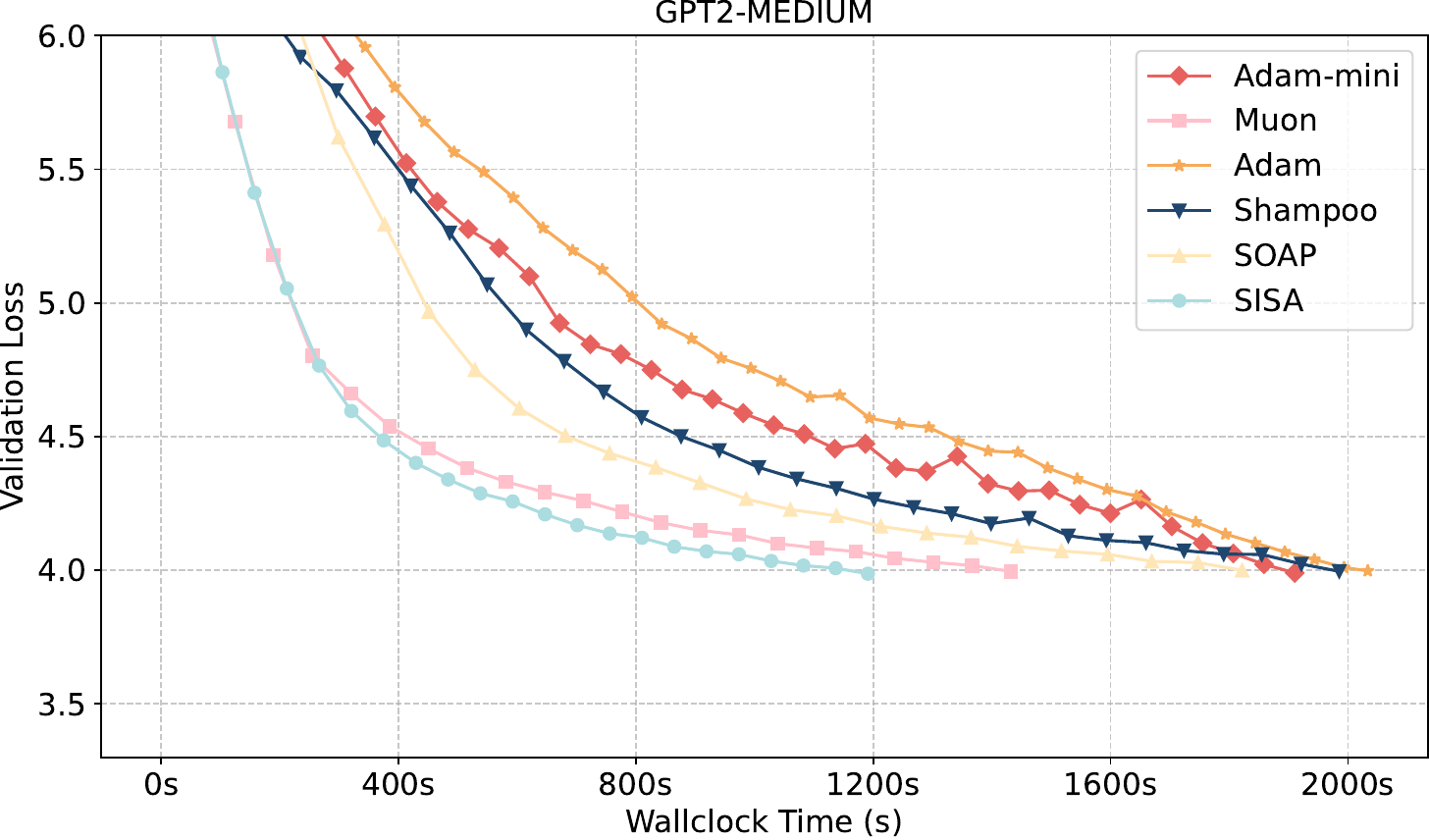} 
\end{subfigure}   \\[2ex]

\begin{subfigure}{.495 \textwidth}
	\centering
	\includegraphics[width=.95\linewidth]{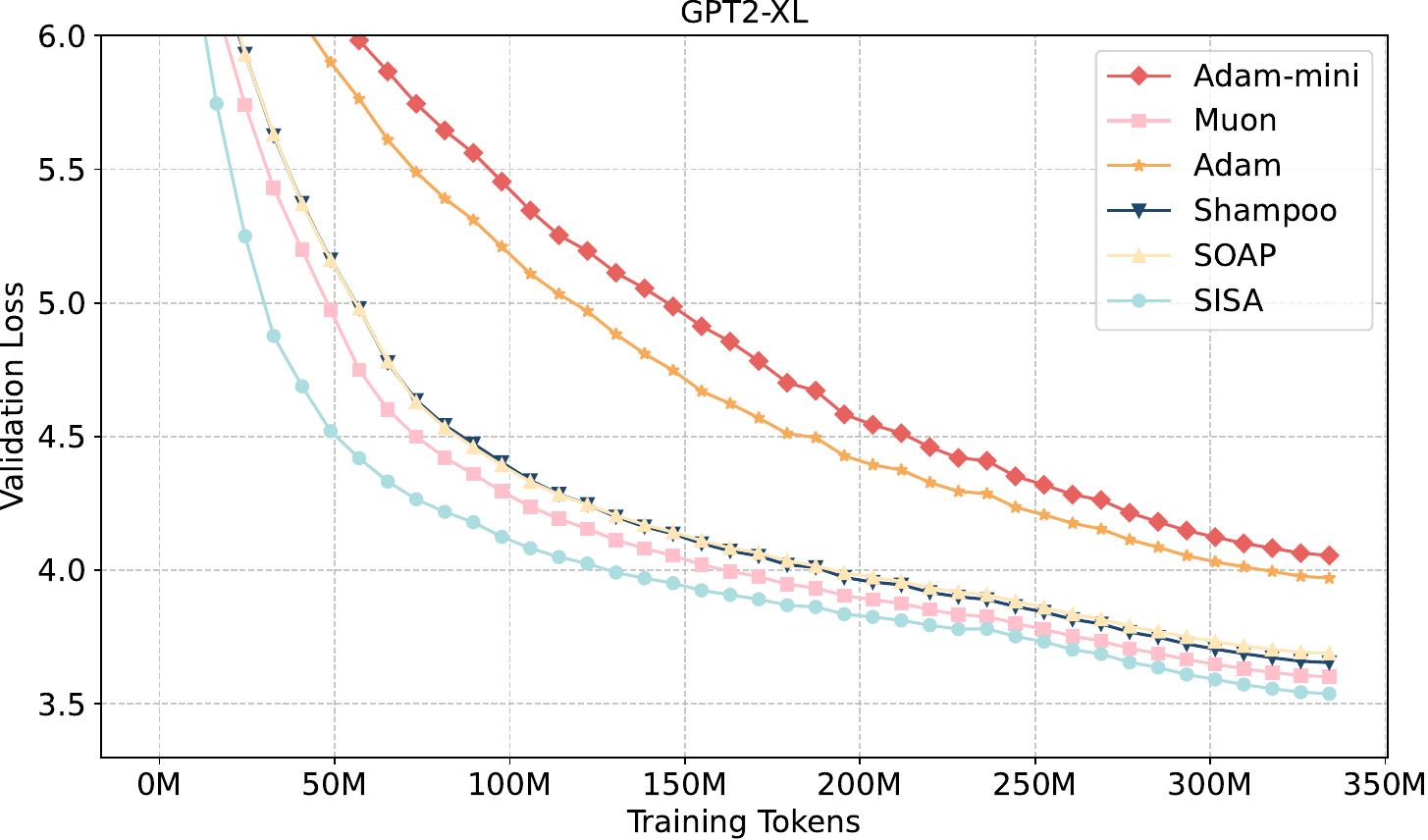} 
\end{subfigure}   
\begin{subfigure}{.495 \textwidth}
	\centering
	\includegraphics[width=.95\linewidth]{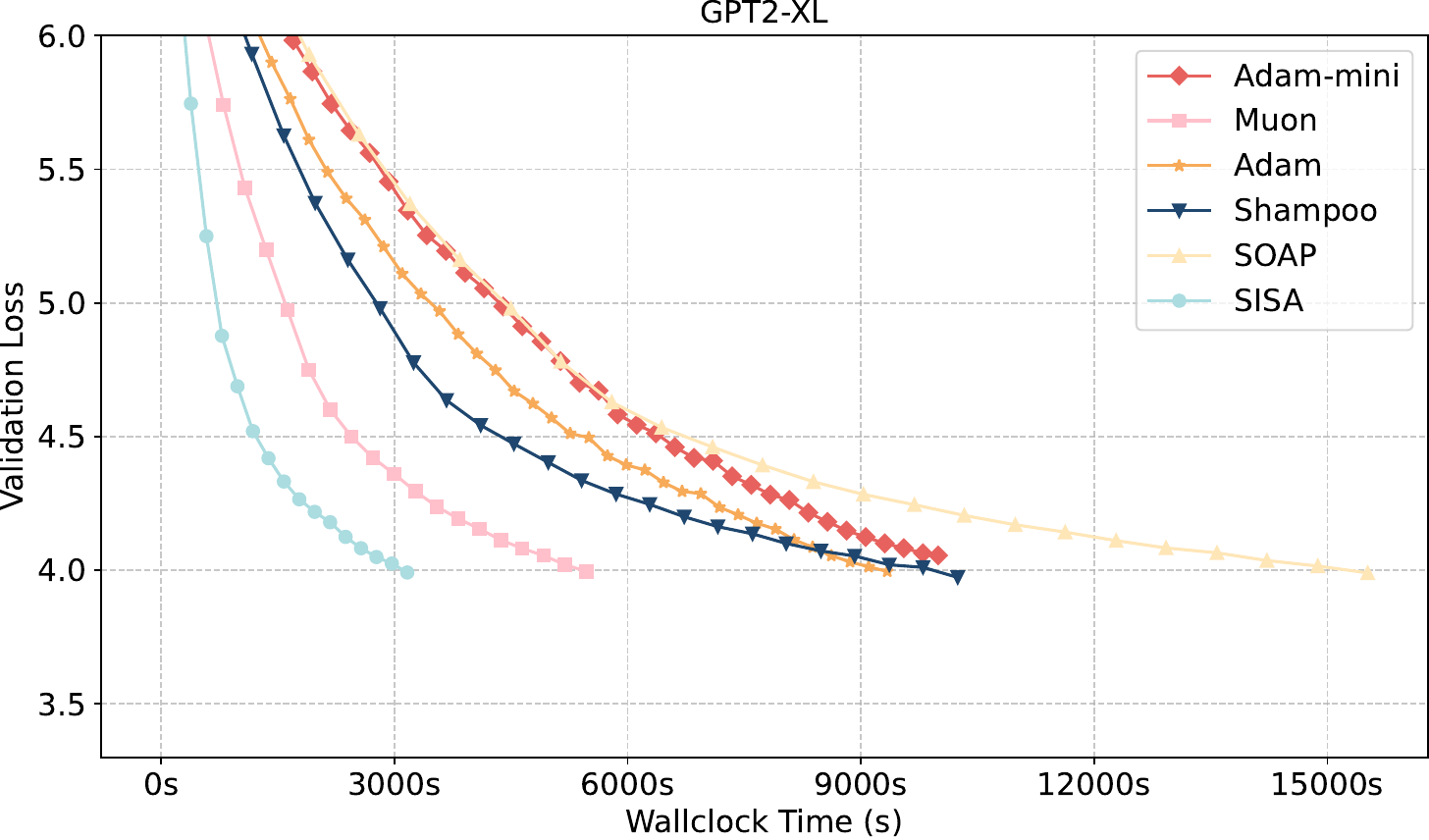} 
\end{subfigure} 
\caption{
Validation loss over tokens and wall-clock time (in seconds) for three GPT2 models. \label{fig:gpt_new}}
\end{figure}

\subsection{Generation tasks by GANs}
 GANs involve alternating updates between the generator and discriminator in a minimax game, making training notoriously unstable \cite{goodfellow2014generative}. To assess optimizer stability, we compare SISA with adaptive methods such as Adam, RMSProp, and AdaBelief, which are commonly recommended to  enhance both stability and efficiency in training GANs \cite{salimans2016improved, zhuang2020adabelief}. We use two popular architectures, the Wasserstein GAN (WGAN \cite{arjovsky2017wasserstein}) and  WGAN with gradient penalty (WGAN-GP \cite{salimans2016improved}), with a small model and vanilla CNN generator on the CIFAR-10 dataset.

 To evaluate performance, we compute the Fréchet Inception Distance (FID) \cite{heusel2017gans} every 10 training epochs, measuring the distance between 6,400 generated images and 60,000 real images. The FID score is a widely used metric for generative models, assessing both image quality and diversity. The lower FID values indicate better generated images. We follow the experimental setup from \cite{zhuang2020adabelief} and run each optimizer five times independently. The corresponding mean and standard deviation of training FID over epochs are presented in Figure \ref{fig:gan}. Clearly, SISA achieves the fastest convergence and the lowest FID. After training,  64,000 fake images are produced to compute the testing FID in Table \ref{table:wgan_sota}. Once again, SISA outperforms the other three optimizers, demonstrating its superior stability and effectiveness in training GANs.

\begin{figure}[!t]
\centering
\begin{subfigure}{.495 \textwidth}
	\centering
	\includegraphics[width=.95\linewidth]{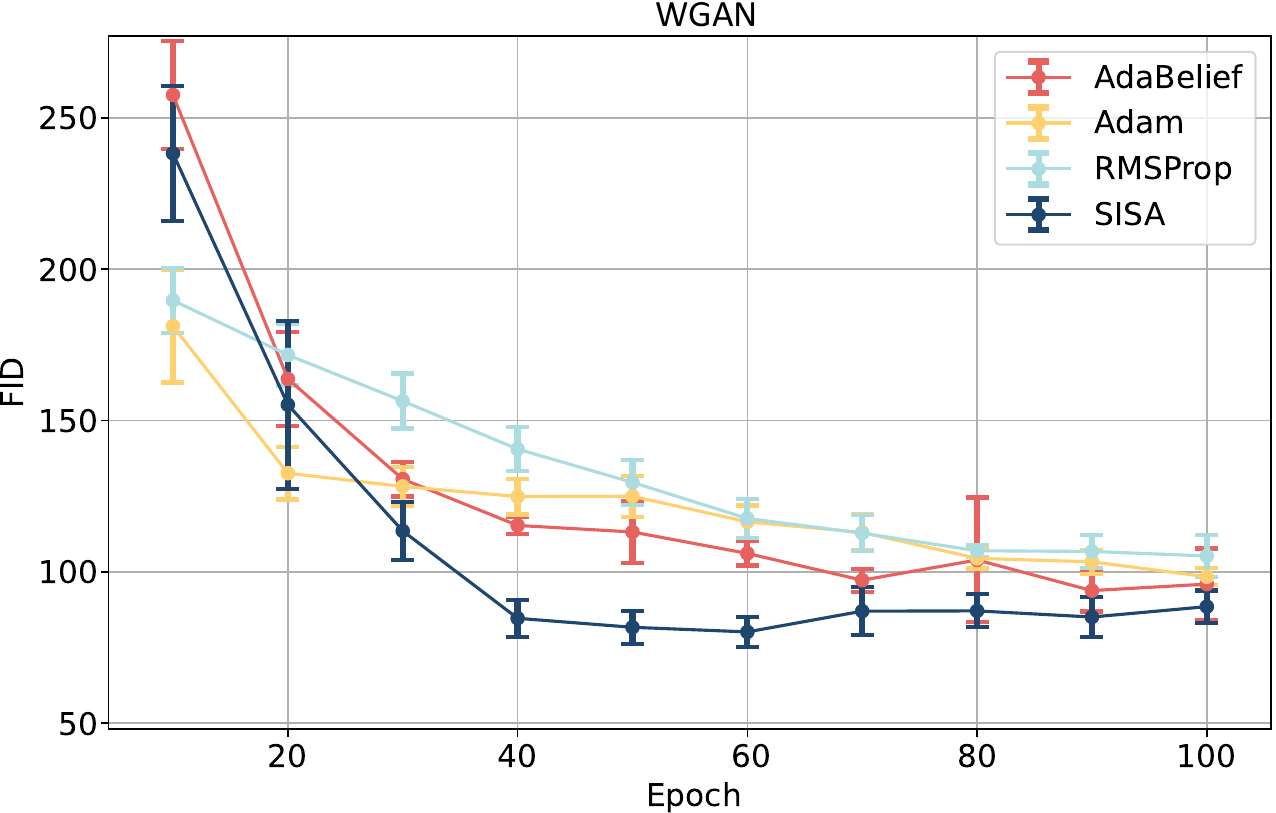} 
\end{subfigure}	 
\begin{subfigure}{.495 \textwidth}
	\centering
	\includegraphics[width=.95\linewidth]{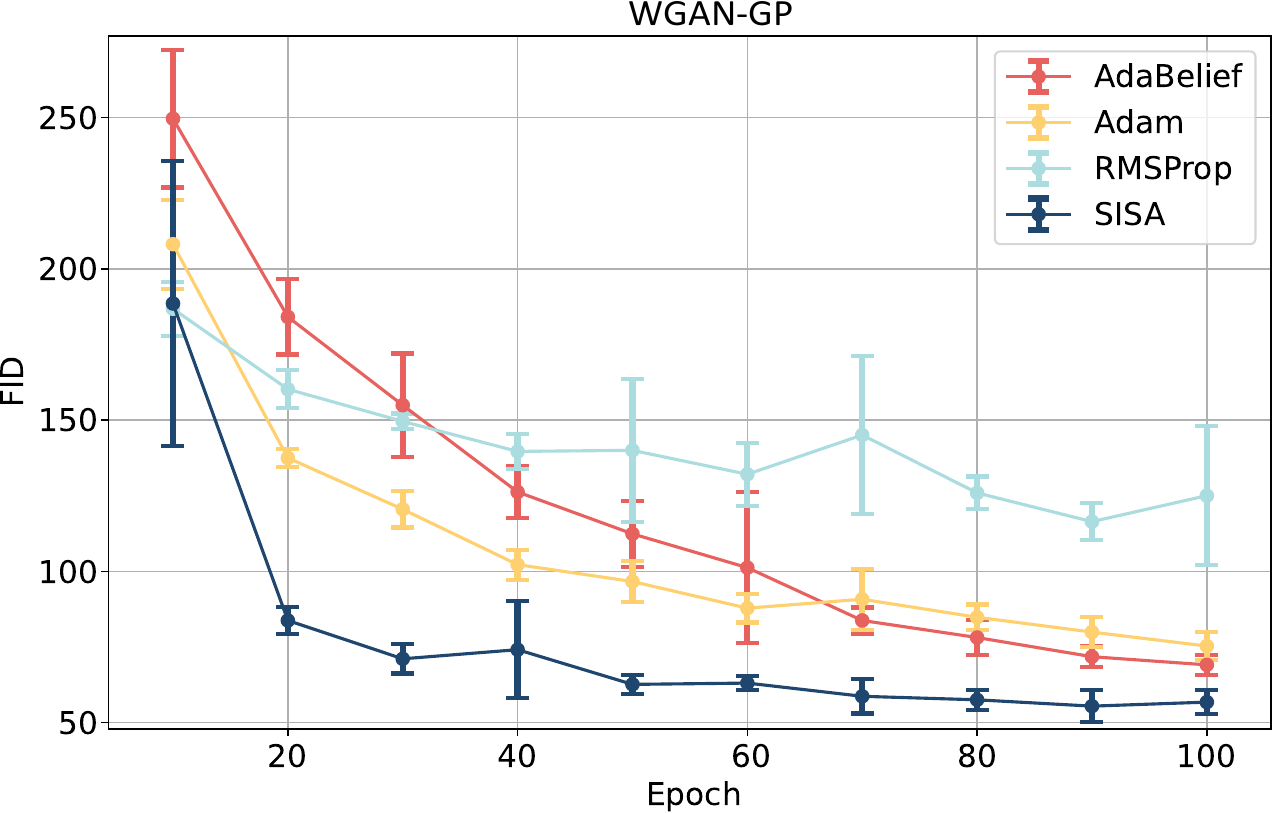} 
\end{subfigure}   
\caption{Training FID over epochs for GANs on Cifar-10. \label{fig:gan}}
\end{figure}

\begin{table}[!t]
\renewcommand{\arraystretch}{1.0}\addtolength{\tabcolsep}{10.0pt}
\begin{center}
\caption{Testing FID after training GANs on Cifar-10. }
\label{table:wgan_sota}
\begin{tabular}{ccccc}
\hline 
 & SISA & Adam & RMSProp& AdaBelief\\\hline 
WGAN &$ {85.07}\pm {5.49}$ & $95.06\pm2.97$ & $102.39\pm7.05$ &$92.37\pm11.13$\\
WGAN\_GP &$ {53.34}\pm {3.87}$ & $71.38\pm5.04$ & $121.62\pm22.79$ &$65.71\pm3.36$\\
\hline
\end{tabular}
\end{center} 
\end{table}

\section{Conclusion}
This paper introduces an ADMM-based algorithm that leverages stochastic gradients and solves sub-problems inexactly, significantly accelerating computation. The preconditioning framework allows it to incorporate popular second-moment schemes, enhancing training performance. Theoretical guarantees, based solely on the Lipschitz continuity of the gradients, make the algorithm suitable for heterogeneous datasets, effectively addressing an open problem in stochastic optimization for distributed learning. The algorithm demonstrates superior generalization across a wide range of architectures, datasets, and tasks, outperforming well-known deep learning optimizers.

\bibliographystyle{abbrv}
\bibliography{references}

@article{Ghadimi13,
author = {Ghadimi, Saeed and Lan, Guanghui},
title = {Stochastic First- and Zeroth-Order Methods for Nonconvex Stochastic Programming},
journal = {SIAM Journal on Optimization},
volume = {23},
number = {4},
pages = {2341-2368},
year = {2013}
}

@inproceedings{zhang2024optimal,
  title={An Optimal Control View of {LoRA} and Binary Controller Design for Vision Transformers},
  author={Zhang, Chi and Cheng, Jingpu and Li, Qianxiao},
  booktitle={ECCV},
  pages={144--160},
  year={2024},
  organization={Springer}
}

@article{bjorck1971iterative,
  title={An iterative algorithm for computing the best estimate of an orthogonal matrix},
  author={Bj{\"o}rck, {\AA}ke and Bowie, Clazett},
  journal={SIAM J. Numer. Anal.},
  volume={8},
  number={2},
  pages={358--364},
  year={1971},
  publisher={SIAM}
}

@article{kovarik1970some,
  title={Some iterative methods for improving orthonormality},
  author={Kovarik, Zdislav},
  journal={SIAM J. Numer. Anal.},
  volume={7},
  number={3},
  pages={386--389},
  year={1970},
  publisher={SIAM}
}

@article{polyak2009regularized,
  title={Regularized Newton method for unconstrained convex optimization},
  author={Polyak, Roman A},
  journal={Math. Program.},
  volume={120},
  pages={125--145},
  year={2009},
  publisher={Springer}
}

@article{li2004regularized,
  title={Regularized Newton methods for convex minimization problems with singular solutions},
  author={Li, Dong-Hui and Fukushima, Masao and Qi, Liqun and Yamashita, Nobuo},
  journal={Comput. Optim. Appl.},
  volume={28},
  pages={131--147},
  year={2004},
  publisher={Springer}
}

@article{marquardt1963algorithm,
  title={An algorithm for least-squares estimation of nonlinear parameters},
  author={Marquardt, Donald W},
  journal={J. Soc. Ind. Appl. Math},
  volume={11},
  number={2},
  pages={431--441},
  year={1963},
  publisher={SIAM}
}

@article{levenberg1944method,
  title={A method for the solution of certain non-linear problems in least squares},
  author={Levenberg, Kenneth},
  journal={Q. Appl. Math.},
  volume={2},
  number={2},
  pages={164--168},
  year={1944}
}

@article{li2017preconditioned,
  title={Preconditioned stochastic gradient descent},
  author={Li, Xi-Lin},
  journal={IEEE Trans. Neural Netw. Learn. Syst.},
  volume={29},
  number={5},
  pages={1454--1466},
  year={2017},
  publisher={IEEE}
}

@inproceedings{reddi2021adaptive,
title={Adaptive Federated Optimization},
author={Sashank J. Reddi and Zachary Charles and Manzil Zaheer and Zachary Garrett and Keith Rush and Jakub Kone{\v{c}}n{\'y} and Sanjiv Kumar and Hugh Brendan McMahan},
booktitle={ICLR},
year={2021},
url={https://openreview.net/forum?id=LkFG3lB13U5}
}

@article{pu2020push,
  title={Push--pull gradient methods for distributed optimization in networks},
  author={Pu, Shi and Shi, Wei and Xu, Jinming and Nedi{\'c}, Angelia},
  journal={IEEE Trans. Autom. Control.},
  volume={66},
  number={1},
  pages={1--16},
  year={2020},
  publisher={IEEE}
}

@inproceedings{tsianos2012push,
  title={Push-sum distributed dual averaging for convex optimization},
  author={Tsianos, Konstantinos I and Lawlor, Sean and Rabbat, Michael G},
  booktitle={CDC},
  pages={5453--5458},
  year={2012},
  organization={IEEE}
}

@inproceedings{hosseini2013online,
  title={Online distributed optimization via dual averaging},
  author={Hosseini, Saghar and Chapman, Airlie and Mesbahi, Mehran},
  booktitle={CDC},
  pages={1484--1489},
  year={2013},
  organization={IEEE}
}

@article{agarwal2010distributed,
  title={Distributed dual averaging in networks},
  author={Agarwal, Alekh and Wainwright, Martin J and Duchi, John C},
  journal={Adv. Neural Inform. Process. Syst.},
  volume={23},
  year={2010}
}

@article{duchi2011dual,
  title={Dual averaging for distributed optimization: {C}onvergence analysis and network scaling},
  author={Duchi, John C and Agarwal, Alekh and Wainwright, Martin J},
  journal={IEEE Trans. Automat. Contr.},
  volume={57},
  number={3},
  pages={592--606},
  year={2011},
  publisher={IEEE}
}

@inproceedings{zhang2025adammini,
title={Adam-mini: {U}se Fewer Learning Rates To Gain More},
author={Yushun Zhang and Congliang Chen and Ziniu Li and Tian Ding and Chenwei Wu and Diederik P Kingma and Yinyu Ye and Zhi-Quan Luo and Ruoyu Sun},
booktitle={ICLR},
year={2025},
url={https://openreview.net/forum?id=iBExhaU3Lc}
}

@misc{jordan2024muon,
  author       = {Keller Jordan and Yuchen Jin and Vlado Boza and Jiacheng You and
                  Franz Cesista and Laker Newhouse and Jeremy Bernstein},
  title        = {Muon: An optimizer for hidden layers in neural networks},
  year         = {2024},
  url          = {https://kellerjordan.github.io/posts/muon/}
}

@inproceedings{vyas2025soap,
title={{SOAP}: {I}mproving and Stabilizing {S}hampoo using {A}dam for Language Modeling},
author={Nikhil Vyas and Depen Morwani and Rosie Zhao and Itai Shapira and David Brandfonbrener and Lucas Janson and Sham M. Kakade},
booktitle={ICLR},
year={2025},
url={https://openreview.net/forum?id=IDxZhXrpNf}
}

@article{kairouz2021advances,
  title={Advances and open problems in federated learning},
  author={Kairouz, Peter and McMahan, H Brendan and Avent, Brendan and Bellet, Aur{\'e}lien and Bennis, Mehdi and Bhagoji, Arjun Nitin and Bonawitz, Kallista and Charles, Zachary and Cormode, Graham and Cummings, Rachel and others},
  journal={Found. Trends Mach. Learn.},
  volume={14},
  number={1--2},
  pages={1--210},
  year={2021},
  publisher={Now Publishers, Inc.}
}

@article{li2020federated,
  title={Federated optimization in heterogeneous networks},
  author={Li, Tian and Sahu, Anit Kumar and Zaheer, Manzil and Sanjabi, Maziar and Talwalkar, Ameet and Smith, Virginia},
  journal={MLSys},
  volume={2},
  pages={429--450},
  year={2020}
}

@inproceedings{todorov2012mujoco,
  title={Mujoco: {A} physics engine for model-based control},
  author={Todorov, Emanuel and Erez, Tom and Tassa, Yuval},
  booktitle={IEEE/RSJ IROS},
  pages={5026--5033},
  year={2012},
  organization={IEEE}
}

@article{weng2022tianshou,
  title={Tianshou: {A} highly modularized deep reinforcement learning library},
  author={Weng, Jiayi and Chen, Huayu and Yan, Dong and You, Kaichao and Duburcq, Alexis and Zhang, Minghao and Su, Yi and Su, Hang and Zhu, Jun},
  journal={J. Mach. Learn. Res.},
  volume={23},
  number={267},
  pages={1--6},
  year={2022}
}

@article{krizhevsky2010cifar,
  title={Cifar-10 (canadian institute for advanced research)},
  author={Krizhevsky, Alex and Nair, Vinod and Hinton, Geoffrey},
  journal={URL http://www. cs. toronto. edu/kriz/cifar. html},
  volume={5},
  number={4},
  pages={1},
  year={2010}
}

@article{salimans2016improved,
  title={Improved techniques for training gans},
  author={Salimans, Tim and Goodfellow, Ian and Zaremba, Wojciech and Cheung, Vicki and Radford, Alec and Chen, Xi},
  journal={NeurIPS},
  volume={29},
  year={2016}
}

@article{goodfellow2014generative,
  title={Generative adversarial nets},
  author={Goodfellow, Ian and Pouget-Abadie, Jean and Mirza, Mehdi and Xu, Bing and Warde-Farley, David and Ozair, Sherjil and Courville, Aaron and Bengio, Yoshua},
  journal={NeurIPS},
  volume={27},
  year={2014}
}

@inproceedings{duan2016benchmarking,
  title={Benchmarking deep reinforcement learning for continuous control},
  author={Duan, Yan and Chen, Xi and Houthooft, Rein and Schulman, John and Abbeel, Pieter},
  booktitle={ICML},
  pages={1329--1338},
  year={2016},
  organization={PMLR}
}

@inproceedings{li2022federated,
  title={Federated learning on non-iid data silos: {A}n experimental study},
  author={Li, Qinbin and Diao, Yiqun and Chen, Quan and He, Bingsheng},
  booktitle={ICDE},
  pages={965--978},
  year={2022},
  organization={IEEE}
}

@inproceedings{huang2019faster,
  title={Faster stochastic alternating direction method of multipliers for nonconvex optimization},
  author={Huang, Feihu and Chen, Songcan and Huang, Heng},
  booktitle={ICML},
  pages={2839--2848},
  year={2019}
}

@article{bottou2018optimization,
  title={Optimization methods for large-scale machine learning},
  author={Bottou, L{\'e}on and Curtis, Frank E and Nocedal, Jorge},
  journal={SIAM Rev.},
  volume={60},
  number={2},
  pages={223--311},
  year={2018},
  publisher={SIAM}
}

@article{zinkevich2010parallelized,
  title={Parallelized stochastic gradient descent},
  author={Zinkevich, Martin and Weimer, Markus and Li, Lihong and Smola, Alex},
  journal={NeurIPS},
  volume={23},
  year={2010}
}

@inproceedings{bottou2010large,
  title={Large-scale machine learning with stochastic gradient descent},
  author={Bottou, L{\'e}on},
  booktitle={COMPSTAT},
  pages={177--186},
  year={2010},
  publisher={Springer}
}

@article{chung1954stochastic,
  title={On a stochastic approximation method},
  author={Chung, Kai Lai},
  journal={Ann. Math. Stat.},
  pages={463--483},
  year={1954},
  publisher={JSTOR}
}

@ARTICLE{feddp21,
  author={Zhang, Xinwei and Hong, Mingyi and Dhople, Sairaj and Yin, Wotao and Liu, Yang},
  journal={IEEE Tran. Signal Process.}, 
  title={{FedPD: A} Federated Learning Framework With Adaptivity to Non-IID Data}, 
  year={2021},
  volume={69},
  pages={6055-6070}
  }

@article{zeng2024accelerated,
  title={An accelerated stochastic {ADMM} for nonconvex and nonsmooth finite-sum optimization},
  author={Zeng, Yuxuan and Wang, Zhiguo and Bai, Jianchao and Shen, Xiaojing},
  journal={Automatica},
  volume={163},
  pages={111554},
  year={2024},
  publisher={Elsevier}
}

@inproceedings{zhong2014fast,
  title={Fast stochastic alternating direction method of multipliers},
  author={Zhong, Wenliang and Kwok, James},
  booktitle={ICML},
  pages={46--54},
  year={2014}
}

@inproceedings{liu2017accelerated,
  title={Accelerated variance reduced stochastic {ADMM}},
  author={Liu, Yuanyuan and Shang, Fanhua and Cheng, James},
  booktitle={AAAI},
  volume={31},
  year={2017}
}

@inproceedings{ouyang2013stochastic,
  title={Stochastic alternating direction method of multipliers},
  author={Ouyang, Hua and He, Niao and Tran, Long and Gray, Alexander},
  booktitle={ICML},
  pages={80--88},
  year={2013}
}

@inproceedings{ding2019stochastic,
  title={Stochastic {ADMM} based distributed machine learning with differential privacy},
  author={Ding, Jiahao and Errapotu, Sai Mounika and Zhang, Haijun and Gong, Yanmin and Pan, Miao and Han, Zhu},
  booktitle={SecureComm},
  pages={257--277},
  year={2019}
}

@article{gabay1976dual,
  title={A dual algorithm for the solution of nonlinear variational problems via finite element approximation},
  author={Gabay, Daniel and Mercier, Bertrand},
  journal={Comput. Math.  Appl.},
  volume={2},
  number={1},
  pages={17--40},
  year={1976},
  publisher={Elsevier}
}

@inproceedings{mcmahan2017communication,
  title={Communication-efficient learning of deep networks from decentralized data},
  author={McMahan, Brendan and Moore, Eider and Ramage, Daniel and Hampson, Seth and y Arcas, Blaise Aguera},
  booktitle={AISTATS},
  pages={1273--1282},
  year={2017} 
}

@article{nesterov1988approach,
  title={On an approach to the construction of optimal methods of minimization of smooth convex functions},
  author={Nesterov, Yurii},
  journal={Ekonomika i Mateaticheskie Metody},
  volume={24},
  number={3},
  pages={509--517},
  year={1988}
}

@inproceedings{nesterov1983method,
  title={A method for solving the convex programming problem with convergence rate $O(1/k^2)$},
  author={Nesterov, Yurii},
  booktitle={Dokl akad nauk Sssr},
  volume={269},
  pages={543},
  year={1983}
}

@article{you2017scaling,
  title={Scaling {SGD} batch size to 32k for imagenet training},
  author={You, Yang and Gitman, Igor and Ginsburg, Boris},
  journal={arXiv:1708.03888},
  year={2017}
}

@article{Zeiler2012ADADELTAAA,
  title={ADADELTA: {A}n Adaptive Learning Rate Method},
  author={Matthew D. Zeiler},
  journal={arXiv:1212.5701},
  year={2012}
}

@inproceedings{he2015deepresiduallearningimage,
  author={He, Kaiming and Zhang, Xiangyu and Ren, Shaoqing and Sun, Jian},
  booktitle={CVPR}, 
  title={Deep Residual Learning for Image Recognition}, 
  year={2016},
  pages={770-778},
  organization={IEEE} 
}

@article{touvron2023llama2openfoundation,
      title={Llama 2: {O}pen Foundation and Fine-Tuned Chat Models}, 
      author={Hugo Touvron and Louis Martin and Kevin Stone and Peter Albert and Amjad Almahairi and Yasmine Babaei and Nikolay Bashlykov and Soumya Batra and et al}, 
      eprint={2307.09288},
      journal={arXiv:2307.09288},  
      year={2023}
}

@article{gemmateam2024gemmaopenmodelsbased,
      title={Gemma: {O}pen Models Based on Gemini Research and Technology}, 
      author={Gemma Team and Thomas Mesnard and Cassidy Hardin and Robert Dadashi and Surya Bhupatiraju and Shreya Pathak and Laurent Sifre and et al},      
      eprint={2403.08295},
      journal={arXiv:2403.08295},
      year={2024}
}

@article{openai2024gpt4technicalreport,
      title={{GPT-4} Technical Report}, 
      author={OpenAI and Josh Achiam and Steven Adler and Sandhini Agarwal and Lama Ahmad and Ilge Akkaya and Florencia Leoni Aleman and Diogo Almeida and Janko Altenschmidt and et al},
      eprint={2303.08774},
      journal={arXiv:2303.08774}, 
      year={2024}
}

@inproceedings{You2020Large,
title={Large Batch Optimization for Deep Learning: {T}raining {BERT} in 76 minutes},
author={Yang You and Jing Li and Sashank Reddi and Jonathan Hseu and Sanjiv Kumar and Srinadh Bhojanapalli and Xiaodan Song and James Demmel and Kurt Keutzer and Cho-Jui Hsieh},
booktitle={ICLR},
year={2020},
url={https://openreview.net/forum?id=Syx4wnEtvH}
}

@inproceedings{yu2019parallel,
  title={Parallel restarted {SGD} with faster convergence and less communication: {D}emystifying why model averaging works for deep learning},
  author={Yu, Hao and Yang, Sen and Zhu, Shenghuo},
  booktitle={AAAI},
  volume={33}, 
  pages={5693--5700},
  year={2019}
}

@inproceedings{jR2018on,
title={On the Convergence of {A}dam and Beyond},
author={Sashank J. Reddi and Satyen Kale and Sanjiv Kumar},
booktitle={ICLR},
year={2018}
}

@inproceedings{mukkamala2017variants,
  title={Variants of {RMSP}prop and {A}dagrad with logarithmic regret bounds},
  author={Mukkamala, Mahesh Chandra and Hein, Matthias},
  booktitle={ICML},
  pages={2545--2553},
  year={2017} 
}

@inproceedings{zou2019sufficient,
  title={A sufficient condition for convergences of {A}dam and {RMSP}rop},
  author={Zou, Fangyu and Shen, Li and Jie, Zequn and Zhang, Weizhong and Liu, Wei},
  booktitle={CVPR},
  pages={11127--11135},
  year={2019},
  organization={IEEE}
}

@book{rockafellar2009variational,
  title={Variational analysis},
  author={Rockafellar, R Tyrrell and Wets, Roger J-B},
  volume={317},
  year={2009},
  publisher={Springer Science \& Business Media}
}

@inproceedings{yong2023general,
  title={A General Regret Bound of Preconditioned Gradient Method for {DNN} Training},
  author={Yong, Hongwei and Sun, Ying and Zhang, Lei},
  booktitle={CVPR},
  pages={7866--7875},
  year={2023},
  organization={IEEE}
}

@inproceedings{gupta2018shampoo,
  title={Shampoo: {P}reconditioned stochastic tensor optimization},
  author={Gupta, Vineet and Koren, Tomer and Singer, Yoram},
  booktitle={ICML},
  pages={1842--1850},
  year={2018} 
}

@inproceedings{agarwal2019efficient,
  title={Efficient full-matrix adaptive regularization},
  author={Agarwal, Naman and Bullins, Brian and Chen, Xinyi and Hazan, Elad and Singh, Karan and Zhang, Cyril and Zhang, Yi},
  booktitle={ICML},
  pages={102--110},
  year={2019} 
}

@article{ye2023heterogeneous,
  title={Heterogeneous federated learning: {S}tate-of-the-art and research challenges},
  author={Ye, Mang and Fang, Xiuwen and Du, Bo and Yuen, Pong C and Tao, Dacheng},
  journal={ACM Comput. Surv.},
  volume={56},
  number={3},
  pages={1--44},
  year={2023},
  publisher={ACM New York, NY, USA}
}

@inproceedings{stich2018local,
title={Local {SGD} Converges Fast and Communicates Little},
author={Sebastian U. Stich},
booktitle={ICLR},
year={2019}
}

@inproceedings{Li2020On,
title={On the Convergence of {FedAvg} on Non-IID Data},
author={Xiang Li and Kaixuan Huang and Wenhao Yang and Shusen Wang and Zhihua Zhang},
booktitle={ICLR},
year={2020} 
}

@inproceedings{chen2018closing,
  title={Closing the generalization gap of adaptive gradient methods in training deep neural networks},
  author={Chen, Jinghui and Zhou, Dongruo and Tang, Yiqi and Yang, Ziyan and Cao, Yuan and Gu, Quanquan},
  booktitle={IJCAI},
  pages={3267--3275},
  year={2018}
}

@inproceedings{dozat2016incorporating,
  title={Incorporating nesterov momentum into adam},
  author={Dozat, Timothy},
  booktitle={ICLR},
  year={2016}
}

@inproceedings{balles2018dissecting,
  title={Dissecting adam: {T}he sign, magnitude and variance of stochastic gradients},
  author={Balles, Lukas and Hennig, Philipp},
  booktitle={ICML},
  pages={404--413},
  year={2018} 
}

@article{zaheer2018adaptive,
  title={Adaptive methods for nonconvex optimization},
  author={Zaheer, Manzil and Reddi, Sashank and Sachan, Devendra and Kale, Satyen and Kumar, Sanjiv},
  journal={NeurIPS},
  volume={31},
  year={2018}
}

@inproceedings{liu2019variance,
title={On the Variance of the Adaptive Learning Rate and Beyond},
author={Liyuan Liu and Haoming Jiang and Pengcheng He and Weizhu Chen and Xiaodong Liu and Jianfeng Gao and Jiawei Han},
booktitle={ICLR},
year={2020}
}

@inproceedings{arjovsky2017wasserstein,
  title={Wasserstein generative adversarial networks},
  author={Arjovsky, Martin and Chintala, Soumith and Bottou, L{\'e}on},
  booktitle={ICML},
  pages={214--223},
  year={2017},
  organization={PMLR}
}

@article{zhuang2020adabelief,
  title={Adabelief optimizer: {A}dapting stepsizes by the belief in observed gradients},
  author={Zhuang, Juntang and Tang, Tommy and Ding, Yifan and Tatikonda, Sekhar C and Dvornek, Nicha and Papademetris, Xenophon and Duncan, James},
  journal={NeurIPS},
  volume={33},
  pages={18795--18806},
  year={2020}
}

@inproceedings{zhang2019bertscore,
title={BERTScore: Evaluating Text Generation with BERT},
  author={Zhang, Tianyi and Kishore, Varsha and Wu, Felix and Weinberger, Kilian Q and Artzi, Yoav},
booktitle={ICLR},
year={2020},
url={https://openreview.net/forum?id=SkeHuCVFDr}
}

@article{marcus1993building,
  title={Building a large annotated corpus of English: {T}he Penn Treebank},
  author={Marcus, Mitch and Santorini, Beatrice and Marcinkiewicz, Mary Ann},
  journal={Comput. Linguist.},
  volume={19},
  number={2},
  pages={313--330},
  year={1993}
}

@book{graves2012long,
  title={Long short-term memory},
  author={Graves, Alex and Graves, Alex},
  booktitle={Supervised sequence labelling with recurrent neural networks},
  pages={37--45},
  year={2012},
  publisher={Springer}
}

@article{simonyan2014very,
  title={Very deep convolutional networks for large-scale image recognition},
  author={Simonyan, Karen and Zisserman, Andrew},
  journal={arXiv preprint arXiv:1409.1556},
  year={2014}
}

@inproceedings{huang2017densely,
  title={Densely connected convolutional networks},
  author={Huang, Gao and Liu, Zhuang and Van Der Maaten, Laurens and Weinberger, Kilian Q},
  booktitle={CVPR},
  pages={4700--4708},
  year={2017},
  organization={IEEE}
}

@article{krizhevsky2012imagenet,
  title={Imagenet classification with deep convolutional neural networks},
  author={Krizhevsky, Alex and Sutskever, Ilya and Hinton, Geoffrey E},
  journal={NeurIPS},
  volume={25},
  year={2012}
}

@inproceedings{he2016identity,
  title={Identity mappings in deep residual networks},
  author={He, Kaiming and Zhang, Xiangyu and Ren, Shaoqing and Sun, Jian},
  booktitle={ECCV},
  pages={630--645},
  year={2016},
  organization={Springer}
}

@misc{tran2023finetuning,
  author = {Tran, Tuana},
  title = {Fine-Tuning Large Language Model with Hugging Face \& {P}y{T}orch},
  year = {2023},
  url = {https://tuanatran.medium.com/fine-tuning-large-language-model-with-hugging-face-pytorch-adce80dce2ad},
  note = {Accessed: 2024-01-31}
}

@article{radford2019language,
  title={Language models are unsupervised multitask learners},
  author={Radford, Alec and Wu, Jeffrey and Child, Rewon and Luan, David and Amodei, Dario and Sutskever, Ilya and others},
  journal={OpenAI Blog},
  volume={1},
  number={8},
  pages={9},
  year={2019}
}

@article{heusel2017gans,
  title={Gans trained by a two time-scale update rule converge to a local nash equilibrium},
  author={Heusel, Martin and Ramsauer, Hubert and Unterthiner, Thomas and Nessler, Bernhard and Hochreiter, Sepp},
  journal={NeurIPS},
  volume={30},
  year={2017}
}

@inproceedings{loshchilov2017decoupled,
  title={Decoupled weight decay regularization},
  author={Loshchilov, I},
  booktitle={ICLR},
  year={2018}
}

@inproceedings{luo2019adaptive,
  title={Adaptive gradient methods with dynamic bound of learning rate},
  author={Luo, Liangchen and Xiong, Yuanhao and Liu, Yan and Sun, Xu},
  booktitle={ICLR},
  year={2019}
}

@article{robbins1951stochastic,
  title={A stochastic approximation method},
  author={Robbins, Herbert and Monro, Sutton},
  journal={Ann. Math. Stat.},
  pages={400--407},
  year={1951},
  publisher={JSTOR}
}

@ARTICLE{Adan24,
  author={Xie, Xingyu and Zhou, Pan and Li, Huan and Lin, Zhouchen and Yan, Shuicheng},
  journal={IEEE Trans. Pattern Anal. Mach. Intell.}, 
  title={Adan: {A}daptive {N}esterov Momentum Algorithm for Faster Optimizing Deep Models}, 
  year={2024},
  volume={46},
  number={12},
  pages={1-13}}

@article{CooperativeSGD21,
  author  = {Jianyu Wang and Gauri Joshi},
  title   = {Cooperative {SGD}: {A} Unified Framework for the Design and Analysis of Local-Update {SGD} Algorithms},
  journal = {J. Mach. Learn. Res.},
  year    = {2021},
  volume  = {22},
  number  = {213},
  pages   = {1--50},
  url     = {http://jmlr.org/papers/v22/20-147.html}
}

@article{zhouli23,
  author={Zhou, Shenglong and Li, Geoffrey Ye},
  journal={IEEE Trans. Signal Process.}, 
  title={{FedGiA}: {A}n Efficient Hybrid Algorithm for Federated Learning}, 
  year={2023},
  volume={71},
  pages={1493-1508}
  }

@article{duchi2011adaptive,
  title={Adaptive subgradient methods for online learning and stochastic optimization},
  author={Duchi, John and Hazan, Elad and Singer, Yoram},
  journal={J. Mach. Learn. Res.},
  volume={12},
  pages={2121-2159},
  number={7},
  year={2011}
}

@inproceedings{kingma2014adam,
  title={Adam: {A} method for stochastic optimization},
  author={Kingma, Diederik P},
  booktitle={ICLR},
  year={2015}
}

@article{chen2023symbolic,
  title={Symbolic discovery of optimization algorithms},
  author={Chen, Xiangning and Liang, Chen and Huang, Da and Real, Esteban and Wang, Kaiyuan and Liu, Yao and Pham, Hieu and Dong, Xuanyi and Luong, Thang and Hsieh, Cho-Jui and others},
  journal={arXiv preprint arXiv:2302.06675},
  year={2023}
}

@article{novak2018sensitivity,
  title={Sensitivity and generalization in neural networks: {A}n empirical study},
  author={Novak, Roman and Bahri, Yasaman and Abolafia, Daniel A and Pennington, Jeffrey and Sohl-Dickstein, Jascha},
  journal={arXiv preprint arXiv:1802.08760},
  year={2018}
}

@article{boyd2011distributed,
  title={Distributed optimization and statistical learning via the alternating direction method of multipliers},
  author={Boyd, Stephen and Parikh, Neal and Chu, Eric and Peleato, Borja and Eckstein, Jonathan and others},
  journal={Found. Trends Mach. Learn.},
  volume={3},
  number={1},
  pages={1--122},
  year={2011},
  publisher={Now Publishers, Inc.}
}

@article{zhou2023federated,
  title={Federated learning via inexact {ADMM}},
  author={Zhou, Shenglong and Li, Geoffrey Ye},
  journal={IEEE Trans. Pattern. Anal. Mach. Intell.},
  volume={45},
  number={8},
  pages={9699--9708},
  year={2023},
  publisher={IEEE}
}

@article{xu2023federated,
  title={Federated Reinforcement Learning for Resource Allocation in {V2X} Networks},
  author={Xu, Kaidi and Zhou, Shenglong and Li, Geoffrey Ye},
  journal={IEEE J. Sel. Topics Signal Process.},
  year={2024}
}

@article{wang2023new,
  title={New Environment Adaptation with Few Shots for {OFDM} Receiver and mmWave Beamforming},
  author={Wang, Ouya and Zhou, Shenglong and Li, Geoffrey Ye},
  journal={arXiv preprint arXiv:2310.12343},
  year={2023}
}

@inproceedings{wang2019admm,
  title={{ADMM} for efficient deep learning with global convergence},
  author={Wang, Junxiang and Yu, Fuxun and Chen, Xiang and Zhao, Liang},
  booktitle={Proc. ACM SIGKDD Int. Conf. Knowl. Discov.  Data Min.},
  pages={111--119},
  year={2019}
}

@article{yang2018admm,
  title={{ADMM-CSN}et: {A} deep learning approach for image compressive sensing},
  author={Yang, Yan and Sun, Jian and Li, Huibin and Xu, Zongben},
  journal={IEEE Trans. Pattern Anal. Mach. Intell.},
  volume={42},
  number={3},
  pages={521--538},
  year={2018},
  publisher={IEEE}
}

@inproceedings{taylor2016training,
  title={Training neural networks without gradients: {A} scalable {ADMM} approach},
  author={Taylor, Gavin and Burmeister, Ryan and Xu, Zheng and Singh, Bharat and Patel, Ankit and Goldstein, Tom},
  booktitle={ICML},
  pages={2722--2731},
  year={2016} 
}

@article{ruder2016overview,
  title={An overview of gradient descent optimization algorithms},
  author={Ruder, Sebastian},
  journal={arXiv:1609.04747},
  year={2016}
}

@article{ebrahimi2024aa,
  title={{AA-DLADMM}: {A}n Accelerated {ADMM}-based Framework for Training Deep Neural Networks},
  author={Ebrahimi, Zeinab and Batista, Gustavo and Deghat, Mohammad},
  journal={arXiv preprint arXiv:2401.03619},
  year={2024}
}

@article{zeng2021admm,
  title={On {ADMM} in deep learning: {C}onvergence and saturation-avoidance},
  author={Zeng, Jinshan and Lin, Shao-Bo and Yao, Yuan and Zhou, Ding-Xuan},
  journal={J. Mach. Learn. Res.},
  volume={22},
  number={1},
  pages={9024--9090},
  year={2021},
  publisher={JMLRORG}
}

@article{qian1999momentum,
  title={On the momentum term in gradient descent learning algorithms},
  author={Qian, Ning},
  journal={Neural Netw.},
  volume={12},
  number={1},
  pages={145--151},
  year={1999},
  publisher={Elsevier}
}

@article{tieleman2012lecture,
  title={Lecture 6.5-rmsprop: {D}ivide the gradient by a running average of its recent magnitude},
  author={Tieleman, Tijmen},
  journal={COURSERA: Neural networks for machine learning},
  volume={4},
  number={2},
  pages={26},
  year={2012}
}

@inproceedings{dosovitskiy2020image, 
title={An Image is Worth 16x16 Words: {T}ransformers for Image Recognition at Scale},
author={Alexey Dosovitskiy and Lucas Beyer and Alexander Kolesnikov and Dirk Weissenborn and Xiaohua Zhai and Thomas Unterthiner and Mostafa Dehghani and Matthias Minderer and Georg Heigold and Sylvain Gelly and Jakob Uszkoreit and Neil Houlsby},
booktitle={ICLR},
year={2021}
}

\newpage
\appendix

\section{Appendix: Extended Data}\label{appedix-A}

\subsection{Hyperparameters of All Experiments}
\begin{table}[!th]
\caption{Hyperparameters of SISA and NSISA in the numerical experiments.}\vspace{-5mm}
\begin{center}
\renewcommand{\arraystretch}{1.1}\addtolength{\tabcolsep}{-0.25pt}
\begin{tabular}{llccccccccc}
\hline
\multirow{2}{*}{} & model &$\sigma_i^0$ & $\rho_i$ & $\gamma_i$ & $\mu$ & $\lambda$ & $E$ &$m$ &$n$ &$|\D_{ij}|$\\ \hline
&ResNet-18& 0.005 & 2e3/3 & 1 & 0 & 0 & 303 &4&4&8e$4$ \\ 
             VMs        &ViT-B-16& 0.005 & 2e4 & 1 & 0 & 0 & 380 &4&4&8e$4$ \\ 
             Training        &AlexNet& 0.050 & 2e4 & 1 & 0 & 0 & 380 &4&4&8e$4$ \\
                     &DenseNet-121& 0.005 & 2e3/3 & 1 & 0 & 0 & 303 &4&4&8e$4$ \\\hline

&ResNet-34& 0.10 & 5e3 & $\gamma_\ell(0.80,15)^*$ & 5e-5 & 0 & 200&4&390&32 \\ 
             VMs        &VGG-11& 4.50 & 3e4 & $\gamma_\ell(0.90,10)$ & 2.5e-4 & 4e-4 & 200&4&390&32 \\ 
              Testing       &DenseNet-121& 0.8 & 2e3 & $\gamma_\ell(0.25,50)$ & 5e-4 & 5e-5 & 200&4&390&64 \\
                     &ResNet-18& 1.00 & 2e3 & 1 & 1e-2 & 5e-6 & 100 &4&5000&64\\\hline
\multirow{3}{*}{LLMs} &GPT(124M)& 0.01 & 2.5e4 & $\gamma_\ell^{\star}$ & 0 & 0 & 80 &4&4&$62$ \\
                      &GPT(335M)& 0.01 & 1e4 & $\gamma_\ell$ & 0 & 0 & 150 &4&4&62 \\ 
                      &GPT2$^{\dagger}$& 8e1 & 1e2 & $\gamma_\ell$ & 0 & 0 & - &4&-&16 \\ \hline
\multirow{4}{*}{RLMs} &Ant& 5.0 & 2e4 & $\gamma_\ell$ & 0 & 0 & 100 &4&112&64 \\
                      &Humanoid& 0.1 & 1e5 & $\gamma_\ell$ & 0 & 0 & 100 &4&112&64 \\ 
                      &Half-Cheetah& 0.5 & 2e2 & $\gamma_\ell$ & 0 & 0 & 100 &4&117&64 \\
                      &Pendulum& 0.5 & 2e2 & $\gamma_\ell$ & 0 & 0 & 100 &4&117&64\\\hline
\multirow{2}{*}{GANs} &WGAN& 1e-1 & 1e3 & 1 & 0 & 5e-5 & 100 &4&937&16 \\  
                     &WGAN-GP& 5e-5 & 2e2 & 1 & 0 & 1e-4 & 100 &2&937&32\\ \hline
\multirow{3}{*}{RNNs} &1 layer& 0.04 & 4e2 & 1 & 1.2e-6 & 0 & 200 &4&663&5 \\ 
                     &2 layer& 0.01 & 2e2/3 & 1 &  1.2e-6 & 0 & 200&4&663&5 \\ 
                     &3 layer& 0.01 & 2e2/3 & 1 &  1.2e-6 & 0 & 200&4&663&5 \\\hline
\end{tabular}
\begin{tablenotes}
      $^*~ \gamma_\ell(t,s)=t$ if $\ell$ is a multiple of $sn$ and $=1$ otherwise.
      
      $^{\star} \gamma_\ell=1-\frac{1}{E-\lfloor {\ell}/{n}\rfloor}$, where $E$ is the number of total epochs and $\lfloor a\rfloor$ is the floor of $a$. 
      
       $^{\dagger}$ Hyperparameters are set for NSISA. The remaining is set for SISA.
    \end{tablenotes}
\label{table:hyperparameter}
\end{center}\vspace{-5mm}
\end{table}

\subsection{Effect of Key Parameters}
We consider a simple example where the function  $f$ is a least squares loss, namely, $$f\left(\bw; \bx\right)=\frac{1}{2}(\langle\bw,{\bf a}\rangle-b)^2,$$ where ${\bx=({\bf a},b)}$. The data generation process follows that in \cite{zhou2023federated}. We use a setup with ${m=32}$ batches, ${\mu=\lambda=0}$, and ${\bw\in\R^{100}}$.  Our experiments reveal that parameters ${(\sigma_i^0,\gamma_i)}$ significantly influence the performance of the proposed algorithm. For simplicity, we set ${\sigma_i^0=\sigma^0}$ and ${\gamma_i^0=\gamma}$ for all ${i\in[m]}$. According to Algorithms \ref{algorithm-ADMM-mini}-\ref{algorithm-ADMM-NS},  update rule ${\sigma_i^\ell=\sigma^0\gamma^{-\ell}}$ may result in rapid growth of $\sigma_i^\ell$ if $\gamma$ is far from $1$ (e.g., ${\gamma\in(0,0.9]}$).  However, empirical evidence suggests that updating $\sigma_i^\ell$ at every iteration is not always ideal. Instead, updating it every few iterations (e.g., when $\ell$ is a multiple of $k_0$) can lead to improved performance. We will demonstrate this in the sequel.

\begin{figure}[!t]
\centering
\begin{subfigure}{.495 \textwidth}
	\centering
	\includegraphics[width=.99\linewidth]{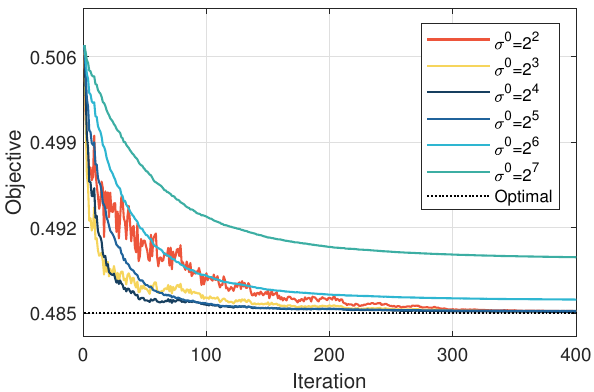} 
	\caption{
Effect of $\sigma^0$ using second-order information.\label{fig:tuning-para-a}}
\end{subfigure}	 
\begin{subfigure}{.495 \textwidth}
	\centering
	\includegraphics[width=.99\linewidth]{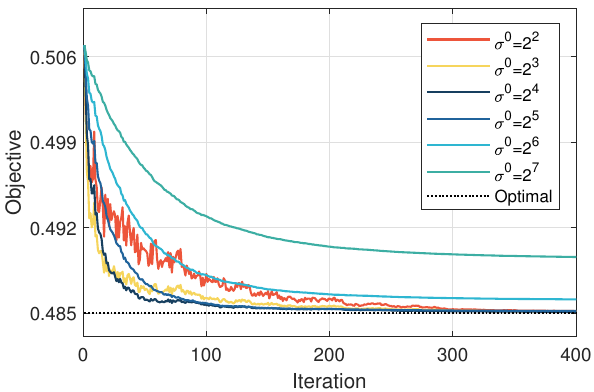} 
	\caption{
Effect of $\sigma^0$ using second moment.\label{fig:tuning-para-b}}
\end{subfigure}  \\[3ex]
 
\begin{subfigure}{.495 \textwidth}
	\centering
	\includegraphics[width=.99\linewidth]{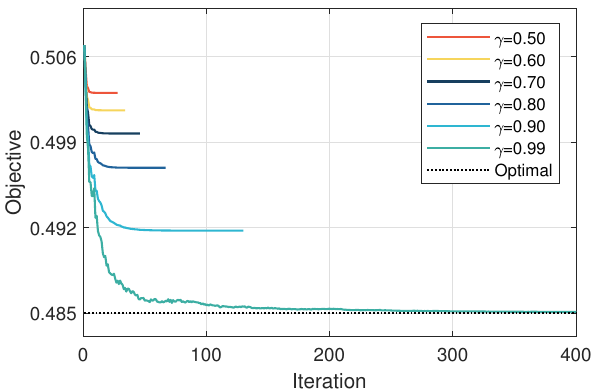} 
	\caption{
Effect of $\gamma$ with $k_0=1$.\label{fig:tuning-para-c}}
\end{subfigure}  
\begin{subfigure}{.495 \textwidth}
	\centering
	\includegraphics[width=.99\linewidth]{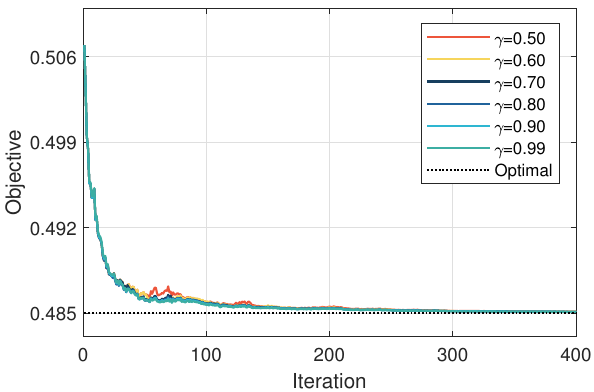} 
		\caption{
Effect of $\gamma$ with $k_0=\lceil\ln(\gamma)/\ln(0.99)\rceil$.\label{fig:tuning-para-d}}
\end{subfigure} 
\caption{
Effect of $\sigma^0$ and $\gamma$.\label{fig:tuning-para}} 
\end{figure}

{\bf a) Effect of $\sigma^0$.} To examine the effect of $\sigma^0$, we fix ${\gamma=0.99}$, vary ${\sigma^0\in\{2^2,2^3,\ldots,2^7\}}$, and set ${k_0=1}$, meaning $\sigma_i^\ell$ is updated at every iteration, namely ${k_0=1}$. We implement PISA using both second-order information (i.e., \eqref{Q-newton}) and second moment estimation (i.e., Scheme III in \eqref{second_moment_update}) to construct $\Q_i^{\ell}$. Note that when second moment is used, PISA reduces to SISA (i.e., Algorithm \ref{algorithm-ADMM-SM}).  The results are shown in Figures \ref{fig:tuning-para-a} and \ref{fig:tuning-para-b}. When ${\sigma^0=2^2}$, the objective curves fluctuate significantly during the first 100 iterations but eventually converge to the optimal value. With  ${\sigma^0=2^4}$ or  $2^5$, the convergence is more stable and reaches the optimum efficiently. However, for ${\sigma^0>2^6}$, the curves are very smooth but fail to reach the optimal value.
  
These findings provide practical guidance for tuning $\sigma^0$. Specifically, one can initialize the algorithm with a moderately small value of $\sigma^0$, depending on the task. If the algorithm exhibits persistent oscillations or divergence within the first few hundred iterations, it may be beneficial to restart with a larger $\sigma^0$. This tuning process can be repeated until a steadily decreasing objective is observed. 

{\bf b) Effect of $\gamma$.} To study the effect of $\gamma$, we fix ${\sigma^0=2^4}$ and alter $\gamma\in\{0.5,0.6,0.7,0.8,0.9,0.99\}$. We employ SISA to solve the problem with updating $\sigma_i^\ell$ at every iteration. As shown in Figure \ref{fig:tuning-para-c}, the algorithm achieves the optimal value only when ${\gamma = 0.99}$. This result is expected since the update rule, ${\sigma_i^\ell=\sigma^0\gamma^{-\ell}}$, causes  $\sigma_i^\ell$   to grow rapidly for ${\gamma\leq0.99}$, forcing premature convergence to suboptimal values. Interestingly, Theorem \ref{main-convergence-rate-eps} (specifically, Equation \eqref{rate-F}) suggests that smaller  $\gamma$ leads to faster convergence in theory, which is indeed supported by the results in Figure \ref{fig:tuning-para-c}. However, this theoretical speedup comes at the cost of solution quality.

To moderate the effect of fast-growing $\sigma_i^\ell$, we recommend updating it less frequently, such as only when $\ell$ is a multiple of $k_0$. However, determining a suitable $k_0$ for a given $\gamma$ is nontrivial. Nevertheless, a practical strategy is to choose $k_0$  such that the cumulative decay rate over $K$ iterations matches that of 
${\gamma=0.99}$, namely,   
\begin{equation}\label{k0-gamma}
0.99^K=\gamma^{K/k_0}\qquad \Longrightarrow\qquad k_0=\left\lceil\frac{\ln(\gamma)}{\ln(0.99)}\right\rceil.
\end{equation}
For example, ${k_0=69,36,11,1}$ when ${\gamma=0.5,0.7,0.9,0.99}$, respectively. Using this strategy, we run SISA with updating $\sigma_i^\ell$ only when $\ell$ is a multiple of ${k_0=\lceil\ln(\gamma)/\ln(0.99)\rceil}$. As shown in Figure \ref{fig:tuning-para-d}, all curves closely follow the same trajectory and converge to the optimal value, confirming the effectiveness of the proposed approach for choosing $k_0$  based on \eqref{k0-gamma}.

Overall, according to Table \ref{table:hyperparameter}, a recommended range for $(\sigma_i^0,\gamma_i^0,k_0)$ for each $i\in[m]$ can be $\sigma_i^0\in\{0.005,0.01,0.05,0.1,0.5\}$, $\gamma_i^0\in\{0.99,0.995,0.999\}$, and $k_0$ chosen as \eqref{k0-gamma}.

\subsection{Classification by VMs}\label{app:class-VM}
We perform classification tasks using  ImageNet \cite{krizhevsky2012imagenet}  and evaluate the performance by reporting the convergence speed.   To validate this, we  adopt a large-batch training strategy, which provides more stable gradient estimates and reduces variance in parameter updates. All experimental settings follow the official implementations of widely used model architectures on the ImageNet dataset\footnote{https://github.com/pytorch/examples/tree/main/imagenet}. 
We compare SISA with five optimizers: Adam, AdamW, AdaBelief, Lamb, and SGD-M \cite{robbins1951stochastic} on training four VMs:  ViT \cite{dosovitskiy2020image}, AlexNet \cite{krizhevsky2012imagenet}, ResNet \cite{he2016identity}, and DenseNet \cite{huang2017densely}. 
To ensure fair comparison, all models are trained using identical training strategies and hyperparameters, such as batch size and weight decay. The only exception is the learning rate (which plays a role analogous to $\sigma$ and $\rho$ in SISA), which is tuned for each optimizer to achieve comparable performance. Figure \ref{fig:imagnet} illustrates the training loss over epochs, and SISA consistently achieves the fastest convergence and maintains the lowest training loss.

\begin{figure}[!t]
\centering
\begin{subfigure}{.495 \textwidth}
	\centering
	\includegraphics[width=.95\linewidth]{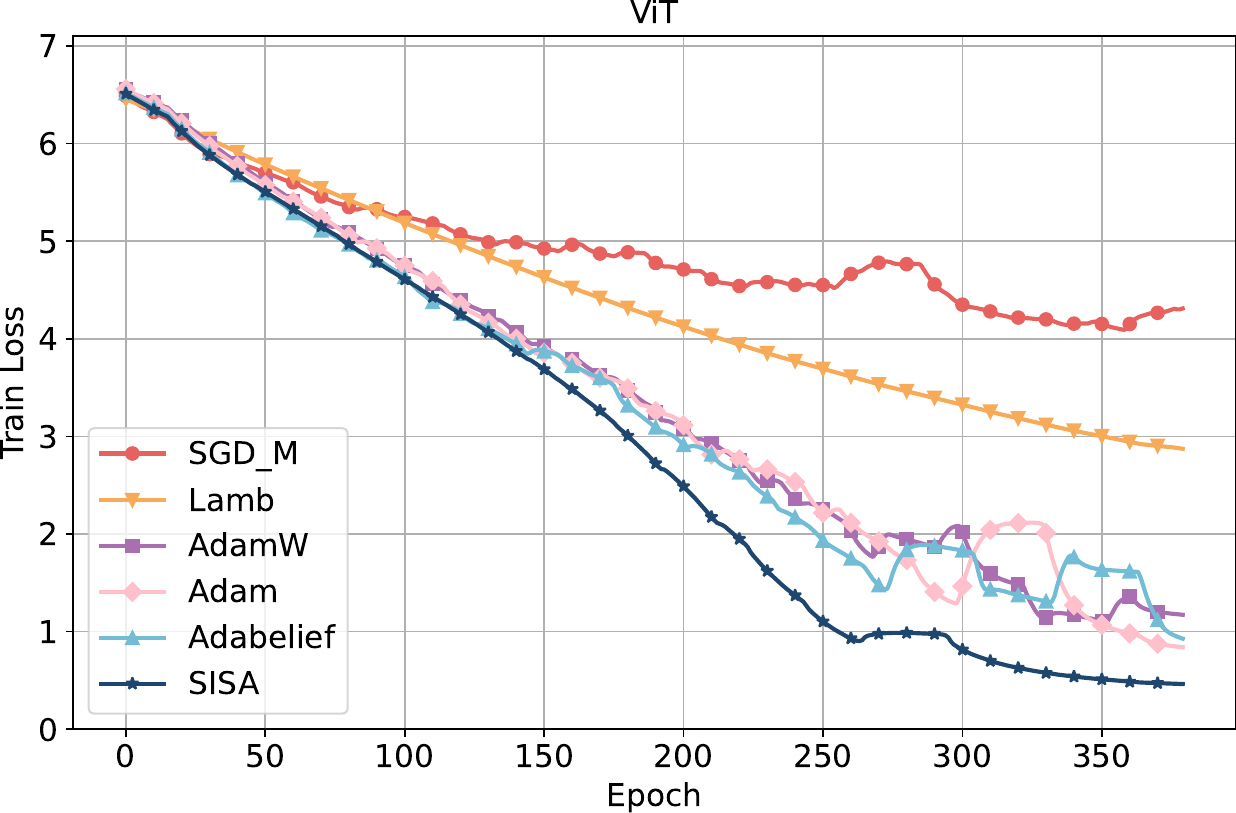} 
\end{subfigure}	 
\begin{subfigure}{.495 \textwidth}
	\centering
	\includegraphics[width=.95\linewidth]{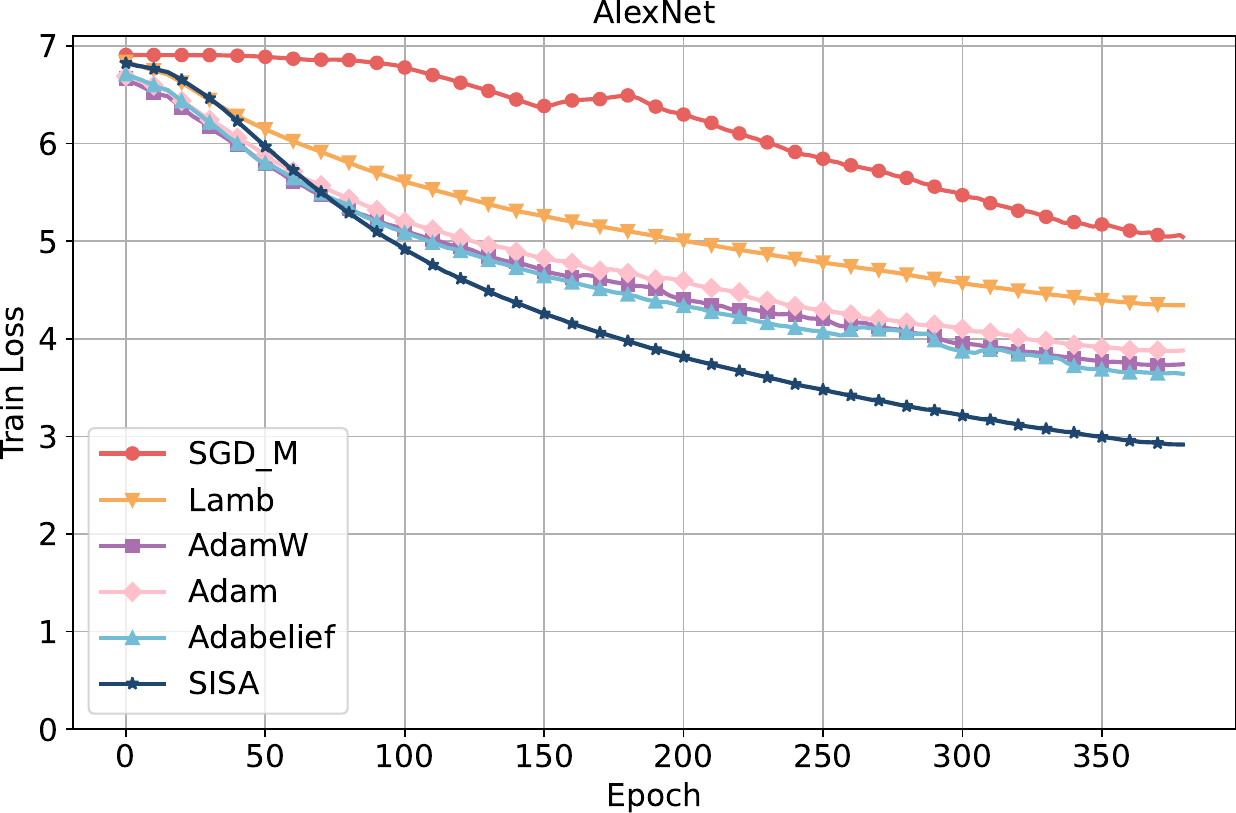} 
\end{subfigure}   
\\[2ex]
\begin{subfigure}{.495 \textwidth}
	\centering
	\includegraphics[width=.95\linewidth]{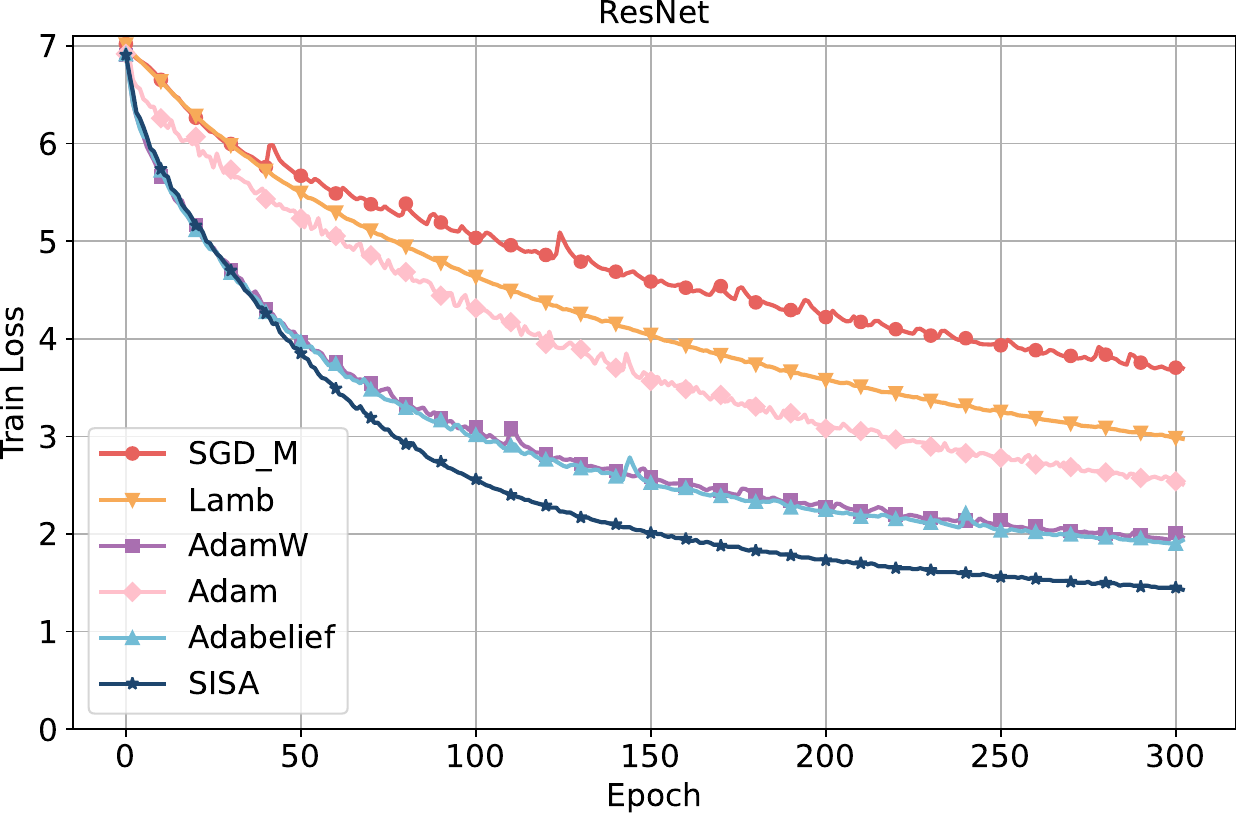} 
\end{subfigure}	 
\begin{subfigure}{.495 \textwidth}
	\centering
	\includegraphics[width=.95\linewidth]{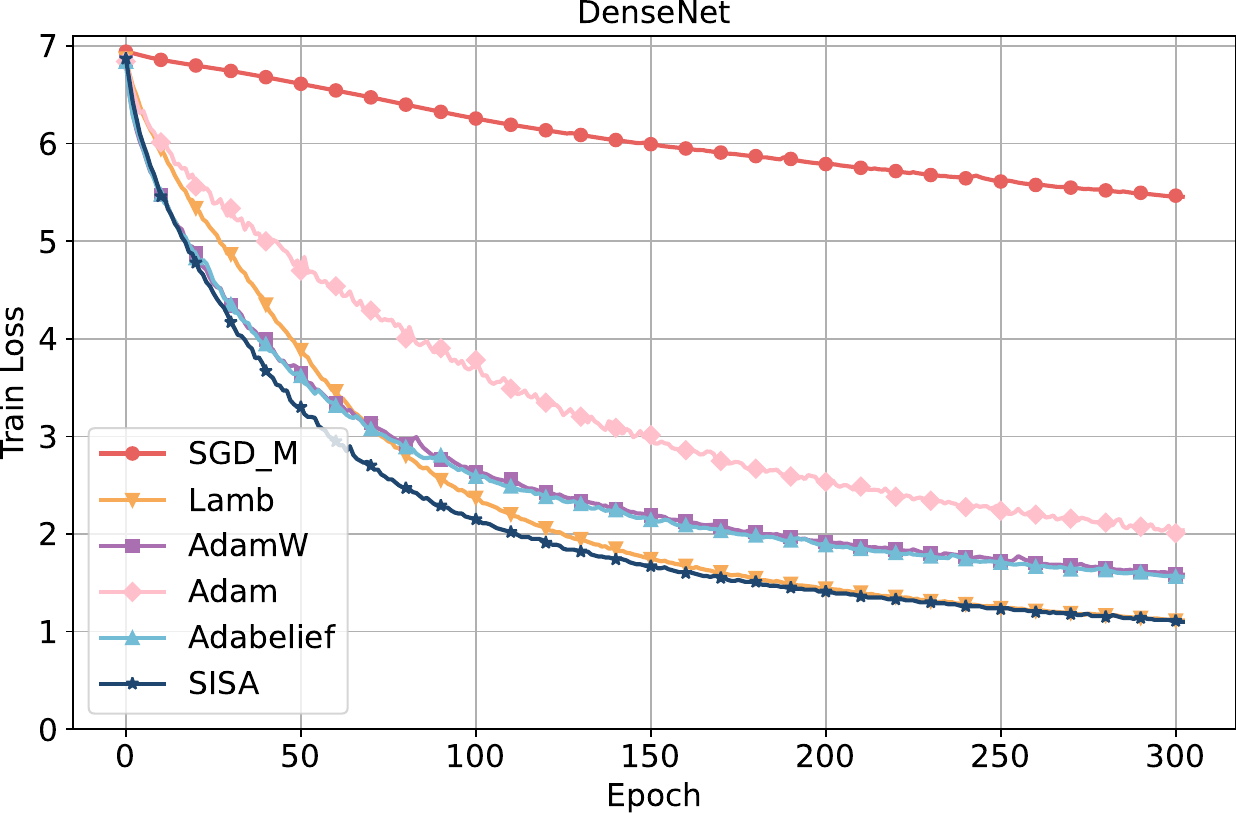} 
\end{subfigure} 
\caption{Training loss over epochs for four VMs on ImageNet.\label{fig:imagnet}}
\end{figure}

\begin{figure}[!t]
\centering
\begin{subfigure}{.495 \textwidth}
	\centering
	\includegraphics[width=.95\linewidth]{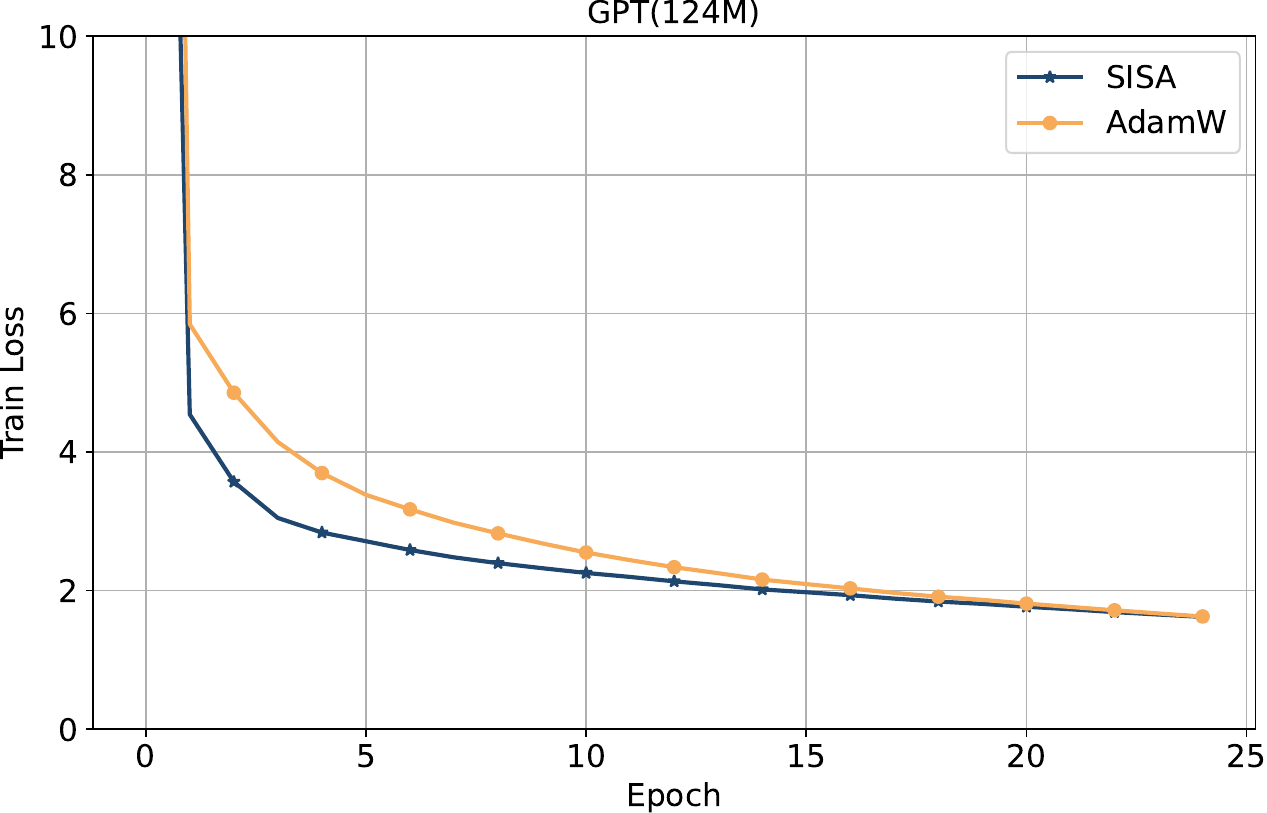} 
\end{subfigure}	 
\begin{subfigure}{.495 \textwidth}
	\centering
	\includegraphics[width=.95\linewidth]{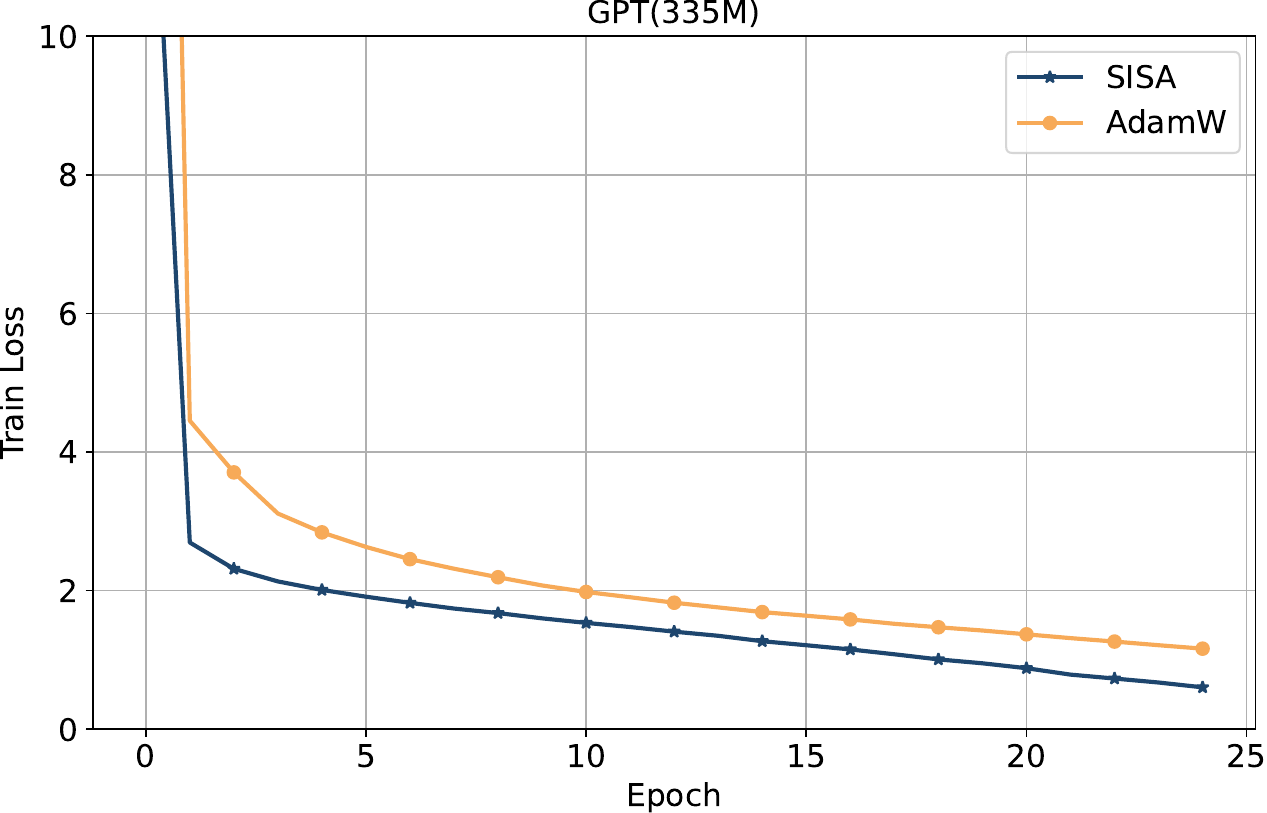} 
\end{subfigure}   
\caption{
Training loss over epochs for fine-tuning GPT2. \label{fig:gpt}}
\end{figure}

\subsection{Training LLMs}\label{app:LLMs}

We compare SISA and AdamW  to train two GPT2 models with 124M and 335M parameters \cite{radford2019language},  following the experimental setup and hyperparameter configurations outlined in \cite{tran2023finetuning}. To prevent overfitting, we terminate SISA at the 25th epoch at which the testing loss begins to rise. To evaluate performance, we report both the training and testing loss, as well as the semantic similarity \cite{zhang2019bertscore}. It noted that lower training and testing loss and higher semantic similarity indicate better performance. Semantic similarity is computed using BERTScore \cite{zhang2019bertscore}, which compares the contextual word embeddings from a pre-trained BERT model to capture the deeper meaning of two texts, rather than relying on exact word matches, and calculates an overall similarity score. By incorporating semantic similarity alongside testing loss, we provide a more comprehensive evaluation of the generated recipes, considering both token-level accuracy and logical coherence. The training process over 25 epochs is shown in Figure \ref{fig:gpt}, where SISA consistently demonstrates faster convergence. Moreover, as illustrated in Table \ref{table:gpt_small}, SISA outperforms AdamW in both testing loss and semantic similarity during the early stages of training.

Table \ref{table:gpt_memory} reports the memory overhead (in MiB) for different optimizers. Note that the batch size for GPT2-Medium is 16, whereas for GPT2-XL it is 4, with four steps of gradient accumulation. This explains why some optimizers exhibit similar memory overhead for GPT2-Medium and GPT2-XL. We also acknowledge that certain algorithms, such as the one proposed in \cite{zhang2024optimal}, are specifically designed to reduce memory usage. However, we do not include them in our comparison due to the lack of publicly available implementations. Despite incurring higher memory overhead than some baseline methods, NSISA consistently achieves lower validation loss in less wall-clock time for both the GPT2-Medium and GPT2-XL configurations, as shown in Figure \ref{fig:gpt_new}.

\begin{table}[!t]
\renewcommand{\arraystretch}{1.0}\addtolength{\tabcolsep}{6.5pt}
\begin{center}
\caption{Testing loss and semantic similarity for GPT2.} 
\label{table:gpt_small}
\begin{tabular}{llccccccc}
\hline
&& \multicolumn{3}{c}{Testing Loss} && \multicolumn{3}{c}{Semantic Similarity(\%)} \\ \cline{3-5}\cline{7-9}
&Epoch      & 2& 5& 15&& 2& 5& 15\\ \hline
\multirow{2}{*}{GPT2(124M)}&AdamW                & 5.20& 3.45& 2.30&& 82.16 & 84.30& 84.73\\ 
&SISA                & 3.69& 2.73& 2.17&& 83.15 & 84.36& 84.82 \\ \hline
\multirow{2}{*}{GPT2(335M)}&AdamW                & 3.62& 2.73& 1.95&& 83.17 & 84.50& 84.19\\  
&SISA                & 2.31& 2.04& 1.93&& 84.76 & 85.24& 84.84 \\ \hline
\end{tabular}
\end{center} 
\end{table}

\begin{table}[!t]
\renewcommand{\arraystretch}{1.0}\addtolength{\tabcolsep}{6.0pt}
\begin{center}
\caption{Memory Overhead (MiB) with Different Optimizers. }
\label{table:gpt_memory}
\begin{tabular}{lcccccc}
\hline 
 & NSISA & Adam & Muon& Shampoo& SOAP & Adam-mini\\\hline 
GPT2-Nano &\textcolor{blue}{22,998} & 22,996 & 22,998 &24,336 & 26,432 &22,996\\
GPT2-Medium &\textcolor{blue}{50,580}& 50,658 & 47,860 &54,794 & 58,812 &47,548\\
GPT2-XL &\textcolor{blue}{55,126} & 53,128 & 50,657 &68,146 & 93,630 &50,658\\
\hline
\end{tabular}
\end{center}
\end{table}
\subsection{Games reward by RLMs}
  In this subsection, we employ SISA to train  RLMs, specifically focusing on two popular algorithms: Proximal Policy Optimization (PPO) and Advantage Actor-Critic (A2C) \cite{duan2016benchmarking}, which are Adam-based approaches. Therefore, we can directly substitute Adam with SISA in PPO and A2C to derive PPO-SISA and A2C-SISA, without requiring additional modification. We refer to the default configurations as PPO-Adam and A2C-Adam. We evaluate these algorithms on four continuous control tasks/games simulated in MuJoCo \cite{todorov2012mujoco}, a widely-used physics engine. The environments for the four tasks are Ant, Humanoid, Half-Cheetah, and Inverted Double Pendulum.

In each test, agents are rewarded at every step based on their actions. Following standard evaluation procedures, we run each task with the same random seeds and assess performance over 10 episodes every 30,000 steps. All experiments are conducted using the Tianshou framework \cite{weng2022tianshou} with its default hyperparameters as the baseline. The results are shown in Figure \ref{fig:rl}, where the solid line represents the average rewards and the shaded region indicates the standard deviation over episodes. Across all four games, SISA outperforms the Adam-based algorithms, achieving higher rewards.

\begin{figure}[!t]
\centering
\begin{subfigure}{.495 \textwidth}
	\centering
	\includegraphics[width=.95\linewidth]{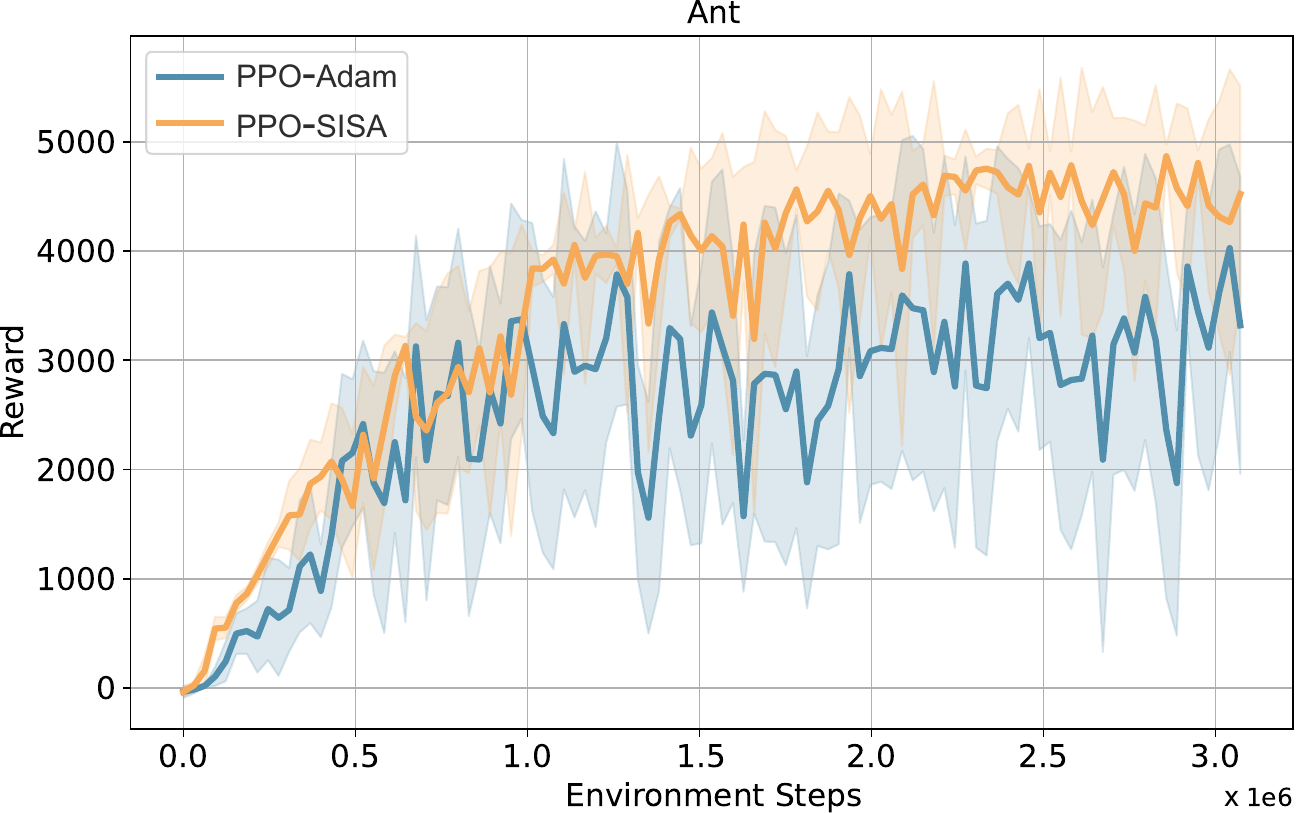} 
\end{subfigure}	 
\begin{subfigure}{.495 \textwidth}
	\centering
	\includegraphics[width=.95\linewidth]{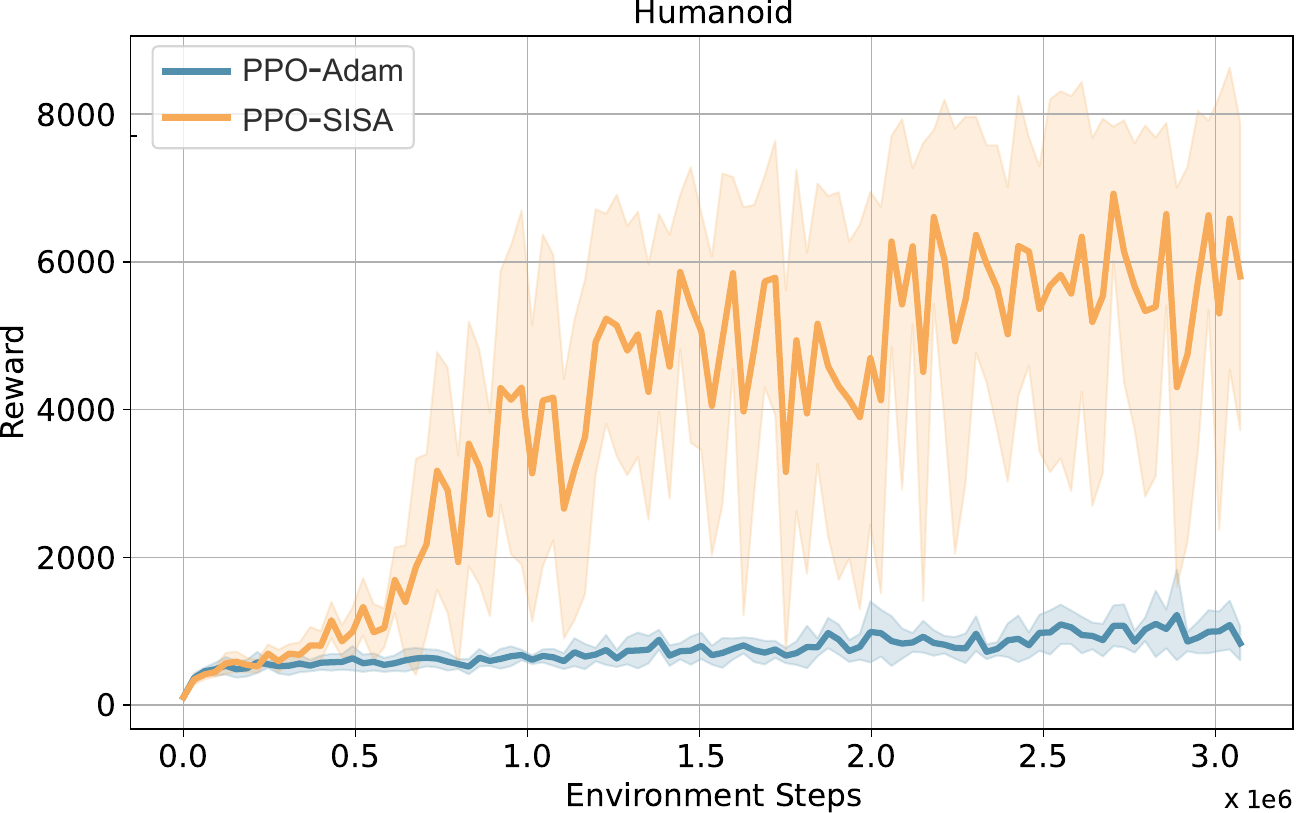} 
\end{subfigure}   
 \\[2ex]
\begin{subfigure}{.495 \textwidth}
	\centering
	\includegraphics[width=.95\linewidth]{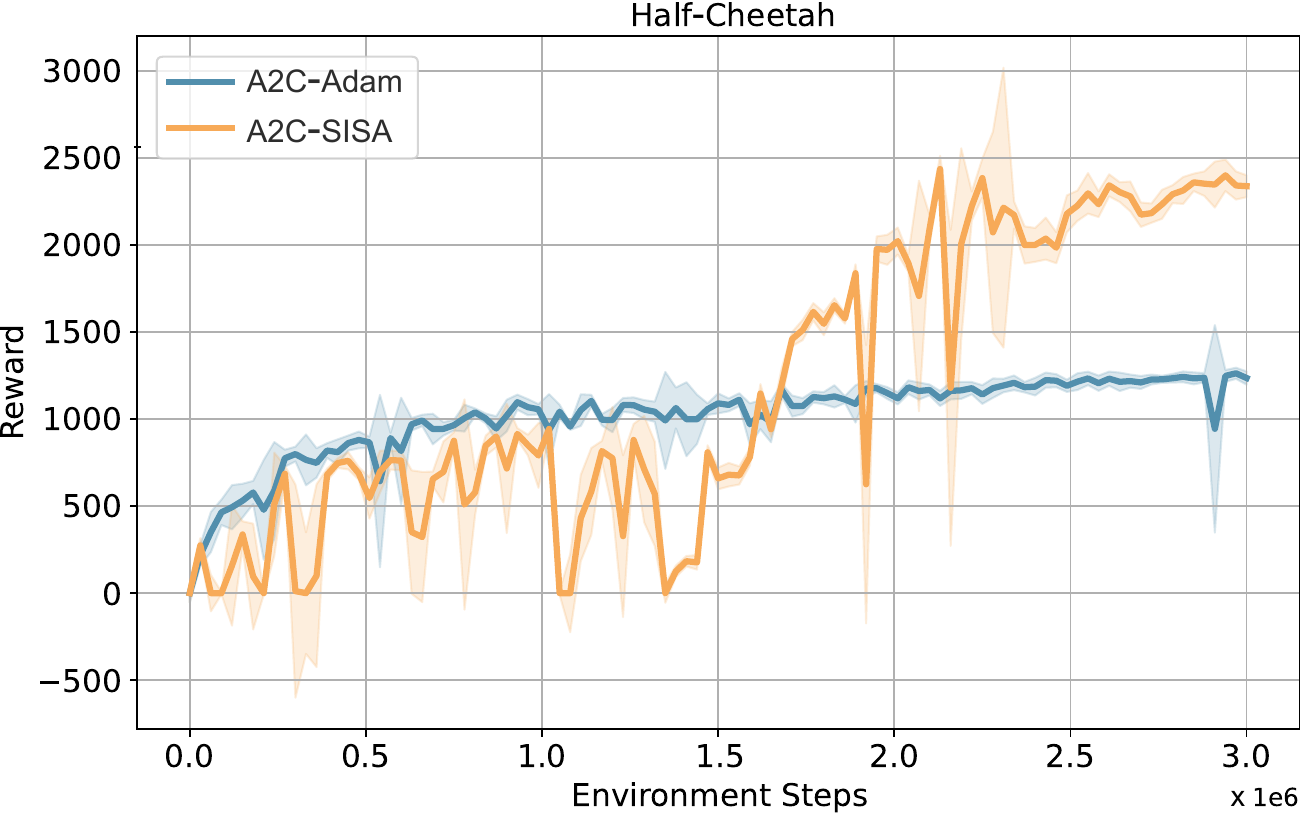} 
\end{subfigure}   
\begin{subfigure}{.495 \textwidth}
	\centering
	\includegraphics[width=.95\linewidth]{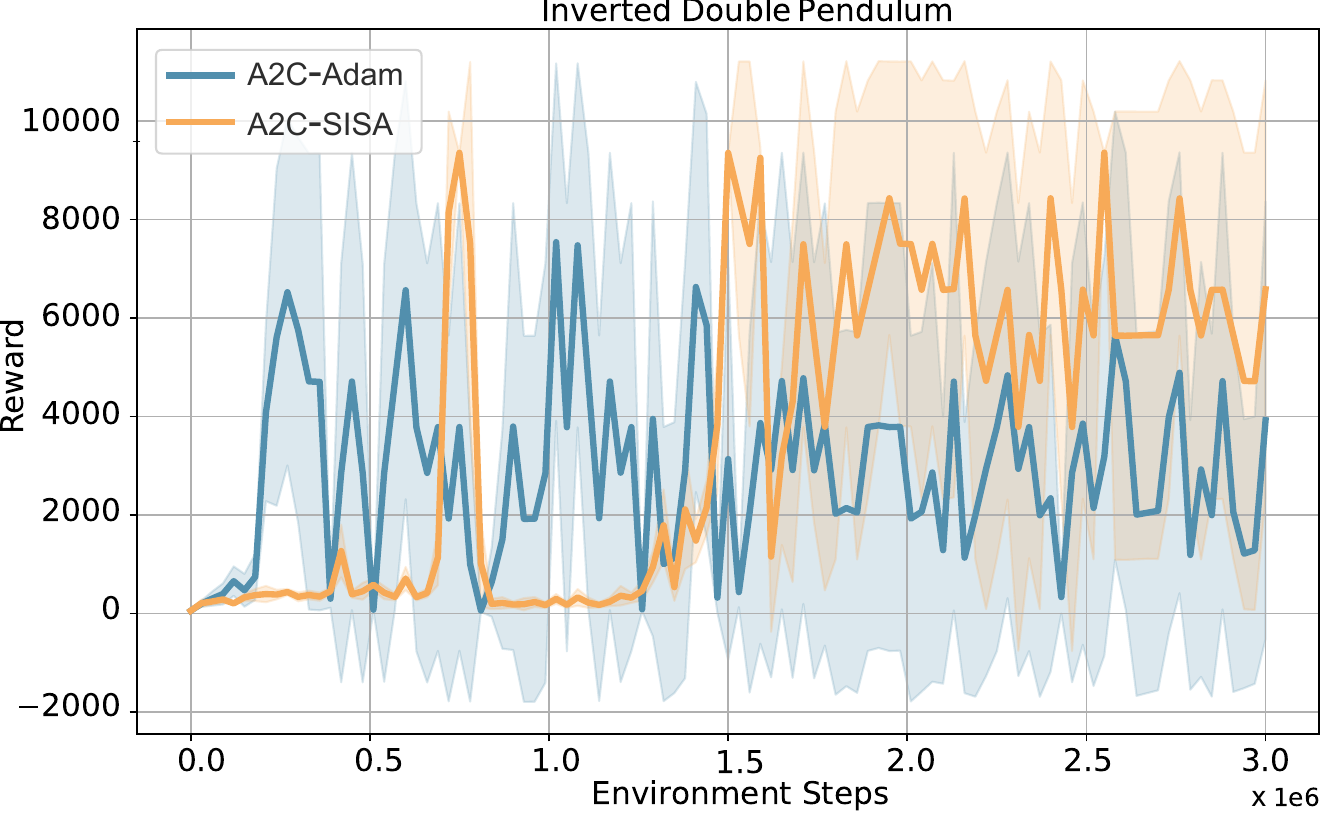} 
\end{subfigure} 
\caption{Rewards over steps for RLMs under four environments.\label{fig:rl}}
\end{figure}

\subsection{Text prediction with RNNs}
We extend our experiments to natural language processing tasks by applying SISA to long short-term memory (LSTM) networks \cite{graves2012long}, one of the most widely used RNNs. We adopt the same experimental setup and default hyperparameter configurations as in \cite{zhuang2020adabelief} since the provided code offers a broader range of baseline comparisons. Model performance is evaluated on the Penn TreeBank dataset \cite{marcus1993building}, with perplexity scores reported on the testing dataset. As shown in Table \ref{table:lstm_sota}, SISA consistently achieves the lowest perplexity across all three LSTM models, demonstrating its effectiveness in improving RNNs performance.

\begin{table}[!th]
\renewcommand{\arraystretch}{1.0}\addtolength{\tabcolsep}{-3.0pt}
\begin{center}
\caption{Testing perplexity on Penn Treebank for 1, 2, 3-layer LSTM. }
\label{table:lstm_sota}
\begin{tabular}{ccccccccccc}
\hline 
 & SISA & SGD-M & AdaBound & Adam & Radam& MSVAG& AdamW & AdaBelief & Yogi &Padam\\ 
  &   & \cite{robbins1951stochastic} & \cite{luo2019adaptive} &  \cite{kingma2014adam} &  \cite{liu2019variance} &  \cite{balles2018dissecting} &  \cite{loshchilov2017decoupled} &  \cite{zhuang2020adabelief} & \cite{zaheer2018adaptive} & \cite{chen2018closing} \\\hline 
1-layer & {84.27}& 85.70 & 84.73 &$85.90^*$&$86.50^*$ &84.85 &88.39 &84.47 & 86.37 & $94.46^\star$\\
2-layer & {66.23}& 67.32 & 67.43 &$67.30^*$&$72.30^*$ &$68.82^*$ &$72.80^*$ &66.70 & 71.64 & $89.77^\star$\\
3-layer & {61.06}& 63.91 & 63.72 &$65.02^*$&$70.00^*$ &$64.32^*$ &$69.90^*$ &61.20 & 67.69 & $95.10^\star$\\
\hline
\end{tabular}
\begin{tablenotes}
$*$ and $\star$ are reported in \cite{Adan24} and \cite{zhuang2020adabelief}. {Lower} values indicate better performance. 
\end{tablenotes}\vspace{-3mm}
\end{center} 
\end{table}

\newpage
\section{Appendix: Supplementary Information}
In this section, we prove the proof of all theorems in the main context, before the main results, we first present some useful facts. 
\subsection{Useful facts}

\begin{itemize}[leftmargin=17pt]
\item For any $\bx, \bz, \bw$, and $\bv$, and $t>0$, 
\begin{eqnarray}\label{triangle-ineq}
\begin{aligned}
  \left\langle \bw, \bv \right\rangle &\leq   \dfrac{t}{2} \left\| \bw  \right\|^2 +   \frac{1}{2t}\left\| \bv \right\|^2\\
\left \| \bw + \bv  \right\|^2&\leq (1+t) \left\| \bw  \right\|^2 + \Big(1+\frac{1}{t}\Big) \left\| \bv  \right\|^2,\\
  \left\langle \bx-\bz, \bw -\bv \right\rangle &\leq   \dfrac{1}{2} \left\|\bx- \bv \right\|^2- \dfrac{1}{2} \left\|\bx- \bw \right\|^2 +    \dfrac{1}{2} \left\|\bz- \bw \right\|^2- \dfrac{1}{2} \left\|\bz- \bv \right\|^2.
\end{aligned}
\end{eqnarray}
\item By update  (\ref{sub-w-mini}), it follows 
\begin{eqnarray}\label{optimality-condition-w}
\begin{aligned}
0=~&\sum_{i=1}^m \alpha_i \Big[   - \bpi_{i}^\ell  -  \sigma_i^{\ell}(\bw_{i}^\ell- \bw^{\ell+1})\Big] +\lambda \bw^{\ell+1}.
\end{aligned}
\end{eqnarray}
\item  By update  (\ref{sub-wbn-mini}), it follows
\begin{eqnarray}\label{optimality-condition}
\begin{aligned}
0=~&\bpi_{i}^{\ell} +  \bg_{i}^{\ell+1}+ \left(\sigma_i^{\ell+1}\I+ \rho_i \Q_i^{\ell+1}\right) \triangle \overline{\bw}_{i}^{\ell+1}   \\[1ex]
\overset{\eqref{sub-pin}}{=} & \bpi_{i}^{\ell+1} +   { \bg_{i}^{\ell+1}} +\rho_i \Q_i^{\ell+1} \triangle \overline{\bw}_{i}^{\ell+1} .
\end{aligned}
\end{eqnarray}
\item The choice, $\eta_i\I\succeq \Q_i^{\ell+1}   \succeq0$, of $\Q_i^{\ell+1}$ indicates
\begin{eqnarray}\label{Q-upper-bd}
\begin{aligned}
\left\| \Q_i^{\ell+1} \right\| \leq \eta_i, ~~\forall i\in[m].
\end{aligned}
\end{eqnarray}
\item   
  We note that the range of $\gamma_i$  is
\begin{eqnarray}\label{range-gamma}
\begin{aligned}
\frac{3}{4}\leq\gamma_i<1, ~~\forall i\in[m].
\end{aligned}
\end{eqnarray}
Together with  \eqref{choice-of-sigma}, one can check the following order,
\begin{eqnarray}
\label{increase-sigma}\sigma_i^{\ell+1}>\sigma_i^{\ell}>\sigma_i^0\geq\sigma^0 \geq 8\max_{i\in[m]}\Big\{\sigma,~\rho_i\eta_i,~ r_i,~\delta^{-2},~  \varepsilon_i(2\delta)\Big\}, ~~\forall i\in[m], \forall \ell\geq 1.
\end{eqnarray} 
Furthermore,   for any $\ell\geq0$, 
\begin{eqnarray} \label{upbd-sigma-ell-1}
\begin{aligned} 
\frac{\varepsilon_i(2\delta )}{\sigma_i^{\ell}} \overset{\eqref{sub-sigma-mini}}{=} \frac{\varepsilon_i(2\delta )\gamma_i^{\ell}}{\sigma_i^0} \overset{\eqref{increase-sigma}}{\leq} \frac{ \gamma_i^{\ell}}{8} = \frac{\gamma_i^{\ell}-\gamma_i^{\ell+1}}{8(1- \gamma_i)}.
\end{aligned}
\end{eqnarray}
\item By letting $\textbf{P}_i^{\ell+1}:=(\sigma_i^{\ell+1}-\sigma_i^{\ell})\I+ \rho_i \Q_i^{\ell+1}$, it follows 
\begin{eqnarray} \label{P-upper-bd}
\begin{aligned} \|\textbf{P}_i^{\ell+1}\| \overset{\eqref{Q-upper-bd}}{\leq}  \sigma_i^{\ell+1}-\sigma_i^{\ell}+\rho_i\eta_i \overset{(\ref{sub-sigma-mini},\ref{increase-sigma})}{\leq}  \left(\frac{1}{\gamma_i} -1+\frac{1}{8}  \right) \sigma_i^{\ell}   \overset{\eqref{range-gamma}}{\leq} \frac{\sigma_i^{\ell}}{2}.
\end{aligned}
\end{eqnarray}
\item Let $(\bw^*,\W^*)$ be the optimal solution to \eqref{opt-prob-distribute}. Since $F^*$ is the optimal function value and is lower bounded,  for any $\bw=\bw_i,i\in[m]$, it holds
\begin{eqnarray}  \label{opt-value}
\begin{aligned}
-\infty < F^*   &=  \sum_{i=1}^m \alpha_i   F_{i}(\bw_i^*) + \frac{\lambda}{2}\|\bw^*\|^2   
 \leq \sum_{i=1}^m \alpha_i   F_{i}(\bw_i) + \frac{\lambda}{2}\|\bw\|^2\\
& = \sum_{i=1}^m \alpha_i   F_{i}(\bw) + \frac{\lambda}{2}\|\bw\|^2 
  = F(\bw)+  \frac{\lambda}{2}\|\bw\|^2.
\end{aligned}
\end{eqnarray} 
\item  To make these notation in \eqref{def-notation} well-defined when ${\ell=0}$, we let $\bw^{-1}=\bw^0,~\bw_{i}^{-1}=\bw_{i}^0,~\bpi_{i}^{-1}=\bpi_{i}^0$, and $\bg_i^{0}=\nabla F_i(\bw^0)$ for any $i\in[m]$. 
\end{itemize}

\subsection{Key lemmas}
\begin{lemma} Under Assumption \ref{assumption}, for any $\bw,\bv\in \N(2\delta)$ and  any $\B_i\subseteq\D_i$,
\begin{eqnarray}\label{Lipschitz-continuity}
\begin{aligned}
  & \left\|\nabla  F_i(\bw;\B_i) - \nabla  F_i(\bv;\B_i) \right\|   \leq \frac{\sigma_i^\ell}{8} \left\| \bw- \bv \right\|,&&\forall i\in[m], ~\forall \ell\geq 0, \\
 & F_{i}(\bw) -F_{i}(\bv) \leq   \left\langle \nabla F_{i}(\bu), \bw-\bv \right\rangle  +\frac{ \sigma_i^{\ell}}{16} \|  \bw-\bv\|^2,&&\forall i\in[m], ~\forall \ell\geq 0, 
\end{aligned}\end{eqnarray} 
where $\bu=\bw$ or $\bu=\bv$.
\end{lemma}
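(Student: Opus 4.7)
The plan is to verify both inequalities by first pinning down the gradient Lipschitz constant of $F_i(\cdot;\B_i)$, and then using the parameter bound $\sigma_i^\ell \geq 8r_i$ already available from \eqref{increase-sigma}. Unpacking the definition of $F_i$ in \eqref{def-F_i-F} gives
\begin{equation*}
\nabla F_i(\bw;\B_i) = \frac{1}{|\B_i|}\sum_{\bx_t\in\B_i} \nabla f(\bw;\bx_t) + (\mu-\lambda)\bw,
\end{equation*}
so the difference $\nabla F_i(\bw;\B_i) - \nabla F_i(\bv;\B_i)$ splits into an averaged sum of gradient differences plus the linear term $(\mu-\lambda)(\bw-\bv)$. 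Applying the triangle inequality, each gradient difference is bounded via Assumption \ref{assumption} by $c(\bx_t)\|\bw-\bv\| \leq c_i\|\bw-\bv\|$, and combining with the $(\mu-\lambda)$-Lipschitz contribution from the quadratic term yields the Lipschitz constant $r_i = c_i + \mu - \lambda$. Since \eqref{increase-sigma} ensures $r_i \leq \sigma_i^\ell/8$, the first estimate follows.

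For the second inequality, I would invoke the standard descent lemma available for any function whose gradient is $r_i$-Lipschitz. The case $\bu = \bv$ is the textbook form and comes directly from integrating $\langle \nabla F_i(\bv + t(\bw-\bv)), \bw-\bv\rangle$ over $t \in [0,1]$. For $\bu = \bw$, I would use the identity
\begin{equation*}
F_i(\bw) - F_i(\bv) = \langle \nabla F_i(\bw), \bw-\bv\rangle + \int_0^1 \langle \nabla F_i(\bv + t(\bw-\bv)) - \nabla F_i(\bw), \bw-\bv\rangle\, dt,
\end{equation*}
then bound the integrand by Cauchy--Schwarz and the Lipschitz estimate $r_i(1-t)\|\bw-\bv\|^2$, integrating $(1-t)$ on $[0,1]$ to collect a factor of $1/2$. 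The resulting coefficient $r_i/2$ is then tightened to $\sigma_i^\ell/16$ using $\sigma_i^\ell \geq 8r_i$ once more.

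No step is genuinely difficult here; the argument is purely calculus plus the bookkeeping already packaged in \eqref{increase-sigma}. The only subtlety worth flagging is matching the $(1-t)$ integration correctly so that the $\bu = \bw$ case lands on the same $\sigma_i^\ell/16$ constant as the $\bu = \bv$ case, rather than being off by a factor of two; this is why the factor $1/8$ (rather than $1/4$) is baked into the choice $\sigma^0 \geq 8 r_i$ in \eqref{choice-of-sigma}.
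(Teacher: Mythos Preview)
Your proposal is correct and follows essentially the same route as the paper: expand $\nabla F_i(\cdot;\B_i)$ via \eqref{def-F_i-F}, apply the triangle inequality and Assumption \ref{assumption} to get the Lipschitz constant $r_i$, then invoke \eqref{increase-sigma} to replace $r_i$ by $\sigma_i^\ell/8$; for the second inequality the paper simply cites the descent lemma from an external reference while you spell out the integration argument, but the logic and constants match exactly.
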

\begin{proof} 
The Lipschitz continuity of $\nabla f(\cdot;\bx_t)$ with  $c(\bx_t)>0$ on $\N(2\delta)$ and $r_i=\max_{\bx_t\in\D_i}c(\bx_t)+ \mu-\lambda$ imply that, for any $\B_i\subseteq\D_i$,
\begin{eqnarray} \label{lip-fi-ri}
\eqspace{2}
\begin{array}{lcl}
   \Big\|\nabla  F_i(\bw;\B_i) - \nabla  F_i(\bv;\B_i) \Big\| 
     &\overset{\eqref{def-F_i-F}}{=}& \Big\|\nabla  H_i(\bw;\B_i) - \nabla  H_i(\bv;\B_i) + (\mu-\lambda)(\bw-\bv)\Big\|\\
 &\leq&  \Big\| \frac{1}{|\B_i|}\sum_{\bx_t\in\B_i} (\nabla  f(\bw;\bx_t) - \nabla  f(\bv;\bx_t) )\Big\|+(\mu-\lambda)\Big\| \bw-\bv  \Big\| \\
  &\leq&  \frac{1}{|\B_i|} \sum_{\bx_t\in\B_i}c(\bx_t)\Big\| \bw-\bv  \Big\|+(\mu-\lambda)\Big\| \bw-\bv  \Big\| \\
  &= &   r_i \Big\| \bw-\bv  \Big\| \overset{\eqref{increase-sigma}}{\leq}\frac{\sigma_i^\ell}{8} \Big\| \bw- \bv \Big\|
\end{array}\end{eqnarray}
It follows from \cite[Eq. (37)]{zhouli23} and the above condition that for $\bu=\bw$ or $\bu=\bv$,
\begin{eqnarray*}
\begin{aligned}  
F_{i}(\bw) -F_{i}(\bv) ~\leq ~ &\left\langle \nabla F_{i}(\bu), \bw-\bv \right\rangle  +\frac{r_i}{2} \|  \bw-\bv\|^2\\
\overset{\eqref{increase-sigma}}{\leq} &\left\langle \nabla F_{i}(\bu), \bw-\bv \right\rangle  +\frac{ \sigma_i^{\ell}}{16} \|  \bw-\bv\|^2,
\end{aligned}
\end{eqnarray*} 
showing the desired result.
\end{proof}

\begin{lemma}\label{descent-lemma} Let $\{(\bw^\ell,\W^\ell,\P^\ell)\}$ be the sequence generated by Algorithm \ref{algorithm-ADMM-mini} with $\bsi^0$ chosen as (\ref{choice-of-sigma}). Under Assumption \ref{assumption}, if ${\bw^\ell, \bw^\ell_i\in\N(\delta)}$ for any ${i\in[m]}$, then  $\bw^{\ell+1}, \bw^{\ell+1}_i\in \N(2\delta)$ and
\begin{eqnarray}  \label{L-lower-bd}
\eqspace{2}
\begin{array}{lll} 
\widetilde{\L}^{\ell+1} 
\geq
  \L^{\ell+1} 
\geq \dsum  \alpha_i \left[  F_{i}(\bw^{\ell+1} )  +  \dfrac{\sigma  }{2}  \|\triangle \overline{\bw}^{\ell+1} _i\|^2  - 1 + \dfrac{\lambda}{2}\|\bw^{\ell+1} \|^2 \right] 
\geq F^* -1.
\end{array}
\end{eqnarray}
\end{lemma}
\begin{proof} By $\bw^\ell, \bw^\ell_i\in\N(\delta)$, we obtain
 \begin{eqnarray*}  
 \begin{aligned}
\rho_i \|\Q_i^{\ell} \triangle \overline{\bw}_{i}^{\ell}\|  &\overset{\eqref{increase-sigma}}{\leq} \frac{\sigma_i^\ell}{8} (\| \bw^{\ell}\| +\| \bw_{i}^{\ell}\|) \leq  \frac{\sigma_i^\ell \delta }{4}, \end{aligned} 
 \end{eqnarray*} 
 and the following bound, \begin{eqnarray*}  
 \begin{aligned}
\frac{ \| { \bg_{i}^{\ell}}\|}{ \sigma_i^\ell}    \overset{\eqref{def-eps}}{\leq}    \frac{  \sqrt{\varepsilon_i(\delta)}}{8\sigma_i^\ell}    \overset{\eqref{def-eps}}{\leq}       \frac{  \sqrt{\varepsilon_i(2\delta)}}{8\sigma_i^\ell}     \overset{\eqref{increase-sigma}}{\leq}      \frac{1}{8\sqrt{8\sigma_i^\ell}} \overset{\eqref{increase-sigma}}{\leq}    \frac{ \delta }{64}. 
 \end{aligned} 
 \end{eqnarray*} 
Let $t_{\ell}:=\sum_{i=1}^m\alpha_{i}  \sigma_i^\ell+\lambda.$  
Based on the above conditions, we obtain
 \begin{eqnarray*}  
 \begin{aligned}
\|\bw^{\ell+1}\|  &~~=~  \frac{1}{t_{\ell}} \Big\|\sum_{i=1}^m\alpha_{i} \left(\sigma_i^\ell  {\bw}_{i}^{\ell}+\bpi_{i}^{\ell}  \right) \Big\|\\
& \overset{\eqref{optimality-condition}}{=}   \frac{1}{t_{\ell}}\Big\|\sum_{i=1}^m\alpha_{i} \left( \sigma_i^\ell {\bw}_{i}^{\ell} -\rho_i \Q_i^{\ell} \triangle \overline{\bw}_{i}^{\ell} -   { \bg_{i}^{\ell}}   \right)\Big\|\\
 &~~=~  \frac{1}{t_{\ell}}\Big\|\sum_{i=1}^m\alpha_{i} \left( (\sigma_i^\ell -\rho_i \Q_i^{\ell}){\bw}_{i}^{\ell} -\rho_i \Q_i^{\ell}  {\bw}^{\ell} -   { \bg_{i}^{\ell}}   \right)\Big\|\\
 &~~\leq~  \frac{1}{t_{\ell}} \sum_{i=1}^m\alpha_{i} \left( \sigma_i^\ell \|{\bw}_i^{\ell}\| + \rho_i \eta_i \|{\bw} ^{\ell}\| + \| { \bg_{i}^{\ell}}\| \right)  \\
 &~~\leq~  \delta  + \frac{ \delta }{8} + \frac{\delta}{64}  \leq \frac{73\delta}{64},  
 \end{aligned} 
 \end{eqnarray*} 
 thereby leading to ${\bw^{\ell+1}\in\N(2\delta)}$  and thus
\begin{eqnarray}\label{fact-0006}
\left\|\triangle\bg_i^{\ell+1}\right\|^2\overset{\eqref{def-eps}}{\leq}   \frac{\varepsilon_i(2\delta )}{64}.
\end{eqnarray} Now we estimate ${\bw^{\ell+1}_i}$ by 
 \begin{eqnarray*} 
 \eqspace{2}
\begin{array}{lcl}
\left\|\bw_{i}^{\ell+1} \right\| &\overset{\eqref{optimality-condition}}{=}& \left\|\left(\sigma_i^{\ell+1}\I+ \rho_i \Q_i^{\ell+1}\right)^{-1} \left(-\bpi_{i}^{\ell} -  \bg_{i}^{\ell+1}\right) +  \bw^{\ell+1}\right\|    \\ 
&\overset{\eqref{optimality-condition}}{=}& \left\|\left(\sigma_i^{\ell+1}\I+ \rho_i \Q_i^{\ell+1}\right)^{-1} \left(\rho_i \Q_i^{\ell}\triangle \overline{\bw}_{i}^{\ell}+ \bg_{i}^{\ell} -  \bg_{i}^{\ell+1}\right) +  \bw^{\ell+1}\right\|    \\ 
&\overset{\eqref{optimality-condition}}{\leq}& \dfrac{1}{\sigma_i^{\ell+1}}\left(  \| \rho_i \Q_i^{\ell}\triangle \overline{\bw}_{i}^{\ell} \| +  \|\bg_{i}^{\ell} \| +   \|\bg_{i}^{\ell+1} \| \right) +   \|\bw^{\ell+1}  \| \\
&\overset{\eqref{optimality-condition}}{\leq}& \dfrac{ {\delta}}{4} + \dfrac{ 2{\delta}}{64} +\dfrac{73{\delta}}{64}  \leq {2\delta}.
\end{array}
\end{eqnarray*} 
 One can verify that 
\begin{eqnarray} \label{fact-06}
 \eqspace{2}
\begin{array}{lcl}
   \left \langle -\bg_i^{\ell+1}, \triangle \overline{\bw}^{\ell+1}_i\right\rangle 
   &\overset{(\ref{optimality-condition})}{=} &  \left\langle \bpi_{i}^{\ell+1} + \rho_i \Q_i^{\ell+1} \triangle \overline{\bw}^{\ell+1} _i, \triangle \overline{\bw}^{\ell+1} _i\right\rangle \\ 
&\overset{\eqref{Q-upper-bd}}{\leq} &  \left\langle \bpi_{i}^{\ell+1} , \triangle \overline{\bw}^{\ell+1} _i\right\rangle +   \rho_i\eta_i \left\| \triangle \overline{\bw}^{\ell+1} _i \right\|^2\\ 
&\overset{\eqref{increase-sigma}}{\leq} &  \left\langle \bpi_{i}^{\ell+1} , \triangle \overline{\bw}^{\ell+1} _i\right\rangle +   \dfrac{\sigma_i^{\ell+1} }{8} \left\| \triangle \overline{\bw}^{\ell+1} _i \right\|^2,\\
 \left\langle \triangle \bg_i^{\ell+1} , \triangle \overline{\bw}^{\ell+1} _i\right\rangle 
&\overset{(\ref{triangle-ineq})}{\leq}&  \dfrac{\sigma_i^{\ell+1} }{16}\left\| \triangle \overline{\bw}^{\ell+1} _i\right\|^2+\dfrac{4}{\sigma_i^{\ell+1} } \left\|\triangle \bg_i^{\ell+1} \right\|^2.
\end{array}
\end{eqnarray}
These two conditions and $\bw^{\ell+1} , \bw^{\ell+1} _i\in\N(2\delta)$ lead to
\begin{eqnarray} \label{fact-6-w-wi}
 \eqspace{2}
\begin{array}{lcl}
F_{i}(\bw^{\ell+1} ) 
&\overset{(\ref{Lipschitz-continuity})}{\leq}&F_{i}(\bw_{i}^{\ell+1} )+\left\langle -\nabla F_{i}(\bw ^{\ell+1} ), \triangle \overline{\bw}^{\ell+1} _i\right\rangle + \dfrac{\sigma_i^{\ell+1} }{16}\left\|\triangle \overline{\bw}^{\ell+1} _i\right\|^2\\
&~\overset{(\ref{def-notation})}{=}&F_{i}(\bw_{i}^{\ell+1} )+\left\langle \triangle \bg_i^{\ell+1} -\bg_i^{\ell+1} , \triangle \overline{\bw}^{\ell+1} _i\right\rangle + \dfrac{\sigma_i^{\ell+1} }{16}\left\|\triangle \overline{\bw}^{\ell+1} _i\right\|^2\\
&\leq & F_{i}(\bw_{i}^{\ell+1} )+ \left\langle \bpi_{i}^{\ell+1}, \triangle \overline{\bw}^{\ell+1} _i\right\rangle  +\dfrac{\sigma_i^{\ell+1} }{4}\left\|\triangle \overline{\bw}^{\ell+1} _i\right\|^2+\dfrac{4}{\sigma_i^{\ell+1} } \left\|\triangle \bg_i^{\ell+1} \right\|^2,
\end{array}
\end{eqnarray}
thereby bringing out
\begin{eqnarray*}
\eqspace{2}
\begin{array}{lcl}
\widetilde{\L}^{\ell+1} 
\overset{(\ref{def-tilde-L})}{\geq }
  \L^{\ell+1} 
&\overset{(\ref{opt-prob-distribute-Lag})}{=}&  \dsum \alpha_i \left[  F_{i}(\bw_{i}^{\ell+1} )+\langle \bpi_{i}^{\ell+1} , \triangle \overline{\bw}^{\ell+1} _i\rangle +   \dfrac{\sigma_i^{\ell+1} }{2}  \|\triangle \overline{\bw}^{\ell+1} _i\|^2  + \dfrac{\lambda}{2}\|\bw^{\ell+1} \|^2  \right] \\ 
&\overset{(\ref{fact-6-w-wi})}{\geq }& \dsum  \alpha_i \left[  F_{i}(\bw^{\ell+1} )  +  \dfrac{\sigma_i^{\ell+1} }{4}  \|\triangle \overline{\bw}^{\ell+1} _i\|^2  - \dfrac{4}{\sigma_i^{\ell+1}}\left\|\triangle\bg_i^{\ell+1} \right\|^2 + \dfrac{\lambda}{2}\|\bw^{\ell+1} \|^2  \right]\\ 
&\overset{(\ref{fact-0006})}{\geq}&  \dsum  \alpha_i \left[  F_{i}(\bw^{\ell+1} )  +  \dfrac{\sigma_i^0 }{4}  \|\triangle \overline{\bw}^{\ell+1} _i\|^2  - \dfrac{  \varepsilon_i(2\delta ) }{16\sigma_i^{\ell+1}} + \dfrac{\lambda}{2}\|\bw^{\ell+1} \|^2 \right]\\
&\overset{(\ref{increase-sigma})}{\geq}& \dsum  \alpha_i \left[  F_{i}(\bw^{\ell+1} )  +  \dfrac{\sigma  }{2}  \|\triangle \overline{\bw}^{\ell+1} _i\|^2  - 1 + \dfrac{\lambda}{2}\|\bw^{\ell+1} \|^2 \right]\\
&\overset{(\ref{opt-value})}{\geq}&  F^* -1, 
\end{array}
\end{eqnarray*}
showing the desired result. 
\end{proof} 

\begin{lemma}\label{descent-lemma} Let $\{(\bw^\ell,\W^\ell,\P^\ell)\}$ be the sequence generated by Algorithm \ref{algorithm-ADMM-mini} with $\bsi^0$  chosen as (\ref{choice-of-sigma}). Under Assumption \ref{assumption}, for any $i\in[m]$ and any $\ell\geq0$, if $\bw^\ell, \bw_i^\ell \in\N(\delta)$ then 
 \begin{eqnarray}
 \label{gap-pi-k}
     \left\|\triangle \bpi_{i}^{\ell+1}\right\|^2  \leq     \frac{\varepsilon_i(2\delta)}{6} + \dfrac{16\varphi_i^{\ell+1}}{3},
\end{eqnarray}
where $\varphi_i^{\ell+1}$ is defined by
 \begin{eqnarray}
 \label{def-varphi}\varphi_i^{\ell+1}:=   \left\|\rho_i \Q_i^{\ell} \triangle\overline{\bw}^{\ell}_i\right\|^2-\left\|\rho_i \Q_i^{\ell+1} \triangle\overline{\bw}^{\ell+1}_i\right\|^2 .\end{eqnarray} 
\end{lemma}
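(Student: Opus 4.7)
The plan is to combine two expressions for $\triangle\bpi_i^{\ell+1}$. On the one hand, the dual update \eqref{sub-pin} gives $\triangle\bpi_i^{\ell+1}=\sigma_i^{\ell+1}\triangle\overline{\bw}_i^{\ell+1}$, so $\|\triangle\bpi_i^{\ell+1}\|^2=(\sigma_i^{\ell+1})^2\|\triangle\overline{\bw}_i^{\ell+1}\|^2$. On the other hand, writing the optimality condition \eqref{optimality-condition} at both $\ell+1$ and $\ell$ and subtracting yields
\[
(\sigma_i^{\ell+1}\I + \rho_i\Q_i^{\ell+1})\triangle\overline{\bw}_i^{\ell+1} \;=\; -(\bg_i^{\ell+1}-\bg_i^{\ell}) + \rho_i\Q_i^{\ell}\triangle\overline{\bw}_i^{\ell}.
\]
Since $\sigma_i^{\ell+1}\I$ and $\rho_i\Q_i^{\ell+1}\succeq 0$ commute, one has $(\sigma_i^{\ell+1}\I + \rho_i\Q_i^{\ell+1})^2\succeq(\sigma_i^{\ell+1})^2\I$, so $\|\triangle\bpi_i^{\ell+1}\|^2\le\|-(\bg_i^{\ell+1}-\bg_i^{\ell}) + \rho_i\Q_i^{\ell}\triangle\overline{\bw}_i^{\ell}\|^2$.

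I would next apply the $(1+t)$-inequality from \eqref{triangle-ineq} with $t=1/3$ to split the two right-hand side summands, handling the preconditioned piece via \eqref{Q-upper-bd} together with $\rho_i\eta_i\le\sigma_i^{\ell}/8=\gamma_i\sigma_i^{\ell+1}/8$ (from \eqref{increase-sigma}) to bound it by $\tfrac{\gamma_i^2(\sigma_i^{\ell+1})^2}{16}\|\triangle\overline{\bw}_i^{\ell}\|^2$. Then I would split
\[
\bg_i^{\ell+1}-\bg_i^{\ell}=[\nabla F_i(\bw^{\ell+1};\B_i^{\ell+1})-\nabla F_i(\bw^{\ell};\B_i^{\ell+1})]+[\nabla F_i(\bw^{\ell};\B_i^{\ell+1})-\nabla F_i(\bw^{\ell};\B_i^{\ell})]
\]
and apply \eqref{triangle-ineq} once more with $t=1/3$: the first piece is at most $(\sigma_i^{\ell+1}/8)\|\triangle\bw^{\ell+1}\|$ by \eqref{Lipschitz-continuity}, while the second, thanks to $\bw^\ell\in\N(\delta)$ and the definition of $\varepsilon_i(\delta)$, is bounded by $\sqrt{\varepsilon_i(\delta)/64}$. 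Collecting produces
\[
(\sigma_i^{\ell+1})^2\|\triangle\overline{\bw}_i^{\ell+1}\|^2 \;\le\; \frac{(\sigma_i^{\ell+1})^2}{36}\|\triangle\bw^{\ell+1}\|^2 + \frac{\varepsilon_i(\delta)}{12} + \frac{\gamma_i^2(\sigma_i^{\ell+1})^2}{16}\|\triangle\overline{\bw}_i^{\ell}\|^2.
\]
To finish, I would multiply through by $9/8$ and move the induced $\tfrac{(\sigma_i^{\ell+1})^2}{8}\|\triangle\overline{\bw}_i^{\ell+1}\|^2$ to the right; using $\sigma_i^{\ell}=\gamma_i\sigma_i^{\ell+1}$ and the easy arithmetic $9\gamma_i^2/128\le\gamma_i/8$ (which holds because $\gamma_i<1$), the $\|\triangle\overline{\bw}_i^{\ell}\|^2$ and $-\|\triangle\overline{\bw}_i^{\ell+1}\|^2$ terms recombine into exactly $\tfrac{\sigma_i^{\ell+1}}{4}\varphi_i^{\ell+1}$, the coefficient on $\|\triangle\bw^{\ell+1}\|^2$ becomes $1/32$ as required, and the residual $\tfrac{3\varepsilon_i(\delta)}{32}$ is comfortably absorbed into $\tfrac{15\varepsilon_i(\delta)}{2}$.

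The main obstacle is the constant bookkeeping: after the $9/8$ rescaling the target coefficient $(\sigma_i^{\ell+1})^2/32$ on $\|\triangle\bw^{\ell+1}\|^2$ leaves virtually no slack, so $(1+t_1)(1+t_2)\le 16/9$ is essentially forced, and the choice $t_1=t_2=1/3$ is the balanced point that simultaneously respects the $\gamma_i/8$ budget on $\|\triangle\overline{\bw}_i^{\ell}\|^2$. A secondary subtlety is the base case $\ell=0$: the subtraction of optimality conditions there rests on the conventions $\bpi_i^{-1}=\bpi_i^{0}=0$ and $\bg_i^{0}=\nabla F_i(\bw^0)$, and one should check that \eqref{sub-w-mini} with these initializations gives $\bw^{1}=\bw^{0}=0$ (hence $\triangle\bw^{1}=0$ and $\triangle\overline{\bw}_i^{0}=0$), so the inequality collapses to a bound on $(\sigma_i^{1})^2\|\triangle\overline{\bw}_i^{1}\|^2$ that is accommodated by the $15\varepsilon_i(\delta)/2$ slack in view of the lower bound on $\sigma^0$ from \eqref{choice-of-sigma}.
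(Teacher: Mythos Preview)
For $\ell\ge 1$ your argument is correct and follows a genuinely different route from the paper's. The paper subtracts the two optimality conditions to obtain $\triangle\bpi_i^{\ell+1}=-(\bg_i^{\ell+1}-\bg_i^{\ell})-\rho_i\Q_i^{\ell+1}\triangle\overline{\bw}_i^{\ell+1}+\rho_i\Q_i^{\ell}\triangle\overline{\bw}_i^{\ell}$; the surviving $\Q_i^{\ell+1}$-term is then converted, via the dual update and $\rho_i\eta_i\le\sigma_i^{\ell+1}/8$, into $\tfrac{1}{64}\|\triangle\bpi_i^{\ell+1}\|^2$, producing a self-referential inequality that is resolved after collecting a coefficient $13/16$ on the left. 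You instead use the first line of \eqref{optimality-condition} together with $\bpi_i^{\ell}=-\bg_i^{\ell}-\rho_i\Q_i^{\ell}\triangle\overline{\bw}_i^{\ell}$ to place $(\sigma_i^{\ell+1}\I+\rho_i\Q_i^{\ell+1})\triangle\overline{\bw}_i^{\ell+1}$ on the left, and then exploit $(\sigma_i^{\ell+1}\I+\rho_i\Q_i^{\ell+1})^2\succeq(\sigma_i^{\ell+1})^2\I$ to pass straight to $\|\triangle\bpi_i^{\ell+1}\|$; this eliminates $\Q_i^{\ell+1}$ from the right-hand side and avoids the self-reference altogether, yielding a cleaner derivation with visibly sharper constants (your $\varepsilon$-residual is $3\varepsilon_i(\delta)/32$ against the paper's $\approx 96\varepsilon_i(\delta)/13$).

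Your handling of the base case $\ell=0$, however, has a real gap. The subtraction step needs $\bpi_i^{0}=-\bg_i^{0}-\rho_i\Q_i^{0}\triangle\overline{\bw}_i^{0}$, i.e., $0=-\nabla F_i(0)$, which is not assumed; the valid identity at $\ell=0$ is $(\sigma_i^{1}\I+\rho_i\Q_i^{1})\triangle\overline{\bw}_i^{1}=-\bg_i^{1}-\bpi_i^{0}=-\bg_i^{1}$, so all you obtain is $\|\triangle\bpi_i^{1}\|^2\le\|\nabla F_i(0;\B_i^{1})\|^2$, and that quantity is controlled neither by $\varepsilon_i(\delta)$ nor by the lower bound on $\sigma^0$. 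The paper's own proof commits exactly the same oversight at $\ell=0$; indeed the stated inequality can fail there (take nearly homogeneous batches so that $\varepsilon_i(\delta)$ is tiny while $\nabla F_i(0)$ is large), so this is a defect inherited from the lemma's statement rather than from your strategy.
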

\begin{proof} 
  First,  it follows from $\bw^\ell, \bw_i^\ell \in\N(\delta)$ and \eqref{L-lower-bd} that $\bw^{\ell+1}, \bw^{\ell+1}_i\in \N(2\delta)$, resulting in  
 \begin{eqnarray} \label{fact-33}
\begin{aligned}
\left\|\bg_{i}^{\ell+1}-\bg_{i}^{\ell}\right\|^2 \leq 2\left\|\bg_{i}^{\ell+1} \right\|^2+2\left\| \bg_{i}^{\ell}\right\|^2
\leq \dfrac{\varepsilon_i(2\delta)}{16}.
\end{aligned}
\end{eqnarray}
Moreover, one can check that
\begin{eqnarray*}
\eqspace{2}
\begin{array}{lcl}
 \left\|\rho_i \Q_i^{\ell+1} \triangle\overline{\bw}^{\ell+1}_i\right\|^2+ \left\| \rho_i\Q_i^{\ell}\triangle\overline{\bw}^{\ell}_i  \right\|^2
&=& \varphi_i^{\ell+1} + 2\left\|\rho_i \Q_i^{\ell+1} \triangle\overline{\bw}^{\ell+1}_i\right\|^2 \\
   &\overset{\eqref{Q-upper-bd}}{\leq}  &    \varphi_i^{\ell+1} +  \dfrac{ 2\rho_i^2\eta_i^2}{(\sigma_i^{\ell+1})^2} \left\| \triangle {\bpi}^{\ell+1}_i\right\|^2 \\ 
  &\overset{\eqref{increase-sigma}}{\leq}  &   \varphi_i^{\ell+1} +   \dfrac{ 1}{32} \left\| \triangle {\bpi}^{\ell+1}_i\right\|^2,
\end{array}
\end{eqnarray*}
which further results in
\begin{eqnarray*} 
\eqspace{2}
\begin{array}{lcl}
 \left\|\triangle \bpi_{i}^{\ell+1}\right\|^2  
 &\overset{\eqref{optimality-condition}}{=} & \left\|\bg_{i}^{\ell+1}-\bg_{i}^{\ell}+\rho_i \Q_i^{\ell+1} \triangle\overline{\bw}^{\ell+1}_i - \rho_i \Q_i^{\ell} \triangle\overline{\bw}^{\ell}_i \right\|^2\\
&\overset{\eqref{triangle-ineq}}{\leq}  &   2\left\|\bg_{i}^{\ell+1}-\bg_{i}^{\ell}\right\|^2 + 4   \left\|\rho_i \Q_i^{\ell+1} \triangle\overline{\bw}^{\ell+1}_i\right\|^2+ 4 \Big\| \rho_i\Q_i^{\ell}\triangle\overline{\bw}^{\ell}_i  \Big\|^2 \\
 &\overset{\eqref{fact-33}}{\leq} & \dfrac{\varepsilon_i(2\delta)}{8} +  \dfrac{1}{8} \left\| \triangle {\bpi}^{\ell+1}_i\right\|^2+ 4\varphi_i^{\ell+1} .
\end{array}
\end{eqnarray*}
 This enables showing \eqref{gap-pi-k}.  \end{proof}

\subsection{Proof of Lemma \ref{descent-lemma-L}}\label{proof-descent-lemms} \label{app:proof-lemma32}
\begin{proof}  
We decompose 
\begin{eqnarray*}  
  \L^{\ell+1}- \L^\ell =G_0+G_1+G_2+G_3,
\end{eqnarray*}
where $G_0, G_1,G_2,$ and $G_3$ are defined by
\begin{eqnarray} 
\begin{aligned} 
G_0 &:=\L\left(\bw^{\ell+1},\W^{\ell+1},\P^{\ell+1};\bsi^{\ell+1}\right)- \L\left(\bw^{\ell+1},\W^{\ell+1},\P^{\ell+1};\bsi^{\ell}\right),\\
G_1 &:=\L\left(\bw^{\ell+1},\W^{\ell+1},\P^{\ell+1};\bsi^{\ell}\right)- \L\left(\bw^{\ell+1},\W^{\ell+1},\P^\ell;\bsi^{\ell}\right),\\
G_2 &:=\L\left(\bw^{\ell+1},\W^{\ell+1},\P^{\ell};\bsi^{\ell}\right)- \L\left(\bw^{\ell+1},\W^{\ell},\P^{\ell};\bsi^{\ell}\right),\\
G_3 &:= \L\left(\bw^{\ell+1},\W^{\ell},\P^{\ell};\bsi^{\ell}\right)- \L\left(\bw^\ell,\W^\ell,\P^\ell;\bsi^{\ell}\right).
\end{aligned}
\end{eqnarray}
We  prove the results by induction. 

\noindent \textbf{Part 1: $\ell=0$.}  Since $\bw_i^0=\bw^0=0$ for any $i\in[m]$, we have that
\begin{eqnarray} \label{w0-in-N-delta}
\begin{aligned} 
\bw^0\in\N(\delta ),~~\bw_i^0\in \N(\delta), ~\forall~i\in[m].
\end{aligned}
\end{eqnarray}
 For $G_0$, it follows that
\begin{eqnarray*} 
\begin{aligned} 
G_0 =\sum_{i=1}^m  \frac{\alpha_i(\sigma_i^{1}-\sigma_i^{0})}{2} \| \triangle \overline{\bw}^{1}_i\|^2 \overset{(\ref{sub-sigma-mini},\ref{sub-pin})}{=}\sum_{i=1}^m  \alpha_i \left[\frac{1}{2\sigma_i^{1}}-\frac{ \gamma_i^2 }{2\sigma_i^{0}} \right]\|\triangle \bpi_{i}^{1}\|^2.
\end{aligned}
\end{eqnarray*}
For $G_1$, we have 
\begin{eqnarray*} 
\begin{aligned} 
G_1 
 = \sum_{i=1}^m \alpha_i  \langle \bpi_{i}^{1}- \bpi_{i}^{0}, \bw_{i}^{1}- \bw^{1}\rangle   \overset{(\ref{sub-pin})}{=} \sum_{i=1}^m   \frac{\alpha_i}{\sigma_i^{1}}\left\|\triangle \bpi_{i}^{1} \right\|^2. 
\end{aligned}
\end{eqnarray*}
For $G_2$,  it follows from $\bw^0, \bw_i^0 \in\N(\delta)$ and \eqref{L-lower-bd} that $\bw^{1}, \bw^{1}_i\in \N(2\delta)$, resulting in  
\begin{eqnarray} \label{fact-02}
\eqspace{2}
\begin{array}{lcl}
 \left\langle   \nabla F_{i}(\bw_{i}^{1
 })-\bg_i^{1}, \triangle \bw_{i}^{1}\right\rangle  
 &\overset{(\ref{triangle-ineq})}{\leq } & \dfrac{4 }{ \sigma_i^{0}}  \left\|\nabla F_{i}(\bw_{i}^{1})-\bg_i^{1} \right\|^2 + \dfrac{\sigma_i^{0}}{16}\left\|\triangle \bw_{i}^{1}\right\|^2  \\
 &\overset{(\ref{def-eps})}{\leq }& \dfrac{ \varepsilon_i(2\delta )}{4\sigma_i^{0}}  + \dfrac{\sigma_i^{0}}{16}\left\|\triangle \bw_{i}^{1}\right\|^2.  
\end{array}
\end{eqnarray}
This further contributes to 
\begin{eqnarray} \label{fact-222}
\eqspace{2}
\begin{array}{lcl}
F_{i}(\bw_{i}^{1}) - F_{i}(\bw_{i}^{0})
&\overset{(\ref{Lipschitz-continuity})}{\leq }  & \left\langle \nabla F_{i}(\bw_{i}^{1}), \triangle \bw_{i}^{0} \right\rangle+ \dfrac{\sigma_i^{0}}{16}\left\|\triangle \bw_{i}^{1}\right\|^2\\ 
&=& \left\langle \nabla F_{i}(\bw_{i}^{1})-\bg_i^{1}, \triangle \bw_{i}^{1} \right\rangle +\left\langle  \bg_i^{1}, \triangle \bw_{i}^{1} \right\rangle + \dfrac{\sigma_i^{0}}{16}\left\|\triangle \bw_{i}^{1}\right\|^2 \\  
&\overset{(\ref{fact-02})}{\leq } &  \left\langle \bg_i^{1}, \triangle \bw_{i}^{1} \right\rangle+\dfrac{\sigma_i^{0}}{8}\left\|\triangle \bw_{i}^{1}\right\|^2 +\dfrac{\varepsilon_i(2\delta )}{4\sigma_i^{0}} .
\end{array}
\end{eqnarray} 
In addition, one can check that
\begin{eqnarray} \label{fact-22}
\eqspace{2}
\begin{array}{lcl} 
 -   \left \langle \textbf{P}_i^{1} \triangle\overline{\bw}^{1}_i, \triangle \bw_{i}^{1}\right\rangle 
&\overset{\eqref{triangle-ineq}}{\leq} &  \dfrac{\sigma_i^{0}}{8}  \left\| \triangle \bw_{i}^{1}\right\|^2+ \dfrac{2}{ \sigma_i^{0}}\left\|  \textbf{P}_i^{1} \triangle\overline{\bw}^{1}_i\right\|^2\\
&\overset{\eqref{P-upper-bd}}{\leq} ~&  \dfrac{\sigma_i^{0}}{8}  \left\| \triangle \bw_{i}^{1}\right\|^2+\dfrac{ \sigma_i^{0}}{2}\left\|  \triangle\overline{\bw}^{1}_i\right\|^2\\
&\overset{\eqref{sub-pin}}{\leq} ~&  \dfrac{\sigma_i^{0}}{8}  \left\|\triangle \bw_{i}^{1}\right\|^2+\dfrac{\gamma_i^2}{2\sigma_i^{0}}\left\|\triangle\bpi^{1}_i\right\|^2.
\end{array}
\end{eqnarray}
The above facts enable us to derive that
\begin{eqnarray*} 
\eqspace{2}
\begin{array}{lcl} 
p_i~  
&:=& L_{i}(\bw^{1},\bw_i^{1}, \bpi_i^{0};\sigma_i^{0}) - L_{i}(\bw^{1},\bw_i^{0}, \bpi_i^{0};\sigma_i^{0})  \\
&\overset{(\ref{opt-prob-distribute-Lag})}{=} &
  F_{i}(\bw_{i}^{1}) - F_{i}(\bw_{i}^{0})   +   \left\langle  \bpi_{i}^{0} +\sigma_i^{0}  \triangle\overline{\bw}^{1}_i, \triangle \bw_{i}^{1}\right\rangle -  \dfrac{\sigma_i^{0}}{2}\left\|\triangle \bw_{i}^{1}\right\|^2\\ 
& \overset{(\ref{fact-222})}{\leq } &
\dfrac{\varepsilon_i(2\delta )}{4\sigma_i^{0}}     -  \dfrac{3\sigma_i^{0}}{8} \left\|\triangle \bw_{i}^{1}\right\|^2  + \left\langle \bg_i^{1}+\bpi_{i}^{0}+\sigma_i^{0}  \triangle\overline{\bw}^{1}_i, \triangle \bw_{i}^{1}\right\rangle \\ 
&\overset{(\ref{optimality-condition})}{=} & \dfrac{\varepsilon_i(2\delta )}{4\sigma_i^{0}}   -  \dfrac{3\sigma_i^{0}}{8} \left\|\triangle \bw_{i}^{1}\right\|^2  -  \left\langle  \textbf{P}_i^{1} \triangle\overline{\bw}^{1}_i, \triangle \bw_{i}^{1}\right\rangle   \\ 
&\overset{(\ref{fact-22})}{\leq }&\dfrac{\varepsilon_i(2\delta )}{ 4\sigma_i^{0}}   -  \dfrac{\sigma_i^{0}}{4} \left\|\triangle \bw_{i}^{1}\right\|^2 +\dfrac{\gamma_i^2}{2\sigma_i^{0} } \left\| \triangle \bpi_{i}^{1}\right\|^2,
\end{array}
\end{eqnarray*}
which immediately leads to 
\begin{eqnarray*} 
\eqspace{2}
\begin{array}{lcl}
G_2  
= \dsum  \alpha_i p_i  
\leq   \dsum  \alpha_i \left[\dfrac{\varepsilon_i(2\delta )}{4\sigma_i^{ 1}} -  \dfrac{\sigma_i^{0}}{4} \left\|\triangle \bw_{i}^{1}\right\|^2 +\dfrac{\gamma_i^2}{2\sigma_i^{0} } \left\| \triangle \bpi_{i}^{1}\right\|^2 \right].
\end{array}
\end{eqnarray*}
For $G_3$,  direct verification leads to 
\begin{eqnarray*} 
\eqspace{2}
\begin{array}{lcl}
G_3  &=&\dsum  \alpha_i \left[ \left\langle \bpi_{i}^{0},  - \triangle \bw^{1}\right\rangle + \dfrac{\sigma_i^{0}}{2} \left\| \bw_{i}^{0}- \bw^{1}\right\|^2 -\dfrac{\sigma_i^{0}}{2} \left\| \bw_{i}^{0}- \bw^{0}\right\|^2 \right] +  \dfrac{\lambda}{2}\left\|\bw^{1}\right\|^2-  \dfrac{\lambda}{2}\left\|\bw^{0}\right\|^2 \\ 
&=&\dsum  \alpha_i \left[ \left\langle -\bpi_{i}^{0}+\sigma_i^{0}( \bw^{1}-\bw_{i}^{0}) + \lambda \bw^{1},   \triangle \bw^{1}\right\rangle    -  \dfrac{\sigma_i^{0}+\lambda}{2}\left\|  \triangle\bw^{1} \right\|^2 \right] \\
&\overset{(\ref{optimality-condition-w})}{=}& \dsum \alpha_i \left[- \dfrac{\sigma_i^{0}+\lambda}{2}\left \|  \triangle\bw^{1} \right\|^2\right].
\end{array}
\end{eqnarray*}
Overall, the upper bounds of $G_0, G_1, G_2,$ and $G_3$ yield that 
\begin{eqnarray} \label{gap-LL-sum} 
\eqspace{2}
\begin{array}{lcl}
 \L^{1}- \L^0 
&\leq&\dsum \alpha_i  \left[\dfrac{\varepsilon_i(2\delta )}{4\sigma_i^{0}}     -  \dfrac{\sigma_i^{0}+2\lambda}{2}\left\|  \triangle\bw^{1} \right\|^2-  \dfrac{\sigma_i^{0}}{4} \left\|\triangle \bw_{i}^{1}\right\|^2 +\dfrac{3}{2\sigma_i^{0} } \left\| \triangle \bpi_{i}^{1}\right\|^2\right]\\
&\overset{(\ref{gap-pi-k})}{\leq} & \dsum  \alpha_i \left[\dfrac{  \varepsilon_i(2\delta )}{2\sigma_i^{0}}   + \dfrac{8\varphi_i^1 }{ \sigma_i^{0}}   -  \dfrac{\sigma_i^{0}+2\lambda}{2}\left\|  \triangle\bw^{1} \right\|^2-  \dfrac{\sigma_i^{0}}{4} \left\|\triangle \bw_{i}^{1}\right\|^2  \right]\\
&\overset{(\ref{upbd-sigma-ell-1})}{\leq} & \dsum  \alpha_i \left[ \frac{\gamma_i^0-\gamma_i^1}{16(1- \gamma_i)}   +   \dfrac{8}{\sigma_i^{0}}  \left\|\rho_i \Q_i^{0} \triangle\overline{\bw}^{0}_i\right\|^2 - \dfrac{8}{\sigma_i^{1}} \left\|\rho_i \Q_i^{1} \triangle\overline{\bw}^{1}_i\right\|^2 \right]\\
&-& \dsum  \alpha_i \left[  \dfrac{\sigma_i^{0}+2\lambda}{4}\left\|  \triangle\bw^{1} \right\|^2+  \dfrac{\sigma_i^{0}}{4} \left\|\triangle \bw_{i}^{1}\right\|^2  \right],
\end{array}
\end{eqnarray}  
sufficing to the following condition, i.e., condition \eqref{descent-L} with $\ell=0$, 
\begin{equation} \label{decreasing-ell-0}
\begin{aligned} 
 &\sum_{i=1}^m \alpha_i \left[   \frac{\sigma_i^{0}+2\lambda}{4}\left\|  \triangle\bw^{1} \right\|^2+  \frac{\sigma_i^{0}}{4} \left\|\triangle \bw_{i}^{1}\right\|^2 \right] \leq \widetilde{\L}^{0}- \widetilde{\L}^{1}.
\end{aligned}
\end{equation} 
The initialization in Algorithm \ref{algorithm-ADMM-mini} implies that $ \triangle \overline{\bw}_{i}^{0} =0$, thereby
\begin{equation*} 
\begin{aligned} 
\widetilde{\L}^0  = {\L}^0 + \sum_{i=1}^m \frac{ \alpha_i\gamma_i^0 }{16(1- \gamma_i)}  \leq  F(\bw^0)+ \frac{1 }{16(1-\gamma)}  \overset{(\ref{range-gamma})}{\leq} F(\bw^0)+ \frac{\gamma  }{1-\gamma } ,
\end{aligned}
\end{equation*} 
which results in
\begin{eqnarray} \label{w0-bound-in-omega} 
\eqspace{2}
\begin{array}{lll} 
 \dsum  \alpha_i \left[  F_{i}(\bw^{1} )  +  \dfrac{\sigma  }{2}  \|\triangle \overline{\bw}^{1} _i\|^2   + \dfrac{\lambda}{2}\|\bw^{1} \|^2 \right]  \overset{\eqref{L-lower-bd}}{\leq} \widetilde{\L}^{1} + 1 \overset{\eqref{decreasing-ell-0}}{\leq}  \widetilde{\L}^{0} + 1\leq  F(\bw^0)+ \frac{1 }{1-\gamma }.
\end{array}
\end{eqnarray}
This above condition implies that $(\bw^1,\W^1)\in\Omega$ by \eqref{compact-set},  namely, 
 \begin{eqnarray*} 
\begin{aligned} 
\bw^{1}\in\N(\delta ),~~\bw_i^{1}\in\N(\delta ), ~\forall~i\in[m],
\end{aligned}
\end{eqnarray*} 
\noindent \textbf{Part 2: $\ell=1$.} Using the above condition and the similar reasoning to show \eqref{decreasing-ell-0} enable us to derive \eqref{descent-L} for $\ell=1$. Then we can show $\bw^{1}\in\N(\delta )$ and $\bw_i^{1}\in\N(\delta )$ for any $i\in[m]$ by using the similar reasoning to prove \eqref{w0-bound-in-omega}. 

\noindent \textbf{Part 3: $\ell\geq2$.} Repeating the above process for all $\ell=2,3,\ldots$ allows us to prove the result.
\end{proof}

\subsection{Proof of Theorem \ref{main-convergence}}\label{app:proof-theorem31}
\begin{proof}  By \eqref{descent-L} and \eqref{L-lower-bd}, sequence $\{\widetilde{\L}^\ell\}$ is non-increasing and lower bounded, so it  converges. Taking the limit of both sides of \eqref{descent-L} yields  $\lim_{\ell\to\infty}\sigma_i^{\ell}\left\| \triangle  {\bw}^{\ell}\right\|^2=\lim_{\ell\to\infty}\sigma_i^{\ell}\left\| \triangle  {\bw}^{\ell}_i\right\|^2=0$, thereby $\lim_{\ell\to\infty} \left\| \triangle  {\bw}^{\ell}\right\|=\lim_{\ell\to\infty} \left\| \triangle  {\bw}^{\ell}_i\right\|=0$, and hence $\lim_{\ell\to\infty}(\widetilde{\L}^\ell-\L^\ell)=0$ by   \eqref{def-tilde-L}. 

2) Hereafter, we define
$$r=\max_{i\in[m]}r_i,~~\rho:=\max_{i\in[m]}\rho_i,~~\eta:= \max_{i\in[m]}\eta_i, ~~\varepsilon  := \max_{i\in[m]}\varepsilon_i(\delta).$$
It follows from Lemma \ref{descent-lemma-L} that $\bw^\ell,\bw_i^\ell\in\N(\delta)$ for any $i\in[m]$ and $\ell\geq0$, thereby
\begin{eqnarray} \label{bd-pi-ell}
\eqspace{1.5}
\begin{array}{lcl}
\| \bpi_{i}^{\ell}\|^2
  &\overset{\eqref{optimality-condition}}{=} &\|  \bg_{i}^\ell+\rho_i \Q_i^{\ell}  \triangle \overline{\bw}_{i}^{\ell}\|^2 \\
&\overset{\eqref{triangle-ineq}}{\leq} & 5\|\bg_i^{\ell}\|^2+ (5/4)\| \rho_i \Q_i^{\ell}   \triangle \overline{\bw}_{i}^{\ell}\|^2 \\ 
&\overset{\eqref{def-eps}}{\leq} &   5\varepsilon /64  + 5(\rho\eta\delta)^2.
\end{array}
\end{eqnarray}
Hence sequence $\{(\bw^\ell,\W^\ell,\P^\ell)\}$ is bounded.  We note from Lemma \ref{descent-lemma-L}  that for any $i$ and $\ell$, it follows ${\bw^\ell, \bw^\ell_i\in\N(\delta)}$, thereby 
\begin{eqnarray} \label{gap-sigma-wi-w-0}
\eqspace{2}
\begin{array}{lcl}
\left\| \sigma_i^{\ell+1}  \triangle \overline{\bw}_{i}^{\ell+1}  \right\|^2 &\leq & 
\left\|\left(\sigma_i^{\ell+1} \I+ \rho_i \Q_i^{\ell+1}\right) \triangle \overline{\bw}_{i}^{\ell+1}  \right\|^2\\
 &\overset{\eqref{optimality-condition}}{=}& \left\| \bpi_{i}^{\ell} +  \bg_{i}^{\ell+1} \right\|^2  \\ 
& \overset{\eqref{optimality-condition}}{\leq} & \dfrac{6}{5}\left\| \bpi_{i}^{\ell}   \right\|^2 + 6\left\| \bg_{i}^{\ell+1} \right\|^2  \\
& \overset{\eqref{bd-pi-ell}}{\leq}  & 6(\rho\eta\delta)^2  + \dfrac{\varepsilon}{4}=: \varpi^2.
\end{array}
\end{eqnarray}
Moreover, by letting $t_\ell:= \sum_{i=1}^m \alpha_i  \sigma_i^{\ell}$,   it follows
\begin{eqnarray*} 
\eqspace{2} 
\begin{array}{lcl} 
 t_\ell  \triangle\bw^{\ell+1}  &\overset{\eqref{sub-w-mini}}{=} &  \dsum \alpha_i   \left[\sigma_i^{\ell}  \triangle \overline{\bw}_{i}^{\ell} +  \bpi_i^{\ell}    -  \lambda\bw^{\ell+1}\right]\\
 &\overset{\eqref{optimality-condition}}{=} &  \dsum \alpha_i   \left[\left(\sigma_i^{\ell} -\rho_i \Q_i^{\ell}\right) \triangle \overline{\bw}_{i}^{\ell} -  \bg_i^{\ell}    -  \lambda\bw^{\ell+1}\right],
\end{array}
\end{eqnarray*}
which together with $\sum_{i=1}^m \alpha_i=1$   yields
\begin{eqnarray*} 
\eqspace{2}
\begin{array}{lcl}
 t_\ell ^2\left\| \triangle\bw^{\ell+1}\right\|^2  &\leq &  \dsum \alpha_i  \left[  3\left\|  \sigma_i^{\ell}\triangle \overline{\bw}_{i}^{\ell}\right\|^2 + 3  \|\bg_i^\ell\|^2 +  3\lambda^2 \|\bw^{\ell+1}\|^2\right]\\
 &\overset{\eqref{gap-sigma-wi-w-0}}{\leq} &   \dsum \alpha_i \Big[ \Big(18(\rho\eta)^2 +  3\lambda^2\Big)\delta^2  +   \varepsilon    \Big]=:c^2.
\end{array}
\end{eqnarray*}
Since $\gamma=\max_{i\in[m]}\gamma_i$ and  $\sigma_i^0>\sigma$ for any $i\in[m]$ from \eqref{choice-of-sigma}, we obtain
\begin{eqnarray}  \label{gap-w-gamma-ell}
\eqspace{2}
\begin{array}{lcl}
\left\|  \triangle {\bw}^{\ell+1}  \right\|  \leq \dfrac{c }{t_\ell }  =  c \left(\dsum  \dfrac{\alpha_i \sigma_i^0}{\gamma_i^\ell}  \right)^{-1}\leq  \dfrac{c\gamma^\ell}{\sigma}. 
\end{array}
\end{eqnarray}
 For every given $\epsilon>0$, by letting $K\geq \log_\gamma((1-\gamma)\sigma\epsilon/c)$, for any $s>k>K$, there is
\begin{eqnarray*}  
\eqspace{2}
\begin{array}{lcl}
 \left\| \bw^{s}-\bw^{k}\right\|  \leq  \displaystyle \sum_{\ell=k}^s \left\| \triangle\bw^{\ell+1}\right\| \leq   \sum_{\ell=k}^s  \dfrac{c\gamma^\ell}{\sigma}
 \leq  \dfrac{c  \gamma^{k} }{\sigma(1-\gamma)} \leq \epsilon,
\end{array}
\end{eqnarray*}
which means  $\{\bw^{\ell}\}$ is a Cauchy sequence. Therefore, sequence $\{\bw^{\ell}\}$ converges. By \eqref{gap-sigma-wi-w-0}, we can prove  ${\lim_{\ell\to\infty}  (\bw^{\ell}_i -\bw^{\ell}) =0}$ and thus sequence $\{\bw^{\ell}_i\}$ converges for each $i\in[m]$. Let $\bw^\infty$ be the limit of  $\{\bw^{\ell}\}$. 
We note that $\bw^{\ell}$ is independent to $\B_i^{\ell}$ for each $i\in[m]$ and thus  
\begin{eqnarray}\label{E-g-F-w-ell}
 \E    \bg_{i}^{\ell} = \E \nabla F_i(\bw^{\ell}; \B_i^{\ell} )= \nabla F_i(\bw^{\ell} ).
\end{eqnarray}
 Recalling \eqref{optimality-condition}, it follows
\begin{eqnarray*} 
0&=&  \lim_{\ell\to \infty}  \E \left[ \bpi_{i}^{\ell} +     \bg_{i}^{\ell}  + \rho_i \Q_i^{\ell} \triangle \overline{\bw}_{i}^{\ell}\right] \\
&= &\lim_{\ell\to \infty} \left[\E\bpi_{i}^{\ell}   +  \nabla F_i(\bw^\ell)\right]\\
&= &\lim_{\ell\to \infty} \left[\E \bpi_{i}^{\ell}   + \nabla F_i(\bw^\infty)\right].   
\end{eqnarray*} 
Therefore, sequence $\{\E \bpi_{i}^{\ell} \}$ converges. 
\end{proof}

\subsection{Proof of Theorem \ref{main-convergence-rate-eps}}\label{app:proof-theorem32}
\begin{proof}
 It follows from \eqref{gap-w-gamma-ell} that
\begin{eqnarray*}  
\left\|  {\bw}^{\ell} - \bw^\infty  \right\| \leq \sum_{k=\ell}^\infty \left\|  \triangle {\bw}^{k+1}  \right\|   \leq  \sum_{k=\ell}^\infty  \dfrac{c\gamma^k}{\sigma} = \dfrac{c\gamma^\ell}{\sigma(1-\gamma)}. 
\end{eqnarray*}
This condition and $\sigma_i^\ell=\sigma_i^0/\gamma_i^\ell$ further lead  to
\begin{eqnarray*}   
\eqspace{2}
\begin{array}{lcl}
\left\|  {\bw}^{\ell}_i - \bw^\infty  \right\| &\leq&  \left\| \triangle \overline{\bw}^{\ell}_i  \right\| + \left\|  {\bw}^{\ell} - \bw^\infty  \right\|  \\
&\overset{\eqref{gap-sigma-wi-w-0}}{\leq}&     \dfrac{\varpi\gamma_i^{\ell}}{\sigma_i^0} +\dfrac{c\gamma^\ell}{\sigma(1-\gamma)}\\
& \leq& \dfrac{\varpi\gamma^{\ell}}{\sigma} +\dfrac{c\gamma^\ell}{\sigma(1-\gamma)}.
\end{array} 
\end{eqnarray*}
In addition,   one can verify that
\begin{eqnarray*} 
\eqspace{2}
\begin{array}{lcl}
\left\|\E \bpi_{i}^{\ell} +    \nabla F_i(\bw^{\ell} )  \right\|&\overset{\eqref{E-g-F-w-ell}}{=}&   \left\|\E\left( \bpi_{i}^{\ell} +     \bg_{i}^{\ell} \right) \right\|  \overset{\eqref{optimality-condition}}{\leq}  \left\| \E  \left(\rho_i \Q_i^{\ell} \triangle \overline{\bw}_{i}^{\ell}  \right)\right\| \\
&\overset{\eqref{Q-upper-bd}}{\leq}& \rho_i \eta_i  \E \left\|  \triangle \overline{\bw}_{i}^{\ell}  \right\|\overset{\eqref{gap-sigma-wi-w-0}}{\leq} \dfrac{\varpi\rho_i \eta_i \gamma^{\ell}_i}{\sigma_i^0}\overset{\eqref{increase-sigma}}{\leq} \dfrac{\varpi \gamma^{\ell}}{8}.
\end{array} 
\end{eqnarray*}
Using this condition, it follows
\begin{eqnarray*} 
\eqspace{2}
\begin{array}{lcl}
\left\|\E  \bpi_{i}^{\ell} +    \nabla F_i(\bw^\infty)  \right\| &\leq& \left\|\E  \bpi_{i}^{\ell} +    \nabla F_i(\bw^{\ell} )  \right\|+
\left\| \nabla F_i(\bw^{\ell} )- \nabla F_i(\bw^\infty)       \right\|\\
&\overset{\eqref{lip-fi-ri}}{\leq}&  \dfrac{\varpi \gamma^{\ell}}{8} + r \left\| \bw^{\ell}  - \bw^\infty \right\|\leq \dfrac{\varpi \gamma^{\ell}}{8} + \dfrac{c r\gamma^\ell}{\sigma(1-\gamma)}.
\end{array}
\end{eqnarray*}
The above facts enable us to show result \eqref{rate-w} by letting
  $$C_1:=\max\left\{\dfrac{\varpi}{\sigma} +\dfrac{c}{\sigma(1-\gamma)},~\dfrac{\varpi}{8} + \dfrac{c r}{\sigma(1-\gamma)}  \right\}.$$
Now we prove \eqref{rate-w}. It follows from $\bw^\ell\in\N(\delta)$ that 
\begin{eqnarray} \label{gap-Fw-Fw*}
\eqspace{1.75}
\begin{array}{lcl}
&&F_{\lambda}(\bw^\ell) -F_{\lambda}(\bw^\infty) \\&=& 
\dsum \alpha_i \left[F_{i}(\bw^\ell)+\dfrac{\lambda}{2} \left\|  \bw^\ell\right\|^2  - F_{i}(\bw^\infty)-\dfrac{\lambda}{2} \left\|  \bw \right\|^2 \right]  \\
&\overset{\eqref{Lipschitz-continuity}}{\leq}&  \dsum \alpha_i \left[\left\langle \nabla F_{i}(\bw^\ell) + \lambda  \bw^\ell, \bw^\ell-\bw^\infty\right\rangle  +\dfrac{ r_i+\lambda}{2} \left\|  \bw^\ell-\bw^\infty\right\|^2 \right]  \\ 
 &\leq&   \dsum  \alpha_i \left[  \left(\left\|\nabla F_{i}(\bw^\ell)\right\| + \lambda \left\| \bw^\ell \right\|\right) \left\| \bw^\ell-\bw^\infty\right\| +\dfrac{ r+\lambda}{2} \left\|  \bw^\ell-\bw^\infty\right\|^2 \right] \\ 
&\overset{\eqref{def-eps}}{\leq}&  \dsum  \alpha_i \left[  \dfrac{\sqrt{\varepsilon(\delta)}+8\lambda\delta}{8} \left\| \bw^\ell-\bw^\infty\right\| +\dfrac{ r+\lambda}{2} \left\|  \bw^\ell-\bw^\infty\right\|^2\right]\\ 
  &\leq&  C_2 \gamma^\ell,  
\end{array}\end{eqnarray} 
for any $i\in[m]$ and $ \ell\geq 0$, where $C_2$ is a constant satisfying
$$C_2\geq \dfrac{ \sqrt{\varepsilon(\delta)}C_1+8\lambda\delta+4 (r +\lambda )C_1^2   }{8}.
$$
From  \eqref{gap-sigma-wi-w-0}, one can verify 
\begin{eqnarray*} 
\left\|  \triangle \overline{\bw}_{i}^{\ell+1}  \right\|^2  \leq  \dfrac{\varpi^2}{(\sigma_i^{\ell+1})^2} \leq \dfrac{\varpi^2\gamma^{\ell+1}}{\sigma^2}.
\end{eqnarray*}
Then using the above fact, the boundedness of $\{\P^\ell\}$, and \eqref{gap-Fw-Fw*} can also show that $\L^{\ell}  - F_\lambda( {\bw}^{\infty}) $ and $ \widetilde{\L}^{\ell}  - F_\lambda( {\bw}^{\infty})$ is bounded by $C_2\gamma^\ell$. The proof is finished.
\end{proof}

\subsection{Proof of Theorem \ref{main-convergence-rate-gradient}}\label{app:proof-theorem33}
	\begin{proof}
		It follows from \eqref{optimality-condition} that  
		\[
		\bg_i^{\ell+1}
		=
		-\bpi_i^\ell
		-
		\left(
		\sigma_i^{\ell+1}\I+\rho_i\Q_i^{\ell+1}
		\right)
		\Delta\overline{\bw}_i^{\ell+1},
		\]
and from \eqref{optimality-condition-w} that 
		\[
		0=
		\sum_{i=1}^m\alpha_i
		\left[
		-\bpi_i^\ell
		-
		\sigma_i^\ell(\bw_i^\ell-\bw^{\ell+1})
		\right]
		+
		\lambda\bw^{\ell+1}.
		\]
		Therefore,
		\[
		-\sum_{i=1}^m\alpha_i\bpi_i^\ell
		+
		\lambda\bw^{\ell+1}
		=
		\sum_{i=1}^m\alpha_i\sigma_i^\ell
		(\bw_i^\ell-\bw^{\ell+1}).
		\]
		Combining the above two relations, we obtain
		\[
		\begin{aligned}
			\sum_{i=1}^m\alpha_i \bg_i^{\ell+1}
			+
			\lambda\bw^{\ell+1}
			&=
			\sum_{i=1}^m\alpha_i\sigma_i^\ell
			(\bw_i^\ell-\bw^{\ell+1})
			-
			\sum_{i=1}^m\alpha_i
			\left(
			\sigma_i^{\ell+1}I+\rho_iQ_i^{\ell+1}
			\right)
			\Delta\overline{\bw}_i^{\ell+1}\\
			&=
			-t_\ell\Delta\bw^{\ell+1}
			+
			\sum_{i=1}^m\alpha_i\sigma_i^\ell
			\Delta\overline{\bw}_i^\ell
			-
			\sum_{i=1}^m\alpha_i
			\left(
			\sigma_i^{\ell+1}\I+\rho_i\Q_i^{\ell+1}
			\right)
			\Delta\overline{\bw}_i^{\ell+1},
		\end{aligned}
		\]
		where
		\begin{equation}\label{def-t-ell}
		t_\ell:=\sum_{i=1}^m\alpha_i\sigma_i^\ell=\sum_{i=1}^m \frac{\alpha_i\sigma_i^0}{\gamma_i^\ell} =\sum_{i=1}^m\frac{\alpha_i\sigma_i^0}{\gamma^\ell}.
	\end{equation}
		Taking conditional expectation with respect to the history before drawing
		\(\mathcal B_i^{\ell+1}\), and using the unbiasedness of the stochastic
		gradient, we have
		\[
		\nabla F_\lambda(\bw^{\ell+1})
		=
		\mathbb E\left[
		\sum_{i=1}^m\alpha_i \bg_i^{\ell+1}
		+
		\lambda\bw^{\ell+1}
		\,\middle|\,\mathcal F_\ell
		\right].
		\]
		Hence, by Jensen's inequality and the preceding decomposition,
		\[
		\begin{aligned}
			\frac1{t_\ell}
			\mathbb E\|\nabla F_\lambda(\bw^{\ell+1})\|^2
			& \le 
			3t_\ell
			\mathbb E\|\Delta\bw^{\ell+1}\|^2                                      +
			\frac{3}{t_\ell}
			\mathbb E
			\left\|
			\sum_{i=1}^m
			\alpha_i\sigma_i^\ell
			\Delta\overline{\bw}_i^\ell
			\right\|^2 \\
			&   +
			\frac{3}{t_\ell}
			\mathbb E
			\left\|
			\sum_{i=1}^m
			\alpha_i
			\left(
			\sigma_i^{\ell+1}\I+\rho_i\Q_i^{\ell+1}
			\right)
			\Delta\overline{\bw}_i^{\ell+1}
			\right\|^2 .
		\end{aligned}
		\]		
		We next estimate the three terms on the right-hand side separately. Again by \eqref{descent-L}, 
		\begin{equation}
			\label{eq:first-gradient-term-bound-with-3}
			3t_\ell
			\mathbb E\|\Delta\bw^{\ell+1}\|^2
			\le
			12\mathbb E
			\left(
			  \widetilde{\L}^{\ell}- \widetilde{\L}^{\ell+1}
			\right). 
		\end{equation}		
		For the second term, by the weighted Jensen inequality, we have
		\begin{align*} 
			\frac{1}{t_\ell} \left\|
			\sum_{i=1}^m
			\alpha_i\sigma_i^\ell
			\Delta\overline{\bw}_i^\ell
			\right\|^2
			&=
			t_\ell 
			\left\|
			\sum_{i=1}^m
			\frac{\alpha_i\sigma_i^\ell}{t_\ell}
			\Delta\overline{\bw}_i^\ell
			\right\|^2                                           \\
			&\le
			t_\ell 
			\sum_{i=1}^m
			\frac{\alpha_i\sigma_i^\ell}{t_\ell}
			\|\Delta\overline{\bw}_i^\ell\|^2                         \\
			&=
			\sum_{i=1}^m
			\alpha_i\sigma_i^\ell
			\|\Delta\overline{\bw}_i^\ell\|^2 .
		\end{align*}
		Moreover, since $\gamma_i\in[3/4,1),$  
		we have
		\[
		\sigma_i^\ell= \frac{\sigma_i^{\ell-1}}{\gamma_i}
		\le
		\frac{4\sigma_i^{\ell-1}}{3},
		\]
which by \eqref{descent-L}  implies	
		\begin{align}\label{bound-E-delta-wi}
			\sum_{i=1}^m
			\alpha_i\sigma_i^\ell
			\mathbb E\|\Delta\overline{\bw}_i^\ell\|^2
			 \le
			\frac{4}{3}
			\sum_{i=1}^m
			\alpha_i\sigma_i^{\ell-1}
			\mathbb E\|\Delta\overline{\bw}_i^\ell\|^2                  \le
			\frac{16}{3}
			\mathbb E
			\left(
			\widetilde \L^{\ell-1}-\widetilde \L^\ell
			\right). 
		\end{align}		
 Therefore,
		\begin{equation}
			\label{eq:second-gradient-term-bound}
			\frac{3}{t_\ell}
			\mathbb E
			\left\|
			\sum_{i=1}^m
			\alpha_i\sigma_i^\ell
			\Delta\overline{\bw}_i^\ell
			\right\|^2
			\le 16
			\mathbb E
			\left(
			\widetilde \L^{\ell-1}-\widetilde \L^\ell
			\right).
		\end{equation}		
		For the third term, using \(\|\Q_i^{\ell+1}\|\le \eta_i\) and
		\(\rho_i\eta_i\le \sigma_i^\ell/8\), we have
		\[
		\begin{aligned}
			\left\|
			\left(
			\sigma_i^{\ell+1}\I+\rho_i\Q_i^{\ell+1}
			\right)\bx
			\right\|
			&\le
			\left(
			\sigma_i^{\ell+1}+\rho_i\eta_i
			\right)\|\bx\|                                      \\
			&=
			\left(
			\frac{\sigma_i^\ell}{\gamma_i}+\rho_i\eta_i
			\right)\|\bx\|                                      \\
			&\le
			\left(
			\frac{4}{3}+\frac18
			\right)
			\sigma_i^\ell\|\bx\|                                \\
			&\le
			 \frac{3 \sigma_i^\ell}{2}\|x\|.
		\end{aligned}
		\]
		Therefore, again by the weighted Jensen inequality,
		\begin{align}
			\frac{3}{t_\ell}\left\|
			\sum_{i=1}^m
			\alpha_i
			\left(
			\sigma_i^{\ell+1}\I+\rho_i\Q_i^{\ell+1}
			\right)
			\Delta\overline{\bw}_i^{\ell+1}
			\right\|^2                                             & =
			3t_\ell
			\left\|
			\sum_{i=1}^m
			\frac{\alpha_i\sigma_i^\ell}{t_\ell}
			\frac{
				\left(
				\sigma_i^{\ell+1}\I+\rho_i\Q_i^{\ell+1}
				\right)
			}{
				\sigma_i^\ell
			}
			\Delta\overline{\bw}_i^{\ell+1}
			\right\|^2                                                     \nonumber\\
			&\le
			3t_\ell
			\sum_{i=1}^m
			\frac{\alpha_i\sigma_i^\ell}{t_\ell}
			\left\|
			\frac{
				\left(
				\sigma_i^{\ell+1}\I+\rho_i\Q_i^{\ell+1}
				\right)
			}{
				\sigma_i^\ell
			}
			\Delta\overline{\bw}_i^{\ell+1}
			\right\|^2                                                     \nonumber\\
			&\le
			  \frac{27}{4}
			\sum_{i=1}^m
			\alpha_i\sigma_i^\ell
			\|\Delta\overline{\bw}_i^{\ell+1}\|^2\nonumber\\			
			&\le 36 
			\mathbb E
			\left(
			\widetilde \L^{\ell}-\widetilde \L^{\ell+1}
			\right), \label{eq:third-gradient-term-bound}
		\end{align}
		where the last inequity is from \eqref{bound-E-delta-wi}.
		Combining \eqref{eq:first-gradient-term-bound-with-3},
		\eqref{eq:second-gradient-term-bound}, and
		\eqref{eq:third-gradient-term-bound}, we obtain
		\begin{equation}
			\label{eq:gradient-stability-one-step}
		\frac1{t_\ell}
			\mathbb E\|\nabla F_\lambda(\bw^{\ell+1})\|^2
			\le 16 \mathbb E
			\left(
			\widetilde \L^{\ell-1}-\widetilde \L^{\ell}
			\right) + 48 \mathbb E
			\left(
			\widetilde \L^{\ell}-\widetilde \L^{\ell+1}
			\right).
		\end{equation}	
		Summing \eqref{eq:gradient-stability-one-step} over
		\(\ell=1,\ldots,T\), we obtain
		\[
		\begin{aligned}
			\sum_{\ell=1}^{T}
			\frac1{t_\ell}
			\mathbb E\|\nabla F_\lambda(\bw^{\ell+1})\|^2
			&\le
			16
			\sum_{\ell=1}^{T}
			\mathbb E
			\left(
			\widetilde \L^{\ell-1}-\widetilde \L^\ell
			\right)
			+
			48
			\sum_{\ell=1}^{T}
			\mathbb E
			\left(
			\widetilde \L^\ell-\widetilde \L^{\ell+1}
			\right)\\
			& = 16 \mathbb E
			\left(
			\widetilde \L^{0}-\widetilde \L^T
			\right)
			+
			48 
			\mathbb E
			\left(
			\widetilde \L^1-\widetilde \L^{T+1}
			\right)\\[1ex]
			&\leq  64  
			\left(
			\widetilde \L^{0}- F^*+1
			\right) =: C_4, 
		\end{aligned}
		\]
		where the last inequality is from \eqref{L-lower-bd} and $\widetilde \L^1\leq \widetilde \L^0.$ Let $C:=C_4\sum_{i=1}^m\alpha_i\sigma_i^0 $ and define
		\[
		S_T:=\sum_{\ell=1}^{T}\frac{1}{t_\ell} = \sum_{\ell=1}^{T}\frac{\gamma^\ell}{\sum_{i=1}^m\alpha_i\sigma_i^0} = \frac{C_4\gamma(1-\gamma^T)}{C(1-\gamma)}.
		\]
		where the second equation is from \eqref{def-t-ell}. 
		Let \(R_T\) be sampled from \(\{2,\ldots,T+1\}\) according to
		\[
		\mathbb P(R_T=\ell+1)
		=
		\frac{1/t_\ell}{S_T},
		\qquad
		\ell=1,\ldots,T.
		\]
		Then it follows from  \cite{Ghadimi13} and  \eqref{condition-gamma-K} that
		\begin{equation}
		\lim_{T\to\infty}	\mathbb E\|\nabla F_\lambda(\bw^{R_T})\|^2
			=
			\lim_{T\to\infty}	 \frac{
				\sum_{\ell=1}^{T}
				\frac{1}{t_\ell}
				\mathbb E\|\nabla F_\lambda(\bw^{\ell+1})\|^2
			}{
				S_T
			}
			\le
			\lim_{T\to\infty}	 \frac{C(1-\gamma)}{\gamma(1-\gamma^T)}
			 = 0.
		\end{equation}
		Therefore, there exists a subsequence
		\(\{\bw^{\ell_j}\}\) with \(\ell_j\to\infty\) such that
		\begin{equation}
			\mathbb E\|\nabla F_\lambda(\bw^{\ell_j})\|^2\to0.
		\end{equation}
		
		By the convergence of PISA, \(\bw^\ell\to\bw^\infty\). Since
		\(\nabla F_\lambda\) is Lipschitz continuous on the bounded region containing
		the generated sequence, there exists \(L_\lambda>0\) such that
		\[
		\|\nabla F_\lambda(\bu)-\nabla F_\lambda(\bv)\|
		\le
		L_\lambda\|\bu-\bv\|.
		\]
		Thus,
		\[
		\begin{aligned}
			\mathbb E\|\nabla F_\lambda(\bw^\infty)\|^2
			&\le
			2\mathbb E\|\nabla F_\lambda(\bw^{\ell_j})\|^2
			+
			2\mathbb E
			\|\nabla F_\lambda(\bw^{\ell_j})
			-
			\nabla F_\lambda(\bw^\infty)\|^2     \\
			&\le
			2\mathbb E\|\nabla F_\lambda(\bw^{\ell_j})\|^2
			+
			2L_\lambda^2
			\mathbb E\|\bw^{\ell_j}-\bw^\infty\|^2 .
		\end{aligned}
		\]
		Letting \(j\to\infty\), we obtain
		\begin{equation}
			\mathbb E\|\nabla F_\lambda(\bw^\infty)\|^2=0.
		\end{equation}
		
		Finally, by Lipschitz continuity again and  \eqref{rate-w}, 
		\[
		\begin{aligned}
			\mathbb E\|\nabla F_\lambda(\bw^T)\|^2
			 =
			\mathbb E
			\|\nabla F_\lambda(\bw^T)-\nabla F_\lambda(\bw^\infty)\|^2  \le
			L_\lambda^2
			\mathbb E\|\bw^T-\bw^\infty\|^2 = O(\gamma^T).
		\end{aligned}
		\]
		This completes the proof.
	\end{proof}

\end{document}